\documentclass{article} 
\usepackage{iclr2027_conference,times}

\usepackage{amsmath,amsfonts,bm}

\def\eqref#1{equation~\ref{#1}}

\def\1{\bm{1}}

\DeclareMathAlphabet{\mathsfit}{\encodingdefault}{\sfdefault}{m}{sl}
\SetMathAlphabet{\mathsfit}{bold}{\encodingdefault}{\sfdefault}{bx}{n}

\newcommand{\E}{\mathbb{E}}

\newcommand{\KL}{D_{\mathrm{KL}}}

\DeclareMathOperator*{\argmax}{arg\,max}

\usepackage{hyperref}
\usepackage{url}            
\usepackage{booktabs}       
\usepackage{amsfonts}       
\usepackage{nicefrac}       
\usepackage{microtype}      
\usepackage{xcolor}         

\usepackage{mathrsfs}
\usepackage{amsmath}
\usepackage{amssymb}
\usepackage{mathtools}
\usepackage{amsthm}
\usepackage{bbm}
\usepackage{url}
\usepackage{hyperref}
\hypersetup{
    colorlinks,
    linkcolor={black!50!black},
    citecolor={blue!50!black},
    urlcolor={blue!80!black}
}

\usepackage{dblfloatfix}

\usepackage{algorithm}
\usepackage{algpseudocode}
\usepackage{etoc}

\newtheorem{theorem}{Theorem}
\newtheorem{definition}{Definition}
\newtheorem{lemma}{Lemma}

\newtheorem{remark}{Remark}

\def\S{\mathcal{S}}
\def\A{\mathcal{A}}

\def\M{\mathcal{M}}
\def\I{\mathcal I}

\def\N{\mathcal N}
\def\Q{\mathcal Q}
\def\T{\mathcal T}

\def\A{\mathcal A}
\def\E{\mathbb E}
\def\V{\mathbb V}

\definecolor{claudegreen}{rgb}{0.00,0.45,0.10}

\newenvironment{clblock}{\color{claudegreen}}{}

\title{Towards Optimal Policy Improvement}

\author{
  Yaniv Oren$^{1,5}$ 
  \And Viliam Vadocz$^2$ 
  \And Wiktor Zabka$^1$ 
  \And Thomas Evers$^1$ 
  \And Jan Robine$^1$ 
  \And Wendelin B{\"o}hmer$^1$ 
  \And Matthijs T. J. Spaan$^1$ 
  \And Martha White$^3$ 
  \And Hendrik Baier$^{4,5}$ 
  \And Fenghui Yu$^6$
  \And 
  \\
  $^1$Department of Intelligent Systems, TU Delft\\
  $^2$Department of Computer Science, ETH Zürich\\
  $^3$Department of Computing Science, University of Alberta\\
  $^4$Information Systems, TU Eindhoven\\
  $^5$Centrum Wiskunde \& Informatica, Amsterdam\\
  $^6$Delft Institute of Applied Mathematics, TU Delft\\
  \texttt{\{y.oren,j.w.bohmer,m.t.j.spaan,fenghui.yu\}@tudelft.nl}\\
  \texttt{vvadocz@student.ethz.ch}\\
  \texttt{whitem@ualberta.ca}\\
  \texttt{h.j.s.baier@tue.nl}\\
}

\iclrfinalcopy 
\begin{document}
\etocdepthtag.toc{main}

\maketitle
\lhead{Preprint}

\begin{abstract}
Practical Reinforcement Learning (RL) algorithms learn to solve Markov Decision Processes (MDPs) through iterative policy improvement in the presence of approximate evaluation. 
We study policy improvement from first principles, defining optimal policy improvement as producing the best policy attainable in a single update under specified constraints.
We show that optimal improvement restricted to a set of states is equivalent to solving an induced MDP, characterizing planning with an explicit or implicit model as a path towards optimal policy improvement.
Because practical methods commonly solve such induced problems through iterative improvement in the form of greedification, we take steps towards optimal greedification under the central practical constraint of approximate evaluation. 
We formulate greedification under this constraint as probabilistic decision-making under uncertainty and derive a novel operator that is optimal with respect to the resulting objective.
Empirically, the operator and its practical gradient-based approximations improve aggregate performance across GumbelAlphaZero, SAC, ReBRAC and Generalized Policy Iteration, 
in experiments spanning discrete and continuous actions, model-based and
model-free, online and offline RL.
\end{abstract}

\section{Introduction}
\label{sec:intro}
The objective of Reinforcement Learning (RL) agents is to find well-performing policies for sequential decision problems, formalized as \textit{Markov Decision Processes} \citep[MDPs,][]{bellman1957markovian}.
The performance of the policy is measured as its \textit{expected return}, also called the policy's \textit{value}.
To find such policies, practical algorithms rely on the (approximately) monotonic learning process of iterative \textit{Policy Improvement} \citep{sutton2018reinforcement}.
Examples of algorithms that can be abstracted to this process are 
(Deep) Q-learners \citep{q_learning,dqn}, Actor-Critics \citep{ac_algs}, learners-in-the-model \citep{dreamerv3}, planning-based learners \citep{AlphaZero,muzero} and many others.
To drive policy improvement, RL algorithms use \textit{Policy Improvement Operators} which take the agent's current policy and produce a better policy \citep{viac,greedification_kls}.
Prominent examples of policy improvement operators are Policy Gradient \citep{sutton2000policy}, Monte Carlo Tree Search \citep{UCT} and the greedy operator used by Q-Learning.

A central design question in RL is which policy improvement operator best uses the information available to the agent under practical constraints. 
Approximate evaluation is particularly important because of both its prevalence and its adverse effects on learning \citep{baird1995residual,tsitsiklis1997analysis,thrun2014issues,
vanhasselt2010double,double_dqn,van2018deep}.
Popular algorithms mitigate the effects of evaluation error on policy improvement by constraining policy updates through different forms of regularization \citep{ppo,sac1,mpo,gumbel_mz,dreamerv3}.
These methods implicitly encode decisions about how uncertain evaluation should affect improvement. Making those decisions explicit and deriving operators that optimize the resulting objective remains an important avenue for research.

In this work we take steps towards optimal policy improvement under practical constraints such as approximate evaluation, in two central parts.
First, we characterize optimal improvement under state-restricted policy updates. We show that constrained optimal improvement is equivalent to solving an induced MDP.
These results connect optimal policy improvement to planning with explicit models such as world models and implicit models such as replay buffers.

To solve the MDP in the model, practical RL algorithms rely on \textit{greedification} for policy improvement \citep{sutton2018reinforcement,viac}.
This motivates the second part of our analysis: optimal greedification under approximate evaluation.
We model this problem explicitly as a problem of optimization under uncertainty formulated as Maximum a Posteriori optimization, and derive $\I_{\text{opt}}$, a novel greedification operator which is optimal for the derived optimization objective.
$\I_{\mathrm{opt}}$ generalizes an operator arising from Regularized Policy Optimization \citep{neu2017unified}, Control as Inference \citep{levine2018cai}, Mirror Descent \citep{mirror_descent_paper}, and Trust Region Policy Optimization \citep{trpo}, and used by popular algorithms such as GumbelAlpha/MuZero \citep{gumbel_mz} and TDMPC variants \citep{td_mpc2}. 

Theoretically, we investigate the soundness of $\I_{\text{opt}}$ for driving the RL
process by establishing convergence in a Dynamic Programming setup underlying the general learning process of RL \citep{viac}.
Empirically, $\I_{\text{opt}}$ and its practical gradient-based approximations improve performance across experiments encompassing RL and Dynamic Programming, discrete and continuous action spaces, model based and model free, classical and modern, online and offline algorithms.
These include GumbelAlphaZero \citep{gumbel_mz} in \textit{9x9 Go} and \textit{Chess}, Soft Actor-Critic \citep{sac1} in 10 classical DeepMind Control \citep{dm_control} continuous-action environments, ReBRAC \citep{tarasov2023revisiting} in 4 classical offline RL domains of D4RL \citep{d4rl} and Policy Iteration algorithms \citep{sutton2018reinforcement} across infinite horizon grid environments.
These results support the design of improved policy improvement operators as a promising basis for better performing RL algorithms, and in addition, the construction of explicit, uncertainty-aware decision objectives as a promising basis for designing stronger policy improvement operators.

\section{Preliminaries}
\label{sec:background}
In RL, the environment is represented by an
MDP
$ M = \langle \mathcal{S}, \mathcal{A}, \rho, R, P,
\gamma \rangle \in \M $, where $ \mathcal{S} $ is a set of states, $ \mathcal{A}$ a set of actions, $ \rho $ an initial state distribution.
For simplicity of presentation, we will focus on discrete and finite state and action spaces.
$ R: \mathcal{S} \times \mathcal{A} \to \mathbb{R} $ is a bounded possibly stochastic reward function, and $ P $ is a transition distribution such that $P(s' | s, a)$ specifies the probability of transitioning from state $s$ to state~$s'$ with action $a$.
A policy $\pi \in \Pi$ is a stochastic mapping from states $s \in \mathcal{S}$ to probability distributions on the action space $\mathcal{A}$. 
We write $a \sim \pi(s)$ and denote the probability of $a$ at $s$ under $\pi$ with $\pi(a|s)$.
The objective of RL is to find a policy which maximizes \textit{the expected discounted return} (or \textit{value}) $ V^\pi $: 
\begin{align}
    \label{eq:rl_objective}
    \mathcal J^\pi
    :=
    \E_{s \sim \rho} [V^\pi(s)]
    := \E\left[
        {\textstyle \sum_{t=1}^{H}} \gamma^{t-1} R(s_t, a_t)
    \right],
    s_1 \sim \rho, a_t \sim \pi(\cdot | s_t), s_{t+1} \sim P(\cdot \mid s_t, a_t).
\end{align}
The discount factor $\gamma \in (0,1)$ is introduced in the infinite-horizon setting ($H \to \infty$) to ensure that the value function $V^\pi$ is well-defined and bounded. 
In the finite-horizon case, one typically sets $\gamma = 1$. The state-action value function (Q-function) associated with a policy $\pi$ is defined by
$
Q^\pi(s,a) 
:= \E\Big[
    R(s,a) + \gamma V^\pi(s')
\Big],
$
where $s' \sim P(\cdot \mid s,a)$. 
The optimal state-value function, and state-action value function are defined, respectively, by
$
V^*(s) := \sup_{\pi \in \Pi} V^\pi(s),
Q^*(s,a) := \sup_{\pi \in \Pi} Q^\pi(s,a),
$
and an optimal policy $\pi^*$ satisfies $ \forall s\in \S: V^{\pi^*}(s) = V^*(s). $
The advantage function 
is defined as:
\begin{align}
    A^\pi(s,a) 
    = Q^\pi(s,a) - V^\pi(s) 
    = Q^\pi(s,a) -{\textstyle \sum_{b \in \A}}\pi(b|s)Q^\pi(s,b).
\end{align}
With access to the dynamics of the environment $M$, MDPs can be solved using Dynamic Programming (DP) methods, such as Value Iteration or Policy Iteration \citep{neuro_dynamic_programming}.
These methods can be cast as instances of \textit{Generalized Policy Iteration} (GPI, Algorithm \ref{alg:gpi}),
which 
iterate \textit{policy improvement} and \textit{policy evaluation} using different \textit{policy improvement operators} and underpin the learning process of many RL algorithms \citep{sutton2018reinforcement,viac}.
The policy improvement operator is generally defined as follows \citep{sutton2018reinforcement}:
\begin{definition}[Policy Improvement Operator]
\label{def:pio}
We call an operator $\mathcal{I} : \Pi 
\to \Pi$ a \emph{policy improvement operator} 
if, $\forall \pi \in \Pi$ as long as the policy is not already optimal, it satisfies:
\begin{align}
        \forall s \in \S: \,\,
        V^{\I(\pi)}(s) \geq V^{\pi}(s) 
        \quad 
        \text{and}
        \quad 
        \exists s \in \S: \,\, V^{\I(\pi)}(s) > V^{\pi}(s).
\end{align}
\end{definition}
A popular class of policy improvement operators are \emph{greedification operators}:
\begin{definition}[Greedification Operator]
    We call $\I : \Pi \times \mathcal{Q} \to \Pi$ a greedification operator if $\forall \pi \in \Pi, \, \forall Q \in \Q, \, \forall s \in \S$ it satisfies:
    \begin{align}
        {\textstyle \sum_{a \in \A}} \I(\pi, Q)(a|s) Q(s,a) 
        \,\geq\,        
        {\textstyle \sum_{a \in \A}} \pi(a|s) Q(s,a), 
         \label{eq:nonstrict_greedification}
    \end{align}
    and $\exists s \in \S $ 
    such that the inequality is strict,
    unless $\pi$ is already greedy with respect to $Q$.
    \label{def:go}
\end{definition}
$\Q$ denotes the set of all bounded functions $Q: \S\times\A \to \mathbb{R}$, to indicate that greedification is well defined with respect to exact evaluation $Q = Q^\pi$ as well as approximate evaluation $Q = \hat Q^\pi \approx Q^\pi $.
When $Q = Q^\pi$, \textit{greedification} (Inequality \ref{eq:nonstrict_greedification}) guarantees policy improvement \citep{sutton2018reinforcement}.
A popular example of a greedification operator is the greedy operator $\I_{\argmax}(\pi,Q)(s) = \argmax_{a \in \A} Q(s,a)$ used by Q-learning and Value / Policy Iteration.
We define $\argmax_{a \in \A}$ generally as any distribution which has support only over maximizing actions.
Another popular example is $\I_{reg}(\pi,Q)(a|s) \propto \pi(a|s)\exp(Q(s,a)/\eta) $ \citep{neu2017unified,levine2018cai,mirror_descent_derivations} which satisfies Definition \ref{def:go} when $\pi$ has support over the entire domain $\A$.

\section{A Path to Optimal Policy Improvement}
\label{sec:towards_opt_pi}

Our aim in this work is to chart a path towards RL algorithms driven by policy improvement operators which are optimal under practical constraints. 
We first show that under a natural definition and a common constraint, optimal improvement is equivalent to solving an induced MDP in the model. 

\subsection{Planning as a Foundation for Optimal Policy Improvement}
\label{sec:optimal_improvement_requires_planning}
We begin with a definition of optimal improvement:
\begin{definition}[Optimal Policy Improvement]
\label{def:optimal_pi}
We call $\I^*$ an \emph{optimal policy improvement operator} if it satisfies the following:
\begin{align}
\forall \pi \in \Pi, \forall s \in \S: \quad
    \I^*
    \in
    {\textstyle \arg\inf_{\I}} \left [V^*(s) - V^{\I(\pi)}(s)\right ]
    =
    {\textstyle \arg\sup_{\I}} \left [V^{\I(\pi)}(s) - V^\pi(s)\right].
\end{align}
\end{definition}
In fact, the constants $ V^*$ and $V^\pi$ do not influence the optimization problem, and are included only to emphasize that \textit{improvement} is equivalent to \textit{minimizing the distance from the optimum} as well as \textit{maximizing the increase over any policy $\pi$}.
Without imposing constraints on $\I$, this natural definition immediately makes the optimal improvement equivalent to simply solving the MDP: $ \I^*(\pi)(s) = \pi^*(s) $.
Such operators are not artificial, and can be interpreted as finding the optimal policy using exhaustive planning \citep{psrl,track_and_stop} or training to global convergence with a replay buffer that contains every transition in the environment, which itself can be viewed as planning with a pseudo-model \citep{sutton2018reinforcement,benefits_of_mbrl_for_gen}.

However, in practice improving the policy at all states at the same time is often impractical due to the size of $|\S|$ or lack of coverage of $\S$ in the data.
In this case, we can think of operators as constrained to improving the policy at a batch of states $S \subseteq \S $.
We find that \textit{even when limiting the operator to improving only in} $ S \subset \S $, optimal improvement is equivalent to solving a (semi-)MDP $M^{semi}_{S,\pi}$:
\begin{theorem}
\label{thm:solving_skip_mdp_is_opt}(Informal)
Optimal policy improvement constrained by $S \subset \S$, such that
$\forall s \notin S : \I(\pi)(s)=\pi(s)$, is equivalent to solving the
induced semi-MDP $M^{\mathrm{semi}}_{S,\pi}$.
\end{theorem}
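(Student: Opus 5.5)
The plan is to first define the induced semi-MDP $M^{\mathrm{semi}}_{S,\pi}$ and then show that the value of any policy $\pi'$ agreeing with $\pi$ off $S$ coincides, on $S$, with the value of its restriction in $M^{\mathrm{semi}}_{S,\pi}$. \textbf{Construction.} The state space of $M^{\mathrm{semi}}_{S,\pi}$ is $S$, augmented with an absorbing terminal state for trajectories that never return to $S$. Its action space is $\A$. From $s \in S$ with action $a$, the agent receives $R(s,a)$ and moves to $s'\sim P(\cdot|s,a)$. If $s' \in S$, the next decision is made there with discount $\gamma$. If $s' \notin S$, the environment follows $\pi$ until the first time $\tau$ it re-enters $S$, landing at some $s''\in S$. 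The semi-MDP transition from $(s,a)$ therefore has a joint law over the pair (landing state in $S$, sojourn length), both induced by $\pi$ outside $S$. The accumulated discounted reward $\sum_{t<\tau}\gamma^{t}R$ collected under $\pi$ outside $S$ is folded into the semi-MDP reward, and the continuation is discounted by $\gamma^{\tau}$.

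\textbf{Value equivalence.} For a policy $\mu$ on $S$, define the spliced policy $\pi_\mu(s)=\mu(s)$ for $s\in S$ and $\pi_\mu(s)=\pi(s)$ otherwise. The first key step is to show $V^{\pi_\mu}(s)=V^{\mu}_{M^{\mathrm{semi}}_{S,\pi}}(s)$ for all $s\in S$. To do this, write the Bellman equation of $\pi_\mu$ in $M$. Then eliminate the states outside $S$: for $x\notin S$, $V^{\pi_\mu}(x)$ satisfies a linear system involving only $\pi$ and boundary values $V^{\pi_\mu}$ on $S$. Solving it by unrolling up to the first hitting time of $S$ (the strong Markov property, or equivalently a Schur-complement of $(I-\gamma P^{\pi_\mu})$, which is invertible since $\gamma<1$) expresses $V^{\pi_\mu}|_S$ as the fixed point of exactly the semi-MDP Bellman operator for $\mu$. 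That operator is a $\gamma$-contraction, since every sojourn has $\tau\ge1$, so the fixed point is unique and the two values agree.

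\textbf{Optimality equivalence.} The feasible set of constrained operators is exactly $\{\pi_\mu\}$, so maximizing $V^{\I(\pi)}(s)$ for every $s\in S$ is the same as maximizing $V^\mu_{M^{\mathrm{semi}}}(s)$ for every $s \in S$. Standard semi-MDP theory gives a stationary deterministic $\mu^*$ that attains the maximum simultaneously at all $s\in S$, so $\pi_{\mu^*}$ meets Definition~\ref{def:optimal_pi}. Conversely, any constrained optimal $\I^*(\pi)$ restricts to an optimal semi-MDP policy.

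The remaining issue is states $s\notin S$, where Definition~\ref{def:optimal_pi} also asks for optimality. For $x\notin S$, $V^{\pi_\mu}(x)=\E[\text{reward until }\tau]+\E[\gamma^\tau V^{\pi_\mu}(s_\tau)]$, where the first term and the law of $(\tau, s_\tau)$ do not depend on $\mu$. This value is monotone in the boundary values on $S$, so a $\mu^*$ that is uniformly optimal on $S$ is also optimal at every $x\notin S$. I expect the main obstacle to be stating this elimination step cleanly: handling trajectories that never return to $S$ (via the terminal state and the fact that the sojourn discount is $\le\gamma$), and making the argument that the simultaneous optimizer exists on all of $\S$ rigorous rather than only per state.
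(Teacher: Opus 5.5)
Your proposal is correct and follows the same route as the paper's proof. Both arguments use the restriction/extension correspondence between $\Pi_S(\pi)$ and semi-MDP policies, then value equivalence on $S$, and then the observation that from $x\notin S$ the segment up to the first hitting time of $S$ does not depend on the policy on $S$, so uniform optimality on $S$ carries over to every state. Your Schur-complement elimination, together with the $\gamma$-contraction uniqueness argument, makes rigorous the value-equivalence step that the paper only asserts by construction.
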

\textbf{Intuition:}
The semi-MDP $M^{\mathrm{semi}}_{S,\pi}$ can be understood as equal to
$M$ on the decision states in $S$. Whenever a transition leaves $S$, the
resulting excursion is collapsed into a temporally extended transition:
the agent follows $\pi$ until it next returns to $S$, while preserving the
discounted return accumulated along the excursion and the corresponding
discounted continuation value. 
We provide formal definitions and proof in Appendix~\ref{proof:solving_skip_mdp_is_opt}.
As any discounted SMDP admits an equivalent MDP representation
\citep{sutton1999semi_MDPs}, we refer to solving this induced SMDP interchangeably as solving an induced MDP.

Constructing $M_{S,\pi}^{semi}$ requires knowledge of transitions outside of $S$, however.
Thus, it does not directly represent the setting where the agent only has access to a dataset composed of transitions in the environment which were gathered during training and are memorized in a replay buffer or encoded in a learned model.
More closely related to this setting, we can define the sub-MDP $M^{sub}_{S,\pi}$ on $S$ as the MDP which maintains the original dynamics for transitions within $S$ but terminates whenever a transition leaves $S$ with returns bootstrapped $R(s,a)+\gamma V^\pi(s')$. 
Thus, $M^{sub}_{S,\pi}$ optimizes within $S$, but does not model the trajectory after leaving $S$.
In this case, we have a similar result:
\begin{theorem}(Informal)
    \label{thm:solving_sub_mdp_is_lower_bound_opt}
    Optimal policy improvement constrained by $S \subset \S$ is equivalent to solving the sub-MDP $M^{sub}_{S,\pi},$ if leaving $S$ precludes returning to $S$.
    More generally, solving the sub-MDP $M^{sub}_{S,\pi}$ lower bounds the optimum and upper bounds a single step greedification followed by the policy $\pi$, i.e. $\max_a Q^\pi(s,a)$.
\end{theorem}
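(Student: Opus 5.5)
The plan is to formalize both objects on the policy class $\Pi_{S,\pi}:=\{\pi'\in\Pi:\forall s\notin S,\ \pi'(s)=\pi(s)\}$ and compare their values.

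\textbf{Definitions.} For $\pi'\in\Pi_{S,\pi}$, let $\tau$ be the first time a trajectory started in $S$ lands in $\S\setminus S$. The sub-MDP $M^{sub}_{S,\pi}$ has return
\begin{align}
U^{\pi'}(s):=\E^{\pi'}\Big[{\textstyle\sum_{t=1}^{\tau-1}}\gamma^{t-1}R(s_t,a_t)+\gamma^{\tau-1}V^\pi(s_\tau)\Big],
\end{align}
where $s_1=s$. The true value $V^{\pi'}(s)$ has the same decomposition, except the tail is $\gamma^{\tau-1}V^{\pi'}(s_\tau)$. The tail differs because after leaving $S$ the agent follows $\pi$ but may later re-enter $S$, where it acts with $\pi'$ again. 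Optimal constrained improvement means maximizing $V^{\pi'}(s)$ over $\Pi_{S,\pi}$ for all $s$ simultaneously. Outside $S$ the action choice is fixed, and an optimal policy of the sub-MDP is uniformly optimal on $S$, so both problems admit simultaneous maximizers. I will write $U^*:=\sup_{\pi'\in\Pi_{S,\pi}}U^{\pi'}$.

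\textbf{Step 1 (equivalence when $S$ is not re-entered).} Suppose trajectories leaving $S$ never return to $S$, almost surely. Then after $\tau$ the agent follows $\pi$ forever, so $V^{\pi'}(s_\tau)=V^\pi(s_\tau)$ for every $\pi'\in\Pi_{S,\pi}$. Hence $V^{\pi'}=U^{\pi'}$ on $S$ for all $\pi'$ in the class, and the two maximization problems coincide. For $s\notin S$, $V^{\pi'}(s)=V^\pi(s)$ is unaffected by the choice of $\pi'$. Therefore solving $M^{sub}_{S,\pi}$ is exactly optimal constrained improvement.

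\textbf{Step 2 (lower bound in general).} Let $\pi^{sub}$ be optimal for the sub-MDP, extended by $\pi$ outside $S$, and set $\Delta:=V^{\pi^{sub}}-V^\pi$. I will show $\Delta\ge 0$ everywhere. On $S$, optimality of $\pi^{sub}$ gives $U^{\pi^{sub}}\ge U^{\pi}=V^\pi$. Also $V^{\pi^{sub}}-U^{\pi^{sub}}=\E[\gamma^{\tau-1}\Delta(s_\tau)]$. Outside $S$, $\Delta(x)=\E[\gamma^{\sigma}\Delta(s_\sigma)]$, where $\sigma$ is the first hitting time of $S$ under $\pi$, with $\Delta$ taken as $0$ if $S$ is never hit. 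Combining these gives a monotone contraction-type fixed-point relation for $\Delta$ of the form $\Delta\ge \gamma P\Delta$. Iterating it, or applying the standard policy improvement argument of \citet{sutton2018reinforcement} to the operator $\pi\mapsto\pi^{sub}$, yields $\Delta\ge0$. Consequently $V^{\pi^{sub}}\ge U^{\pi^{sub}}=U^*$ on $S$, while $V^{\pi^{sub}}\le\sup_{\pi'\in\Pi_{S,\pi}}V^{\pi'}$ trivially. So solving $M^{sub}_{S,\pi}$ produces a policy whose true value lies between the sub-MDP optimum $U^*$ and the constrained optimum. This is the sense in which it lower bounds the optimum.

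\textbf{Step 3 (upper bound on one-step greedification).} Consider the policy in $M^{sub}_{S,\pi}$ that takes $\argmax_a Q^\pi(s,a)$ at the first step and then follows $\pi$. Since $\pi$ itself is a valid sub-MDP policy with value $V^\pi$ on $S$ and bootstrap $V^\pi$ outside, the return of this policy is exactly $\max_aQ^\pi(s,a)$. Therefore $U^*(s)\ge\max_aQ^\pi(s,a)$.

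\textbf{Main obstacle.} The delicate part is Step 2: making the sign argument for $\Delta$ rigorous when trajectories re-enter $S$. My first approach is a Bellman-style argument on the semi-MDP $M^{semi}_{S,\pi}$ of Theorem~\ref{thm:solving_skip_mdp_is_opt}. I would show that $U^*$, viewed as a value estimate on $S$, satisfies $\mathcal T^{semi}_{\pi^{sub}}U^*\ge U^*$, where $\mathcal T^{semi}_{\pi^{sub}}$ is the semi-MDP Bellman operator for $\pi^{sub}$. Monotonicity and contraction of $\mathcal T^{semi}_{\pi^{sub}}$ then give $V^{\pi^{sub}}\ge U^*$ on $S$.
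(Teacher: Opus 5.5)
Your main line is correct, but you reach the key inequality $V^{\bar\pi}\ge V^{*,sub}_{S,\pi}$ on $S$ by a different route than the paper.

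\textbf{Step 2.} The paper uses no stopping times. It sets $W:=V^{*,sub}_{S,\pi}$ on $S$ and $W:=V^\pi$ off $S$. It notes that $\mathcal T^{\bar\pi}W=W$ on $S$, which is just the sub-MDP Bellman equation of $\pi^{*,sub}_S$ with boundary term $V^\pi$. Off $S$ it has $\mathcal T^{\bar\pi}W=\mathcal T^{\pi}W\ge\mathcal T^{\pi}V^\pi=W$. Iterating the monotone contraction $\mathcal T^{\bar\pi}$ then gives $V^{\bar\pi}\ge W$.

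You instead first show $\Delta=V^{\bar\pi}-V^\pi\ge0$ via first-exit and first-hitting decompositions. Made precise, these give $\Delta\ge\mathcal K\Delta$, not literally $\Delta\ge\gamma P\Delta$. Here $\mathcal K$ is the positive operator $(\mathcal K f)(s)=\E[\gamma^{\tau-1}f(s_\tau)]$ on $S$ and $(\mathcal K f)(x)=\E[\gamma^{\sigma}f(s_\sigma)]$ off $S$. Both stopping times take at least one step, so $\|\mathcal K\|_\infty\le\gamma$ and $\Delta\ge\mathcal K^n\Delta\to0$. You then transfer this to $U^*$ through $V^{\bar\pi}-U^{\pi^{sub}}=\E[\gamma^{\tau-1}\Delta(s_\tau)]\ge0$.

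This is rigorous given the strong Markov property. It also makes the mechanism explicit in the semi-MDP language of Theorem~\ref{thm:solving_skip_mdp_is_opt_formal}: re-entries into $S$ can only help, because $\bar\pi$ already dominates $\pi$. The paper's one-step argument is shorter and yields $V^{\bar\pi}\ge W\ge V^\pi$ everywhere in one shot.

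\textbf{Step 3.} The paper uses monotonicity: $V^{*,sub}_{S,\pi}=\mathcal T^{sub}_{S,\pi}V^{*,sub}_{S,\pi}\ge\mathcal T^{sub}_{S,\pi}V^\pi=\max_aQ^\pi(s,a)$. Your comparison policy (greedy at the first step, then $\pi$) is non-stationary whenever $s$ can be revisited, so it lies outside $\Pi_S(\pi)$. You therefore need the standard fact that the stationary optimum of a finite discounted MDP dominates all history-dependent policies.

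Two smaller points need repair.

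\textbf{The alternative in Step 2 fails as stated.} Applying the standard policy improvement theorem to $\pi\mapsto\pi^{sub}$ requires $\mathcal T^{\pi^{sub}}V^\pi\ge V^\pi$, which need not hold. Take $S=\{s_1,s_2\}$. At $s_1$, either terminate with reward $1$ (chosen by $\pi$) or move to $s_2$ with reward $0$. At $s_2$, terminate with reward $0$ (chosen by $\pi$) or with reward $10$. For $\gamma>0.1$ the sub-MDP optimum moves at $s_1$, although $Q^\pi(s_1,\text{move})=0<1=V^\pi(s_1)$. The improvement argument only works when applied to $W$ instead of $V^\pi$, or through your $\mathcal K$-iteration.

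\textbf{The claim in Step 1 about states outside $S$ is false in general.} You assert $V^{\pi'}(x)=V^\pi(x)$ for all $x\notin S$. But a state outside $S$ that is not reachable from $S$, for instance an initial state upstream of $S$, may still lead into $S$ without violating the no-return assumption. Its value then depends on $\pi'$. The paper only asserts the equivalence on $S$. To extend it off $S$, use your own first-hitting decomposition, as in the proof of Theorem~\ref{thm:solving_skip_mdp_is_opt_formal}. The rewards before $\sigma$ and the law of $(\sigma,s_\sigma)$ do not depend on $\pi'$, so a policy that maximizes value at every state of $S$ also maximizes it at $x$.
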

\textbf{Intuition:}
If leaving $S$ precludes returning to $S,$ the sub and semi MDPs coincide. 
If it is possible to leave and then return to $S,$ both planning as well as single step greedification can only account for the future outside of $S$ through the value function. 
We provide a formal description of $M^{sub}_{S,\pi}$ and a full proof in
Appendix~\ref{proof:solving_sub_mdp_is_lower_bound_opt}.

Such an operator can be interpreted as approximated by learning a model for transitions in $S$ and conducting search inside this model, such as MuZero \citep{muzero}, or training in this model, such as Dreamer \citep{dreamerv3}, or repeatedly training on transitions in $S\times\A$ while bootstrapping from $V^\pi(s')$ when the successor lies outside $S$. 

\textbf{We conclude:}
\emph{constrained optimal policy improvement is equivalent to solving an induced MDP}, directly connecting optimal improvement to planning with an explicit or implicit model.

\subsection{Greedification as a Driver of Planning-Based Policy Improvement}
\label{sec:optimal_improvement_to_optimal_greedification}
Theorems \ref{thm:solving_skip_mdp_is_opt} and \ref{thm:solving_sub_mdp_is_lower_bound_opt} show that optimal improvement is equivalent to, or bounded by, solving MDPs induced by the model, but do not show us \textit{how} to solve these MDPs.
In the RL literature, the popular choice for driving policy improvement within the model is greedification (Definition \ref{def:go}) \citep{sutton2018reinforcement,viac}.
Thus, a natural starting point for designing optimal policy improvement operators is the investigation of \textit{optimal greedification}.
We define \textit{optimal greedification} as satisfying the objective:
\begin{align}
    \argmax_{\pi'} \sum_{a \in \A} Q^\pi(s,a)\pi'(a|s) \,\, = \,\,
    \argmax_{\pi'} \sum_{a \in \A} A^\pi(s,a)\pi'(a|s),
    \label{eq:optimal_greedification}
\end{align}
i.e. maximizing with respect to the exact evaluation $A^\pi(s,a) = Q^\pi(s,a) - V^\pi(s) $ at a state $s$.
Under exact evaluation, Equation~\ref{eq:optimal_greedification} defines an unambiguous greedification objective, whose solutions are greedy policies, with $\I_{\argmax}$ naturally emerging as the optimal greedification operator (see Lemma \ref{lem:exact_optimal_greedification} in Appendix \ref{app:theory:greedy_is_opt}). 
When evaluation is approximate however, optimization should account for the evaluation uncertainty and optimal greedification must therefore be defined relative to specific decision formulations, which we address next.

\section{Optimal Greedification under Approximate Evaluation}
\label{sec:optimal_greedification}
In this section we formulate greedification under approximate evaluation as a probabilistic decision problem, make the required modeling choices and approximations to arrive at a tractable decision objective and derive a novel greedification operator which is optimal with respect to the derived decision problem.
We then proceed to establish that the operator is sound for driving the iterative learning process underlying practical RL algorithms by showing that it is sufficient for convergence to the optimal policy in a popular setup for such analysis \citep{tsitsiklis2002convergence,viac}.  

\subsection{Deriving the operator}
Approximate evaluation $\hat Q^\pi\approx Q^\pi$ induces uncertainty over which candidate policy $\pi'$ maximizes $\sum_{a\in\mathcal A}\pi'(a\mid s)A^\pi(s,a)$. Translating this uncertainty into a policy update requires specifying a decision problem. 
We develop an explicit formulation of such a decision problem based on \textit{generalized Bayesian inference} \citep{bissiri2016general}: the current policy supplies a prior over candidate policies, while an uncertainty-marginalized utility factor represents the evidence favoring each candidate. 
This gives the generalized posterior:
\begin{align}
    p_G\!\left(
        \pi'
        \mid
        \hat A^\pi(s,\cdot),\pi,s
    \right)
    \propto
    \underbrace{
        \mathcal L(
            \pi';
            \hat A^\pi(s,\cdot),\pi,s
        )
    }_{\text{Utility factor}}
    \underbrace{
        p(\pi'\mid\pi,s)
    }_{\text{Prior}}.
    \label{eq:generalized_posterior}
\end{align}
\textbf{Modeling the Prior.}
The current policy $\pi$ distills the behavior learned so far. Absent
additional information, we favor policies close to $\pi$, modeled using
a KL divergence with strength $\beta>0$:
\begin{align}
    p(\pi'\mid\pi,s)
    \propto
    \exp\left(
        -\beta_0
        \operatorname{KL}\!\left(
            \pi'(\cdot| s)
            \,\middle\|\,
            \pi(\cdot| s)
        \right)
    \right).
    \label{eq:policy_prior}
\end{align}
\textbf{Modeling the Utility Factor.}
We model the evaluation error as an unbiased heteroscedastic Gaussian:
\begin{align}
    \hat Q^\pi(s,a)
    =
    Q^\pi(s,a)+\epsilon(s,a),
    \qquad
    \epsilon(s,a)
    \sim
    \mathcal N\!\left(
        0,
        \sigma_{\hat Q^\pi}^2(s,a)
    \right),
    \label{eq:q_error_model}
\end{align}
independent across states and actions. 
The modeling choice of heteroscedastic unbiased Gaussian error is motivated by more than analytical tractability. 
\citet{hui2023doublegumbel} motivate Gaussian TD error
through the Central Limit Theorem and empirically find a moment-matched
heteroscedastic Gaussian to provide a suitable approximation. 
Classical MCTS \citep{UCT} was designed around unbiased sample means of
bounded Monte Carlo returns, which are asymptotically Gaussian. 
Despite the DNN-based evaluation used in modern MCTS \citep{AlphaZero,muzero}, MCTS remains highly successful. 
In Appendix~F.2, we similarly find that the residual errors in our MCTS experiments are reasonably modeled by a heteroscedastic Gaussian.
We discuss the modeling assumptions further in Section~4.2. 

The uncertain advantage
$
    \hat A^\pi(s,a)
    =
    \hat Q^\pi(s,a)
    -
    \sum_{b\in\mathcal A}
    \pi(b|s)\hat Q^\pi(s,b)
$
therefore satisfies the action-wise observation model:
\begin{align}
    \hat A^\pi(s,a)
    |
    A^\pi(s,a),\pi,s
    \sim
    \mathcal N \Big(
        A^\pi(s,a),
        \underbrace{
        (1-\pi(a|s))^2
        \sigma_{\hat Q^\pi}^2(s,a)
        +
        \sum_{b\neq a}
        \pi(b|s)^2
        \sigma_{\hat Q^\pi}^2(s,b)
        }_{
            =:\sigma_{\hat A^\pi}^2(s,a)
        }
    \Big).
    \label{eq:advantage_likelihood}
\end{align}
For analytical tractability, we model the action-wise advantage prior independently across actions as not favoring any action over another
$ A^\pi(s,a)\sim\mathcal N(0,\tau^2). $
Applying Bayes' rule to the advantage likelihood and prior using
Normal--Normal conjugacy gives the action-wise posterior:
\begin{align}
    p\!\left(
        A^\pi(s,a)
        \mid
        \hat A^\pi(s,a),\pi,s
    \right)
    =
    \mathcal N\bigg(
        \frac{\tau^2}{
            \tau^2+\sigma_{\hat A^\pi}^2(s,a)
        }
        \hat A^\pi(s,a),
        \frac{
            \tau^2\sigma_{\hat A^\pi}^2(s,a)
        }{
            \tau^2+\sigma_{\hat A^\pi}^2(s,a)
        }
    \bigg).
    \label{eq:actionwise_advantage_posterior}
\end{align}
Equations~\ref{eq:advantage_likelihood} and \ref{eq:actionwise_advantage_posterior} give action-wise posteriors conditioned on $\hat A^\pi(s,a)$. 
Although the $Q$-value errors are independent across actions, the advantage errors are correlated through their shared dependence on $\hat Q^\pi$. 
To combine the action-wise posteriors tractably, we adopt the approximate joint posterior across all actions:
\begin{align}
    \tilde p\!\left(
        A^\pi(s,\cdot)
        \mid
        \hat A^\pi(s,\cdot),\pi,s
    \right)
    &:=
    \prod_{a\in\mathcal A}
    p\!\left(
        A^\pi(s,a)
        \mid
        \hat A^\pi(s,a),\pi,s
    \right).
    \label{eq:factorized_advantage_posterior}
\end{align}
For a candidate policy $\pi'$, we define its utility relative to $\pi$ as
$    A^{\pi',\pi}(s):= \sum_{a\in\mathcal A}\pi'(a\mid s)A^\pi(s,a),
$
the maximization objective of greedification. Following the widely used exponential transformation of utility in decision models \citep{train2009discrete,bissiri2016general,levine2018cai}, we map this utility to an exponential score. 
Because the true advantage vector \(A^\pi(s,\cdot)\) is unobserved, the utility factor entering the generalized posterior in Equation~\ref{eq:generalized_posterior} is obtained by marginalizing this score over the approximate joint advantage posterior, where \(\lambda>0\) controls the scale of the exponentiated utility:
\begin{align}
    \mathcal L\!\left(
        \pi';
        \hat A^\pi(s,\cdot),\pi,s
    \right)
    &:=
    \mathbb E_{
        A^\pi(s,\cdot)
        \sim
        \tilde p(
            \cdot
            \mid
            \hat A^\pi(s,\cdot),\pi,s
        )
    }
    \left[
        \exp\left(
            \lambda A^{\pi',\pi}(s)
        \right)
    \right].
    \label{eq:marginalized_utility}
\end{align}
Evaluating this expectation under the factorized Gaussian posterior introduces quadratic terms $\lambda^2(\pi'(a\mid s))^2$ through the Gaussian moment-generating function, precluding a simple closed-form solution to the resulting MAP. 
To retain tractability, we adopt a certainty-equivalent approximation \citep{vandewater1981certainty}, replacing each uncertain advantage by its posterior mean.
We denote the certainty-equivalent approximation to the
uncertainty-marginalized utility factor by:
\begin{align}
    \mathcal L_{\mathrm{CE}}\!\left(
        \pi';
        \hat A^\pi(s,\cdot),\pi,s
    \right)
    &:=
    \exp\left(
        \lambda
        \sum_{a\in\mathcal A}
        \pi'(a|s)
        \frac{\tau^2}{
            \tau^2+\sigma_{\hat A^\pi}^2(s,a)
        }
        \hat A^\pi(s,a)
    \right)
    \approx
    \mathcal L\!\left(
        \pi';
        \hat A^\pi(s,\cdot),\pi,s
    \right).
    \label{eq:certainty_equivalent_utility}
\end{align}
The corresponding tractable generalized posterior is
\begin{align}
    p_{\mathrm{CE}}\!\left(
        \pi'
        \mid
        \hat A^\pi(s,\cdot),\pi,s
    \right)
    &\propto
    \mathcal L_{\mathrm{CE}}\!\left(
        \pi';
        \hat A^\pi(s,\cdot),\pi,s
    \right)
    p(\pi'\mid\pi,s).
    \label{eq:certainty_equivalent_posterior}
\end{align}
Only the ratio $\beta = \beta_0/\lambda$ affects the MAP solution, which we use in the rest of the derivation for simplicity of notation. 
The log-posterior then induces the following tractable decision objective:
\begin{align}
    \mathcal J_{\mathrm{CE}}\!\left(
        \pi';
        \hat A^\pi(s,\cdot),\pi,s
    \right)
    \!:=\!\!
    \sum_{a\in\mathcal A}
    \pi'(a\mid s)
    \frac{\tau^2}{
        \tau^2+\sigma_{\hat A^\pi}^2(s,a)
    }
    \hat A^\pi(s,a)
    -
    \beta
    \KL\!\left(
        \pi'(\cdot| s)
        \,\middle\|\,
        \pi(\cdot| s)
    \right).
    \label{eq:ce_decision_objective}
\end{align}
We define $\I_{\mathrm{opt}}$ as the operator which solves this decision problem:
\begin{align}
    \I_{\mathrm{opt}}(\pi,\hat Q^\pi)(\cdot|s)
    &:=
    \argmax_{\pi'\in\Delta(\mathcal A)}
    \mathcal J_{\mathrm{CE}}\!\left(
        \pi';
        \hat A^\pi(s,\cdot),\pi,s
    \right).
    \label{eq:iopt_definition}
\end{align}
Assuming that $\pi$ has full support over $\mathcal A$, the objective
in Equation~\ref{eq:ce_decision_objective} is strictly concave in
$\pi'(\cdot\mid s)$ and therefore has a unique global maximizer:
\begin{align}
    \I_{\mathrm{opt}}(\pi,\hat Q^\pi)(a|s)
    &\propto
    \pi(a|s)
    \exp\left(
        \frac{\tau^2}{
            \bigl(
                \tau^2+\sigma_{\hat A^\pi}^2(s,a)
            \bigr)\beta
        }
        \hat A^\pi(s,a)
    \right)
    \\
    &=
    \pi(a|s)
    \exp\left(
        \frac{\hat A^\pi(s,a)}{\eta(s,a)}
    \right),
    \,\,\ \text{where} \,\,\
    \eta(s,a)
    :=
    \frac{
        \bigl(
            \tau^2+\sigma_{\hat A^\pi}^2(s,a)
        \bigr)\beta
    }{\tau^2}.
    \label{eq:iopt}
\end{align}
\(\mathcal I_{\mathrm{opt}}\) is the unique MAP optimizer of the certainty-equivalent generalized posterior (Equation~\ref{eq:certainty_equivalent_posterior}), which itself approximates the original posterior formulated for the decision objective (Equation \ref{eq:generalized_posterior}).
Equivalently, \(\mathcal I_{\mathrm{opt}}\) is the optimal operator with respect to the derived objective (Equation~\ref{eq:ce_decision_objective}). 
In the no-uncertainty setting, where $ \forall a \in \A: \sigma_{\hat A^\pi}^2(s,a)=0$ and $\beta \to 0,$ $\I_{\mathrm{opt}}$ reduces to
$\I_{\arg\max}$.
For detailed derivations see Appendix \ref{app:derivations}.

\subsection{Theoretical Analysis}
We proceed to establish that $\I_{\text{opt}}$ is sound for driving the iterative learning process, in a setup used for analysis of the underlying learning behavior of RL algorithms \citep{tsitsiklis2002convergence,viac} (GPI, Algorithm~\ref{alg:gpi} in Appendix \ref{app:psudocode}).
We begin by showing that $\I_{\text{opt}}$ satisfies \textit{greedification}:
\begin{theorem}
    \label{thm:greedification}
    For any $\beta> 0, \tau^2 > 0, \sigma^2_{\hat Q^\pi}\geq 0 $ and policy $\pi$ with full support over $\A$, $\I_{\text{opt}}(\pi, \hat Q^\pi)$ is a greedification operator (Definition \ref{def:go}).
\end{theorem}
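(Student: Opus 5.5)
The plan is to reduce the greedification inequality to a sign statement about the advantage, and then use the closed form of $\I_{\mathrm{opt}}$ from Equation~\ref{eq:iopt} term by term. This works despite the action-dependent temperatures $\eta(s,a)$.

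\textbf{Step 1 (reduction to advantages).} Fix $s$ and take $Q=\hat Q^\pi$ in Definition~\ref{def:go}. Since $\hat A^\pi(s,a)=\hat Q^\pi(s,a)-c(s)$ with $c(s)=\sum_b\pi(b|s)\hat Q^\pi(s,b)$, and both $\I_{\mathrm{opt}}(\pi,\hat Q^\pi)(\cdot|s)$ and $\pi(\cdot|s)$ are probability distributions,
\begin{align}
\sum_a \I_{\mathrm{opt}}(\pi,\hat Q^\pi)(a|s)\hat Q^\pi(s,a)-\sum_a\pi(a|s)\hat Q^\pi(s,a)=\sum_a \I_{\mathrm{opt}}(\pi,\hat Q^\pi)(a|s)\hat A^\pi(s,a),
\end{align}
using $\sum_a\pi(a|s)\hat A^\pi(s,a)=0$. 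So it suffices to show that this last sum is nonnegative, and strictly positive unless $\pi$ is greedy.

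\textbf{Step 2 (sign of the numerator).} Write $x_a:=\hat A^\pi(s,a)$ and $w_a:=1/\eta(s,a)$. The hypotheses $\beta>0$, $\tau^2>0$ and $\sigma^2_{\hat Q^\pi}\ge 0$ give $\eta(s,a)>0$, hence $w_a>0$ and $w_a<\infty$. Let $Z:=\sum_b\pi(b|s)e^{w_b x_b}$, which satisfies $Z>0$. Then
\begin{align}
\sum_a \I_{\mathrm{opt}}(\pi,\hat Q^\pi)(a|s)\,x_a=\frac{1}{Z}\sum_a\pi(a|s)\,x_a e^{w_a x_a}.
\end{align}
The key elementary fact is that $x e^{wx}\ge x$ for every real $x$ and every $w>0$, with equality iff $x=0$. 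Indeed, for $x>0$ we have $e^{wx}>1$, and for $x<0$ we have $e^{wx}<1$, and multiplying by the negative number $x$ flips the inequality. Applying this fact termwise with weights $\pi(a|s)\ge 0$ gives
\begin{align}
\sum_a\pi(a|s)x_a e^{w_a x_a}\ge\sum_a\pi(a|s)x_a=0.
\end{align}
This proves Inequality~\ref{eq:nonstrict_greedification}.

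\textbf{Step 3 (strictness).} Because $\pi$ has full support, every weight $\pi(a|s)$ is strictly positive. Equality in Step 2 therefore forces $x_a=0$ for all $a$, i.e.\ $\hat Q^\pi(s,\cdot)$ is constant over $\A$. In that case every action maximizes $\hat Q^\pi(s,\cdot)$, so $\pi(\cdot|s)$ is greedy at $s$.

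Hence, if $\pi$ is not greedy with respect to $\hat Q^\pi$, there is some state $s$ at which $\hat Q^\pi(s,\cdot)$ is nonconstant. At that state the inequality is strict, which is exactly the requirement of Definition~\ref{def:go}.

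\textbf{Main obstacle.} The step needing care is Step 2. The tempting route is to argue from the MAP property, i.e.\ $\mathcal J_{\mathrm{CE}}(\I_{\mathrm{opt}})\ge\mathcal J_{\mathrm{CE}}(\pi)$. That route fails here, because the per-action shrinkage factors $\tau^2/(\tau^2+\sigma^2_{\hat A^\pi}(s,a))$ differ across actions, so this comparison controls a reweighted advantage rather than $\hat A^\pi$ itself. The termwise inequality $x e^{wx}\ge x$ avoids the problem, since it holds uniformly for any positive, action-dependent $w_a$.
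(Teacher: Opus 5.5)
Your proof is correct and follows the paper's own argument essentially step for step: you reduce to $\sum_a \I_{\mathrm{opt}}(\pi,\hat Q^\pi)(a|s)\,\hat A^\pi(s,a)\ge 0$ using $\sum_a\pi(a|s)\hat A^\pi(s,a)=0$, apply the termwise inequality $x e^{wx}\ge x$ with the positive, action-dependent weights, and divide by the positive normalizer. Your Step 3 uses full support to identify equality with $\hat Q^\pi(s,\cdot)$ being constant, and hence with $\pi$ being greedy at $s$; this makes explicit what the paper states as $\pi$ being ``indifferent among the actions it plays.''
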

\textbf{Intuition:} 
$\I_{\text{opt}}$ exclusively \textit{increases} the unnormalized log-probability of \textit{advantage-positive} and \textit{decreases} the unnormalized log-probability of \textit{advantage-negative} actions, which guarantees greedification \citep{gumbel_mz}.
See Appendix \ref{proof:greedification} for full proof.

Using the greedification property, we show that $\I_{\text{opt}}$ is sufficient for convergence:
\begin{theorem}[Convergence]
\label{thm:convergence}
Let $\beta>0$ and $\tau^2>0$, $\pi_0$ have full support, the evaluation errors be fresh and independent conditional on the history $\mathcal F_i$:
$
\epsilon_i(s,a)\mid\mathcal F_i
\sim\N(0,\sigma_{Q_i}^2(s,a)),
$
where the corresponding variances are supplied to $\I_{\mathrm{opt}}$
and satisfy $\sigma_{Q_i}^2(s,a)\to0$.
Then, in the finite state, action and horizon GPI process (Algorithm~\ref{alg:gpi} with its $\delta \to 0$):
\begin{align}
    V^{\pi_i}\xrightarrow[i\to\infty]{\mathrm{a.s.}} V^*, \qquad
    \sum_{a\notin\argmax_b Q^*(s,b)} \pi_i(a|s)\xrightarrow[i\to\infty]{\mathrm{a.s.}}0,
    \quad \forall s.
\end{align}
\end{theorem}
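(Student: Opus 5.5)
The plan is to argue by backward induction over the (finite) horizon. At each stage I compare the log-probabilities of suboptimal and optimal actions. They evolve additively under $\I_{\mathrm{opt}}$, so the multiplicative update turns into a sum of drift terms plus noise. A strong law of large numbers for martingale differences then shows that the noise cannot overcome a drift that is eventually strictly negative.

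\textbf{Step 1 (log-ratio recursion).} Since $\pi_0$ has full support and $\I_{\mathrm{opt}}$ multiplies by a strictly positive factor, every $\pi_i$ has full support (by induction). Theorem~\ref{thm:greedification} therefore applies at every iteration. Taking $\delta\to 0$ in Algorithm~\ref{alg:gpi}, I read the update as exactly the formula in Equation~\ref{eq:iopt}. Fix a state $s$ at stage $h$, a suboptimal action $a$, and some $a^*\in\argmax_b Q^*(s,b)$. The normalizer cancels, giving
\begin{align}
\log\frac{\pi_{i+1}(a|s)}{\pi_{i+1}(a^*|s)}
=\log\frac{\pi_{i}(a|s)}{\pi_{i}(a^*|s)}
+\frac{\hat A_i(s,a)}{\eta_i(s,a)}-\frac{\hat A_i(s,a^*)}{\eta_i(s,a^*)}.
\end{align}
Here $\eta_i(s,a)=\beta(\tau^2+\sigma^2_{\hat A_i}(s,a))/\tau^2\ge\beta$. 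Because $\sigma^2_{Q_i}\to0$, also $\sigma^2_{\hat A_i}\to 0$ and hence $\eta_i\to\beta$. I write $\hat A_i = A^{\pi_i}+\xi_i$, where $\xi_i$ is a fixed linear combination of the $\epsilon_i(s,\cdot)$ with weights bounded by $1$. The increment then splits into two parts:
\begin{itemize}
\item a drift $d_i=A^{\pi_i}(s,a)/\eta_i(s,a)-A^{\pi_i}(s,a^*)/\eta_i(s,a^*)$;
\item a noise term $\zeta_i=\xi_i(s,a)/\eta_i(s,a)-\xi_i(s,a^*)/\eta_i(s,a^*)$, which conditionally on $\mathcal F_i$ is zero-mean Gaussian with variance uniformly bounded by $C\sup\sigma^2_{Q_i}/\beta^2$.
\end{itemize}

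\textbf{Step 2 (noise is negligible a.s.).} The variances $\sigma^2_{Q_i}$ converge, so they are bounded, and hence $\sum_i \mathrm{Var}(\zeta_i\mid\mathcal F_i)/i^2<\infty$. By the martingale strong law (Kolmogorov / Chow), $\frac1n\sum_{i<n}\zeta_i\to0$ almost surely. It therefore suffices to show $\limsup_n \frac1n\sum_{i<n} d_i<0$.

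\textbf{Step 3 (drift, by backward induction).} The induction hypothesis is that at stage $h+1$, $V^{\pi_i}\to V^*$ almost surely. At the last stage this holds trivially because $Q^{\pi_i}(s,\cdot)=R(s,\cdot)$ does not depend on $\pi_i$. Given the hypothesis, $Q^{\pi_i}(s,\cdot)\to Q^*(s,\cdot)$ almost surely at stage $h$. Let $g=Q^*(s,a^*)-Q^*(s,a)>0$. Then $A^{\pi_i}(s,a)-A^{\pi_i}(s,a^*)\to -g$. The advantages are bounded and $\eta_i\to\beta$, so the mismatch from the $V^{\pi_i}$ term being divided by two different $\eta$'s vanishes. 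Consequently $d_i\to -g/\beta$, and by Cesàro $\frac1n\sum d_i\to -g/\beta<0$. Combined with Step 2, the log-ratio tends to $-\infty$ almost surely, so $\pi_i(a|s)\to0$ for every suboptimal $a$. This gives the second claim at stage $h$. The first claim follows because $V^{\pi_i}(s)=\sum_a\pi_i(a|s)Q^{\pi_i}(s,a)$, with $Q^{\pi_i}\to Q^*$ and mass concentrating on $\argmax Q^*$, so $V^{\pi_i}(s)\to V^*(s)$ almost surely, which closes the induction. Finitely many states, actions and stages allow a union over null sets.

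\textbf{Main obstacle.} The delicate point is Step 2 together with the non-stationary drift. We only know $\sigma^2_{Q_i}\to 0$, not that it is summable, so the noise sum need not converge and a Borel--Cantelli argument on the $\epsilon_i$ alone fails. This is why I plan to use the linear-in-$i$ drift against an $o(i)$ martingale fluctuation. I also need to check that the Gaussian tails cause no issue for the martingale SLLN; conditional variances are bounded, so Kolmogorov's criterion applies. Finally, the conclusion must not depend on which optimal $a^*$ is chosen: I handle this by showing each suboptimal action's ratio to a fixed optimal action diverges, which bounds its probability without any control over how mass splits among optimal actions.
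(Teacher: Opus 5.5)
Your proposal follows the paper's proof. It uses the same normalizer-free log-ratio recursion against a fixed optimal action, and splits the increment the same way: a drift tending to $-g/\beta$, where the $V$-term mismatch vanishes because $\eta_i\to\beta$ uniformly, plus a conditionally zero-mean term with bounded conditional variance that the martingale strong law removes. It then closes with Ces\`aro and backward induction over the horizon. The paper only differs in presentation: it isolates the concentration step as a lemma for an arbitrary a.s.-convergent sequence $Q_i\to Q$ and then appeals to the induction of \citet{viac}, whereas you interleave the two.

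The one thing to fix is that you take the evaluation fed to $\I_{\mathrm{opt}}$ to be $Q^{\pi_i}$ plus noise. In Algorithm~\ref{alg:gpi} it is $q_i=(\T^{\pi_i})^k q_{i-1}$ with arbitrary $k\ge1$. This equals $Q^{\pi_i}$ only when $k$ is at least the horizon, and the theorem is stated for the general process. For smaller $k$:
\begin{itemize}
\item Write the drift with $A_i(s,a)=q_i(s,a)-\sum_b\pi_i(b|s)q_i(s,b)$.
\item Take as induction hypothesis at stage $h+1$ that $q_i$, including the intermediate backups within an iteration, converges to $Q^*$, and that $\pi_i$ concentrates on $\argmax_b Q^*(\cdot,b)$.
\item This gives $q_i(s,\cdot)\to Q^*(s,\cdot)$ at stage $h$, after which your Steps 1--3 go through unchanged.
\end{itemize}
The conclusion $V^{\pi_i}\to V^*$ should then come from the concentration of $\pi_i$ at every state, via a separate backward pass on the true values. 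It should not come from $V^{\pi_i}(s)=\sum_a\pi_i(a|s)Q^{\pi_i}(s,a)$ with $Q^{\pi_i}$ treated as the evaluated quantity.
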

\textbf{Intuition:}
$\I_{opt}$ satisfies a \textit{Limit-Sufficient Greedification} condition, established by \citet{viac} to induce convergence, almost surely under vanishing evaluation noise.
See Appendix~\ref{proof:convergence} for proof.

\textbf{Remark.}
Optimality under approximate evaluation is necessarily relative to a decision formulation: different objectives, likelihoods, priors, utilities, or approximations can yield different operators. 
Within this family of possible formulations, we choose modeling assumptions and approximations that yield a practical operator which accounts for evaluation uncertainty, is inexpensive to compute, and admits theoretical analysis (Theorems~\ref{thm:greedification} and~\ref{thm:convergence}). 
Our contribution is to instantiate an explicit, informed approach to operator design and solve the resulting tractable objective exactly. 
Possible interactions between a policy improvement operator which accounts for evaluation uncertainty and exploration are discussed in Appendix \ref{app:I_opt_and_exploration}.

\section{Connections to Prior Work}
\label{sec:related_work}
$\I_{\text{opt}}$ generalizes the widely used shift-invariant $\I_{reg}$ operator parameterized by $\eta$:
\begin{align*}
     \I_{reg}(\pi,Q)(a|s) 
     \propto 
     \pi(a|s) \exp \left(Q(s,a)/\eta \right) 
     \propto
     \pi(a|s) \exp\left(A(s,a)/\eta\right).
\end{align*}
$\I_{reg}$ itself emerges independently from several seminal directions: \emph{Regularized Policy Optimization} \citep{neu2017unified}; \emph{Control as Inference} and \emph{``Optimal Control"} \citep{levine2018cai,optimal_control}, which interpret $\pi'(s) = \I_{reg}(\pi,Q^{\pi'}_{soft})(s)$ as a ``soft-optimal" policy and motivate
SAC \citep{sac1},
MPO \citep{mpo} and TDMPC \citep{td_mpc,td_mpc2}; The \emph{Natural Policy Gradient} \citep{npg}, \emph{Trust Region Policy Optimization} \citep{trpo} and \emph{Mirror Descent} \citep{mirror_descent_paper} whose improvement steps have all been shown to connect to $\I_{reg}$ \citep{neu2017unified,mirror_descent_derivations}. 
For additional detail see Appendix \ref{sec:more_related_work}.

In view of $\I_{\mathrm{opt}}$, the regularization term used by Regularized Policy Optimization induces a prior over candidate policies. 
From the perspective of Control as Inference, $\I_{\mathrm{reg}}$ is most directly interpreted as a form of optimal \textit{improvement}, rather than as approximating a globally optimal \textit{policy}. 
Moreover, \(1/\eta(s,a)\) can be interpreted as an action-dependent step size which accounts for the uncertainty in the improvement signal.
Conversely, when we do not have any information about the evaluation error we can model the uncertainty as uniformly bounded by a constant, use $\sigma_{\hat A}^2(s,a) = \sigma^2$ conservatively, reducing $\eta(s,a) $ to $\eta$ and $\I_{\text{opt}}$ to $\I_{reg},$ which thus can be viewed as optimal with respect to the same decision objective as $\I_{\text{opt}},$ when we are uninformed about error.

Gumbel-AlphaZero/MuZero (GA/MZ) \citep{gumbel_mz} use $\I_{reg}$ within MCTS, where the coefficient $1/\eta(s)$ increases with $N(s,a),$ the number of evaluations at each node in the search tree: $1/\eta(s) = (c_{\text{visit}} + \max_a N(s,a))c_{\text{scale}}$.
This heuristic mechanism is inspired by the original MCTS assumptions which induce $N(s,a) \propto 1/\sigma^2_{\hat Q^\pi}(s,a)$ and increases as evaluation error reduces \citep{UCT}. 
Rather than using $N(s,a)$ directly which does not guarantee policy improvement, GA/MZ use $ \max_a N(s,a)\propto \max_a 1/\sigma^2_{\hat Q^\pi}(s,a) $: a per-\emph{node} surrogate for a quantity that is inherently per-\emph{action}.
$\I_{\text{opt}}$ can circumvent this heuristic and use $ N(s,a) $ per action directly.

Several previous works incorporate uncertainty into policy optimization for
specific algorithms or sources of uncertainty.
POMBU \citep{zhou2020deep} uses $Q$-value uncertainty to make PPO updates
more conservative.
\citet{queeney2021uncertainty} (UA-TRPO) make TRPO robust in the estimated policy gradient to uncertainty sourced from finite sampling.
UWAC \citep{wu2021uncertainty}, specifically addressing offline RL, uses
epistemic uncertainty to down-weigh the contribution of out-of-distribution predictions in actor-critic training.
\citet{hui2023doublegumbel} model $Q$-function evaluation error probabilistically
and derive a modified Q-learning objective.
In contrast, we study policy improvement at the operator level rather
than within a particular algorithmic framework. 
We formulate greedification under uncertain evaluation as an explicit optimization problem and derive an operator that is optimal for the resulting uncertainty-aware objective under the stated probabilistic modeling and approximation choices. 
The operator admits convergence guarantees in the analyzed GPI setting and can be incorporated modularly into algorithms with different policy-improvement mechanisms (see Section~\ref{sec:experiments}).

\section{Experiments}
\label{sec:experiments}
$\I_{\mathrm{opt}}$ exactly solves the decision objective derived in Equation~\ref{eq:ce_decision_objective}. 
Our experiments investigate whether optimizing this local decision objective indeed facilitates improvement in the broader iterative learning process. 
We ask two questions. 
First, \textit{does $\I_{\mathrm{opt}}$ improve sample efficiency in a controlled Policy Iteration setting that matches its evaluation assumptions}? Second, \textit{do $\I_{\mathrm{opt}}$ and its gradient-based practical approximations improve established RL algorithms when uncertainty must be estimated during training}? We study these questions across Dynamic Programming, model-based and model-free RL, discrete and continuous actions, and online and offline learning.

$\I_{\mathrm{opt}}$'s hyperparameters were tuned per baseline algorithm it was incorporated into and in a similar manner in which each baseline algorithm was itself tuned. 
For additional details see Appendix \ref{app:hp_tuning}.

\begin{figure*}[ht]
    \centering
    \includegraphics[width=1\linewidth]{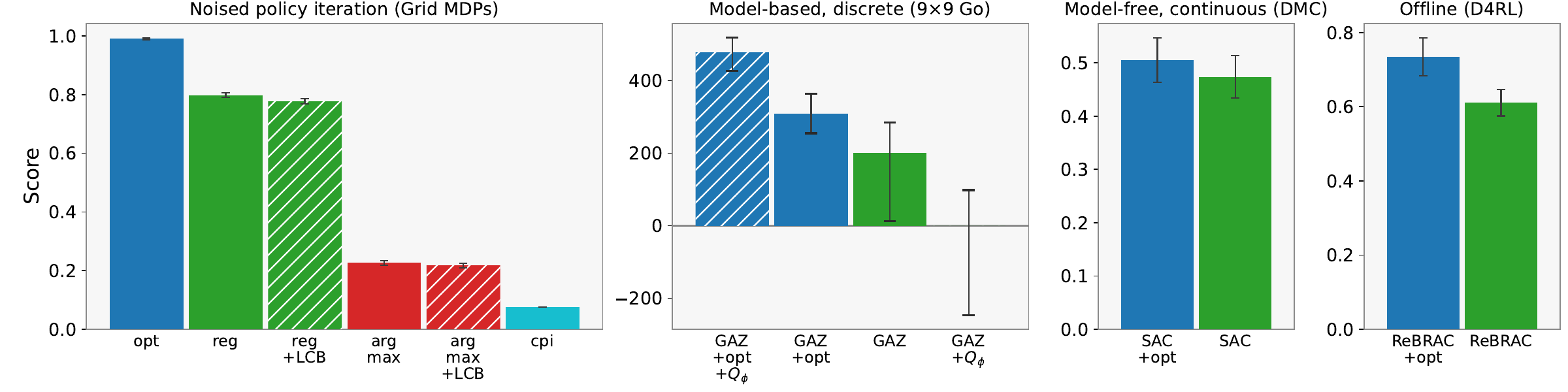}
    \vspace{-0.7cm}
    \caption{
    \textbf{Noised policy iteration:} normalised $V^{\pi}(s_0)$ at iteration $40$ over $20$ grid environments, 95\% Gaussian CI. 
    \textbf{Model-based, discrete:} Bayes Elo at $34$M training frames, mean, min. and max. over $5$ training seeds in 9x9 Go.
    \textbf{Model-free, continuous:} 
    mean normalised returns at $1$M steps on $10$ DeepMind Control Suite tasks, mean and 95\% Gaussian CI. 
    \textbf{Offline:} normalised return at $1$M steps on $4$ D4RL medium-replay tasks, mean and 95\% Gaussian CI.
    }
    \label{fig:flagship}
\end{figure*}
\textbf{Policy Iteration (Grid MDPs).}
We compare Policy Iteration algorithms with different policy improvement operators $\I$ (Algorithm \ref{alg:gpi} with $k \to \infty$) with evaluation error which coincides with the assumptions under which $\I_{\text{opt}}$ is designed.
At each iteration $i$ the exact evaluation $Q^{\pi_i}$ is
computed analytically and corrupted by heteroscedastic noise $ \hat Q^{\pi_i}(s,a) \sim \N(Q^{\pi_i}(s,a), \sigma^2_{Q_i}(s,a)). $
The variances $\sigma^2_{Q_i}(s,a)$ are sampled each iteration from a decaying schedule.
We compare $\I_{\text{opt}}$ to the following standard operators: $\I_{\arg\max}$ (standard policy iteration), $\I_{\text{reg}}$ and $\I_{\text{cpi}}$ \citep{safe_pi}, a variant of Conservative Policy Iteration \citep{approx_optimal_approx_rl}, a method designed for Policy Iteration in the presence of approximate evaluation.
We include pessimistic variants of $\I_{\arg\max}$ and $\I_{\text{reg}}$ that use the lower-confidence bound (LCB) $\hat Q^{\pi_i}(s,a) - \scriptstyle\lambda_{\text{LCB}}\sqrt{\sigma^2_{Q_i}(s,a)}$ to test whether $\I_{\text{opt}}$'s uncertainty-awareness merely reproduces standard LCB-based pessimism. 

$\I_{\text{opt}}$ outperforms every operator tested (Figure \ref{fig:flagship},
left) in the presence of the evaluation error of the type for which it was designed to account.
The effect of $\I_{\text{opt}}$ is clearly different from that of LCB-based pessimism, which does not contribute to performance in this setting.
See Appendix \ref{app:details:dp} for implementation and experimental details and Appendix \ref{app:additional_results_dp} for additional results including the learning curves, an investigation of the effect of the magnitude of the noise, sensitivity to the hyperparameters $\tau, \beta$ and an experiment with biased evaluation rather than unbiased.

\textbf{Model-based discrete action (GAZ on board games).}
In Section \ref{sec:towards_opt_pi}, we showed how greedification connects to search for policy improvement.
With this experiment, we circle back to this connection and evaluate $\I_{\text{opt}}$'s contribution to search-based policy improvement with the search operator GumbelMCTS used by the modern AlphaZero variant GumbelAlphaZero (GAZ), which remains a popular and well-performing RL algorithm \citep{gumbel_mz,EffZeroV2,pmcts}. 
As discussed in Section \ref{sec:related_work}, it is natural to replace the heuristic $\eta(s)$ GAZ uses in $\I_{\text{reg}}$ with the theoretically motivated $\I_{\text{opt}}$.
This enables the algorithm to account for the variance using $\sigma^2_{Q_i}(s_t,a) \propto 1/N(s,a)$ at each node \textit{and action} $(s_t,a)$ independently, as was enabled by the PUCT search policy \citep{puct,AlphaGo} which originally inspired using $\I_{\text{reg}}$ in MCTS \citep{grill2020monte}.
We include a comparison to VanillaAZ with PUCT for reference in Figure \ref{fig:mcts_M_grid} in Appendix \ref{app:additional_mcts_results}.

$\I_{\text{reg}}$ requires an estimate of $Q^\pi(s,a)$ for each state and action, which traditional MCTS architectures do not provide.
GumbelMCTS instead uses the \textit{completed Q-values} mechanism, which substitutes a prediction of $V^\pi(s)$ for $Q^\pi(s,a)$ for unsearched actions, composed of the predictions of all searched actions weighted under $\pi$.
This substitution removes action-specific information, since every unsearched action receives the same value prediction.
When action-wise prediction errors contain independent or weakly correlated components, this averaging reduces their variance.
Moreover, actions with greater policy support are generally searched or played, and consequently trained, more frequently; weighting by $\pi$ therefore tends to emphasize better-supported estimates while assigning less weight to weakly supported, higher-error actions.
Consistent with this reasoning, using the empirical proxy described in Appendix~\ref{app:additional_results_qv}, we observe lower estimated error variance for the $V^\pi$-based predictions in our experiments.
We therefore hypothesize that $\I_{\text{opt}}$, which is designed to exploit action-specific $Q_\phi$ predictions while accounting for their estimated uncertainty, will benefit more than baseline GAZ from incorporating a $Q_\phi$ critic.
Such a critic is inexpensive to incorporate as a small additional head on the shared-body DNN of GAZ.

Figure \ref{fig:flagship} (second from the left) reports Bayes Elo \citep{bayes_elo} on $9\times9$ Go after $34$M frames of training, averaged over five training seeds per agent.
$\I_{\text{opt}}$ benefits substantially from a $Q_\phi$ critic, while the baseline's performance \textit{degrades} with a $Q_\phi$ critic. 
$\I_{\text{opt}}$ mean Elo is 107.8 higher than the baseline's, even without the addition of the $Q_\phi$ critic.
With $Q_\phi$ the difference increases to 277 Elo.
See Appendix \ref{app:details:mcts} for additional experimental and implementation details and Appendix \ref{app:additional_mcts_results} for additional results, including per-seed results, an investigation of the error behavior in MCTS and a fixed-network comparison along compute budgets $M$ which includes the additional board game chess and the additional search policy PUCT.
In the fixed-network comparison, GumbelMCTS with $\I_{\mathrm{opt}}$ and the $Q_\phi$ critic achieves the highest playing strength among all tested GumbelMCTS and PUCT variants at every evaluated search budget in both $9\times9$ Go and chess
(Figure~\ref{fig:mcts_M_grid}).

\textbf{Model free classical continuous control (SAC on DMC).}
We incorporate $\I_{\text{opt}}$ into the classical continuous-control RL algorithm SAC \citep{sac1}.
In practical implementations, SAC approximates a projection of the target soft distribution $\I_{reg}(\mu,Q_{soft})(a|s) $ (where $\mu$ is the uniform policy and $Q_{soft}$ is the soft Q-value function) to the parameterized policy using empirical gradient descent.
We incorporate $\I_{opt}$ into this traditional implementation of SAC similarly, with a gradient based approximation of $\I_{opt}$ (see Appendix \ref{app:details:sac} for details). 
To estimate $\sigma^2_{Q} $ we use the rudimentary and cheap uncertainty estimation already provided by SAC's two critic value ensemble.
Results are aggregated across 10 classical continuous control environments from the DeepMind Control suite \citep[DMC,][]{dm_control}.
Figure \ref{fig:flagship} (second from right) reports normalised returns after $1$M steps. 
Even in the presence of the rudimentary uncertainty estimation provided by the value ensemble, which was not designed for explicit uncertainty estimation, the gradient-based approximation of $\I_{\text{opt}}$ is able to provide performance improvement over baseline SAC.
See Appendix \ref{app:details:sac} for implementation and experimental details and Appendix \ref{app:additional_results_SAC} for per-environment results.

\textbf{Offline RL (ReBRAC on D4RL).} We evaluate $\I_{\text{opt}}$'s contribution to performance in offline RL, incorporating it into the recent ReBRAC algorithm \citep{tarasov2023revisiting} similarly using the two critic value ensemble (see Appendix \ref{app:details:offline} for details).
We evaluate on the medium-replay datasets of the D4RL Gym-MuJoCo suite \citep{d4rl} using four locomotion environments: \textit{halfcheetah}, \textit{hopper}, \textit{walker2d} and \textit{ant} with the \textit{medium-replay} datasets. 
Scores are D4RL-normalized, where 0 corresponds to a random policy and 100 to an expert (SAC) policy. 
Each agent trains for $1$M gradient steps.
Figure \ref{fig:flagship} (right) reports the return of the final policy at the end of training. 
The gradient-based approximation of $\I_{\text{opt}}$ improves on the baseline in this domain as well. 
See Appendix \ref{app:details:offline} for implementation and experimental details and Appendix \ref{app:additional_results:offline} for per-environment results.

\section{Conclusions and Future Work}
\label{sec:conclusions}
In this work, we took steps towards practical RL algorithms driven by optimal policy improvement operators.
We characterized optimal policy improvement, showing that optimal improvement is itself equivalent to solving MDPs and fundamentally connecting optimal policy improvement to planning with an explicit and implicit model.
To solve the MDP in the explicit or implicit model, practical methods rely on \textit{greedification}, which lead us to focus on optimal greedification under practical constraints.
We formulated greedification under approximate evaluation as probabilistic decision-making under uncertainty and derived a tractable decision objective. 
Solving for this objective, we derived $\I_{\mathrm{opt}},$ a novel greedification operator which is optimal with respect to the probabilistic decision objective.
We established convergence for algorithms driven by $\I_{\mathrm{opt}}$ in a Generalized Policy Iteration setting and shown that $\I_{\mathrm{opt}}$ generalizes a popular greedification operator in literature, $\I_{\mathrm{reg}},$ extending it with explicit accounting for evaluation uncertainty.
Empirically, $\I_{\mathrm{opt}}$ and its practical gradient-based approximations improved aggregate performance over the respective GumbelAlphaZero, SAC, and ReBRAC baselines in board games, online continuous control, and offline continuous control, respectively. 
In a noised Policy Iteration experiment which coincides with the assumptions under which $\I_{\mathrm{opt}}$ was designed, the novel operator outperformed all other tested operators. 
$\I_{\mathrm{opt}}$ presents a new, attractive choice for greedification, unified across different algorithms, and lays a path towards optimal policy improvement through the design of operators which are optimal with respect to practical RL constraints.


\subsection*{AI use statement}
In this work, we used generative AI tools to assist with writing and
checking proofs, developing and implementing experimental code,
analyzing and plotting results, and supporting literature research. 
We did not use generative AI for any other task requiring disclosure. 
We reviewed all AI-assisted content and verified and tested all
AI-generated code. 
We take full responsibility for the final manuscript, including all text, claims, code, and other artifacts produced with AI
assistance.

\label{endofmaintext}
\bibliography{bib}
\bibliographystyle{iclr2027_conference}


\appendix
\etocdepthtag.toc{appendix}

\newpage

{
  \etocsettagdepth{main}{none}
  \etocsettagdepth{appendix}{subsection}
  \etocsettocstyle{\section*{Appendix Contents}}{}
  \tableofcontents
}

\newpage

\section{Proofs}
\label{app:proofs}

\subsection{Optimal policy improvement as solving an induced MDP}
\label{proof:solving_skip_mdp_is_opt}
Let $S \subseteq \S$ denote the subset of states at which the policy is
permitted to change, and let $\pi$ denote the current policy.
We construct an induced semi-Markov decision process (SMDP)
$M^{\mathrm{semi}}_{S,\pi}$ whose decision states are exactly the states in
$S$.

Starting from a decision state $s_0=s\in S$, the agent first selects an action
$a_0=a$. The environment then evolves according to the original transition
dynamics $P$. Whenever the resulting trajectory leaves $S$, actions are
subsequently sampled from the fixed policy $\pi$ until the process first
returns to $S$. Define this first-return time as
\begin{align}
    \tau
    :=
    \inf\{t\geq 1 : s_t \in S\}.
\end{align}
If the process transitions directly from one state in $S$ to another, then
$\tau=1$.

Each state-action pair $(s,a)$ in the induced SMDP therefore corresponds to a
temporally extended transition in the original MDP: take action $a$ at $s$,
and, whenever the process leaves $S$, follow $\pi$ until the next state in
$S$ is reached.

The reward of this temporally extended transition is the expected discounted
return accumulated before the next decision state:
\begin{align}
    R_{M^{\mathrm{semi}}_{S,\pi}}(s,a)
    :=
    \E\left[
        \sum_{t=0}^{\tau-1}\gamma^t R(s_t,a_t)
        \,\middle|\,
        s_0=s,\;a_0=a,\;\pi
    \right].
    \label{eq:semi_mdp_reward}
\end{align}
Here and below, if the process never returns to $S$, we take $\tau=\infty$,
so that the trajectory contributes its complete discounted return to
Equation~\ref{eq:semi_mdp_reward} and no continuation value.

The transition dynamics of the induced SMDP are described jointly by the
next decision state and the duration of the temporally extended transition:
\begin{align}
    P_{M^{\mathrm{semi}}_{S,\pi}}(s',k|s,a)
    :=
    \Pr\left(
        s_\tau=s',\;
        \tau=k
        \,\middle|\,
        s_0=s,\;a_0=a,\;\pi
    \right),
    \qquad
    s,s'\in S,\; k\geq 1.
    \label{eq:semi_mdp_transition}
\end{align}
Consequently, the Bellman optimality equation of
$M^{\mathrm{semi}}_{S,\pi}$ is
\begin{align}
    V^{*,\mathrm{semi}}(s)
    =
    \max_{a\in\A}
    \left[
        R_{M^{\mathrm{semi}}_{S,\pi}}(s,a)
        +
        \sum_{s'\in S}
        \sum_{k=1}^{\infty}
        P_{M^{\mathrm{semi}}_{S,\pi}}(s',k|s,a)
        \gamma^k
        V^{*,\mathrm{semi}}(s')
    \right].
    \label{eq:semi_mdp_bellman}
\end{align}
The factor $\gamma^k$ accounts for the possibly variable duration of the
temporally extended transition.

We can now re-describe
Theorem~\ref{thm:solving_skip_mdp_is_opt} more formally.

\begin{theorem}
    \label{thm:solving_skip_mdp_is_opt_formal}
    Let
    \begin{align}
        \Pi_S(\pi)
        :=
        \left\{
            \pi'\in\Pi :
            \pi'(\cdot|s)=\pi(\cdot|s),
            \ \forall s\notin S
        \right\}
    \end{align}
    denote the set of policies that may differ from $\pi$ only on
    $S\subseteq\S$.
    Let $\pi^{*,\mathrm{semi}}_S$ be an optimal policy of
    $M^{\mathrm{semi}}_{S,\pi}$, and extend it to the original MDP by
    \begin{align}
        \bar{\pi}(a|s)
        :=
        \begin{cases}
            \pi^{*,\mathrm{semi}}_S(a|s), & s\in S,\\
            \pi(a|s), & s\notin S.
        \end{cases}
    \end{align}
    Then $\bar{\pi}$ is an optimal policy among all policies in
    $\Pi_S(\pi)$. In particular,
    \begin{align}
        \forall s\in\S:
        \qquad
        V^{\bar{\pi}}(s)
        =
        \sup_{\pi'\in\Pi_S(\pi)}
        V^{\pi'}(s).
        \label{eq:semi_mdp_constrained_optimum}
    \end{align}
    Equivalently, since $V^*(s)\geq V^{\pi'}(s)$ for every policy $\pi'$,
    \begin{align}
        \forall s\in\S:
        \qquad
        \bar{\pi}
        \in
        \arg\inf_{\pi'\in\Pi_S(\pi)}
        \left[
            V^*(s)-V^{\pi'}(s)
        \right].
    \end{align}
    Thus, solving $M^{\mathrm{semi}}_{S,\pi}$ is equivalent to optimal
    policy improvement under the constraint that the policy may be modified
    only on $S$.
\end{theorem}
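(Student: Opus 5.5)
The plan is a value-correspondence argument. I would show that for every $\pi'\in\Pi_S(\pi)$ and every $s\in S$, the value $V^{\pi'}(s)$ in the original MDP equals the value of the restricted policy $\pi'|_S$ in $M^{\mathrm{semi}}_{S,\pi}$. Optimality of $\pi^{*,\mathrm{semi}}_S$ in the SMDP then transfers to optimality of $\bar\pi$ on $S$. A separate argument handles states outside $S$.

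\textbf{Step 1 (value correspondence on $S$).} Fix $\pi'\in\Pi_S(\pi)$ and $s\in S$. Decompose the trajectory generated by $\pi'$ from $s$ at the successive return times to $S$: $\tau_0=0$, $\tau_1=\tau$, $\tau_2,\dots$. Between two consecutive returns, the first action is drawn from $\pi'(\cdot|s_{\tau_j})$. Every subsequent action before the next return is taken at a state outside $S$, where $\pi'=\pi$. By the strong Markov property, the conditional law of the segment given $(s_{\tau_j},a_{\tau_j})$ is therefore exactly the one defining $R_{M^{\mathrm{semi}}_{S,\pi}}$ and $P_{M^{\mathrm{semi}}_{S,\pi}}$. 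Hence $V^{\pi'}$ restricted to $S$ satisfies the SMDP policy-evaluation equation
\[
V^{\pi'}(s)=\sum_a\pi'(a|s)\Bigl[R_{M^{\mathrm{semi}}_{S,\pi}}(s,a)+\sum_{s'\in S}\sum_{k\ge1}P_{M^{\mathrm{semi}}_{S,\pi}}(s',k|s,a)\gamma^kV^{\pi'}(s')\Bigr].
\]
Because $\gamma^k\le\gamma<1$, the corresponding operator is a $\gamma$-contraction in sup norm, so its fixed point is unique. It follows that $V^{\pi'}|_S=V^{\pi'|_S,\mathrm{semi}}$. The case $\tau=\infty$ is absorbed into the reward term by convention. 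For finite horizon with $\gamma=1$, I would instead argue by backward induction on time, treating time as part of the state.

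\textbf{Step 2 (optimality on $S$).} Standard SMDP theory (or the equivalent MDP representation of \citet{sutton1999semi_MDPs}) guarantees that an optimal stationary policy $\pi^{*,\mathrm{semi}}_S$ exists and attains $V^{*,\mathrm{semi}}(s)\ge V^{\mu,\mathrm{semi}}(s)$ for all stationary $\mu$ and all $s\in S$. Restriction is a bijection between $\Pi_S(\pi)$ and stationary SMDP policies on $S$. Combined with Step 1, this gives $V^{\bar\pi}(s)=V^{*,\mathrm{semi}}(s)\ge V^{\pi'}(s)$ for all $\pi'\in\Pi_S(\pi)$ and $s\in S$.

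\textbf{Step 3 (states outside $S$).} For $s\notin S$, follow $\pi$ until the first hitting time $\sigma$ of $S$. Every $\pi'\in\Pi_S(\pi)$ agrees with $\pi$ before $\sigma$, so the law of $(s_0,\dots,s_\sigma,\sigma)$ and the rewards collected before $\sigma$ do not depend on $\pi'$. This yields
\[
V^{\pi'}(s)=\E\Bigl[\textstyle\sum_{t<\sigma}\gamma^tR_t\Bigr]+\E\bigl[\gamma^\sigma V^{\pi'}(s_\sigma)\mathbbm 1\{\sigma<\infty\}\bigr],
\]
in which the first term and the distribution of $(\sigma,s_\sigma)$ are common to all $\pi'$. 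Monotonicity in $V^{\pi'}|_S$ together with Step 2 then gives $V^{\bar\pi}(s)\ge V^{\pi'}(s)$. The $\arg\inf$ reformulation follows immediately, since $V^*(s)$ is a constant.

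\textbf{Main obstacle.} I expect Step 1 to be the hardest part to make rigorous. The concatenation of excursions must reproduce the original trajectory law, which requires the strong Markov property at the random times $\tau_j$. One must also handle excursions with $\tau=\infty$ and justify the swapping of infinite sums. I would use bounded rewards and $\gamma<1$ to invoke dominated convergence, and I would first verify the evaluation equation by conditioning on $\mathcal F_{\tau}$.
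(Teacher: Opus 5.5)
Your proposal is correct and follows essentially the same route as the paper: a bijection between $\Pi_S(\pi)$ and SMDP policies on $S$, value equivalence on $S$ that transfers SMDP optimality to $\bar\pi$, and then a first-hitting-time argument for states outside $S$. You go further than the paper in rigor, since you justify the value equivalence via the strong Markov property at return times and uniqueness of the SMDP evaluation fixed point by contraction, steps the paper simply asserts from the construction of $R_{M^{\mathrm{semi}}_{S,\pi}}$ and $P_{M^{\mathrm{semi}}_{S,\pi}}$.
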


\textit{Proof.}
Consider any policy $\pi'\in\Pi_S(\pi)$. Whenever the process is in a state
$s\in S$, $\pi'$ selects an action according to $\pi'(\cdot|s)$. After this
action, if the trajectory leaves $S$, the policies $\pi'$ and $\pi$ coincide
by definition of $\Pi_S(\pi)$. Hence, until the trajectory next reaches a
state in $S$, its action distribution is exactly the one used in the
construction of $M^{\mathrm{semi}}_{S,\pi}$.

Therefore, every policy $\pi'\in\Pi_S(\pi)$ induces a policy on
$M^{\mathrm{semi}}_{S,\pi}$ given by its restriction to $S$.
Conversely, every policy on $M^{\mathrm{semi}}_{S,\pi}$ uniquely induces a
policy in $\Pi_S(\pi)$ by using that policy on $S$ and $\pi$ on
$\S\setminus S$.

By Equations~\ref{eq:semi_mdp_reward} and
\ref{eq:semi_mdp_transition}, each temporally extended transition in
$M^{\mathrm{semi}}_{S,\pi}$ preserves both the discounted reward accumulated
during the corresponding trajectory segment in the original MDP and the
discounted continuation value upon returning to $S$. Thus, under the
policy correspondence above,
\begin{align}
    V^{\pi'}_M(s)
    =
    V^{\pi'|_S}_{M^{\mathrm{semi}}_{S,\pi}}(s),
    \qquad
    \forall s\in S.
    \label{eq:semi_mdp_value_equivalence}
\end{align}

It follows that an optimal policy of $M^{\mathrm{semi}}_{S,\pi}$ maximizes
the value in the original MDP among all policies that differ from $\pi$ only
on $S$, for every $s\in S$.

Finally, consider any state $s\notin S$. Every policy in $\Pi_S(\pi)$ follows
the same policy $\pi$ until either the process terminates or first reaches
$S$. Therefore, the distribution of rewards and of the first state reached
in $S$ before that point is independent of the choice of policy within
$S$. Since $\bar{\pi}$ maximizes the continuation value at every state in
$S$, it also maximizes the value from every state $s\notin S$.
Hence,
\begin{align}
    V^{\bar{\pi}}(s)
    =
    \sup_{\pi'\in\Pi_S(\pi)}
    V^{\pi'}(s),
    \qquad
    \forall s\in\S.
\end{align}
Therefore, solving $M^{\mathrm{semi}}_{S,\pi}$ is equivalent to solving the
original MDP under the constraint that the policy equals $\pi$ on
$\S\setminus S$, proving the result.
\qed

\subsection{Solving sub-MDPs is a lower bound on optimal policy improvement}
\label{proof:solving_sub_mdp_is_lower_bound_opt}

Let $S \subseteq \S$ denote the subset of states at which the policy is
permitted to change, and let $\pi$ denote the current policy.
We define the sub-MDP $M^{sub}_{S,\pi}$ by preserving the original dynamics
within $S$ and terminating trajectories whenever they leave $S$.

Specifically, for a transition $(s,a,s')$ with $s\in S$ and $s'\in S$,
the original transition probability and reward are preserved.
For a transition with $s\in S$ and $s'\notin S$, the transition is made
terminal and bootstrapped using the value of the current policy at the
realized successor state $s'$. Thus, the value associated with such an
exit transition is
\begin{align}
    R(s,a) + \gamma V^\pi(s').
\end{align}
Importantly, this boundary value depends on the realized successor $s'$,
rather than on the unconditional state-action value $Q^\pi(s,a)$.

Equivalently, the Bellman optimality operator of $M^{sub}_{S,\pi}$ is,
for every $s\in S$,
\begin{align}
    \left(\mathcal{T}^{sub}_{S,\pi}V\right)(s)
    :=
    \max_{a\in\A}
    \Bigg[
        R(s,a)
        +
        \gamma
        \sum_{s'\in S}
        P(s'|s,a)V(s')
        +
        \gamma
        \sum_{s'\notin S}
        P(s'|s,a)V^\pi(s')
    \Bigg].
    \label{eq:sub_mdp_bellman_operator}
\end{align}
Let $V^{*,sub}_{S,\pi}$ denote the unique fixed point of
$\mathcal{T}^{sub}_{S,\pi}$.

The difference between $M^{sub}_{S,\pi}$ and the induced semi-MDP
$M^{semi}_{S,\pi}$ of the previous section is therefore that
$M^{semi}_{S,\pi}$ remains connected through the behavior of $\pi$
outside $S$, whereas $M^{sub}_{S,\pi}$ terminates immediately upon
leaving $S$ and bootstraps from $V^\pi$ at the boundary.

Consequently, constructing $M^{sub}_{S,\pi}$ does not require a transition
model beyond transitions originating from state-action pairs in
$S\times\A$. It does, however, require the boundary values $V^\pi(s')$
for successor states $s'\notin S$. In model-free RL, the former may
correspond to transitions available in a replay buffer, while the latter may be supplied by a value estimator.

Let $\pi^{*,sub}_S$ denote an optimal policy of $M^{sub}_{S,\pi}$.
We extend this policy to the original MDP as
\begin{align}
    \bar{\pi}(a|s)
    :=
    \begin{cases}
        \pi^{*,sub}_S(a|s), & s\in S,\\
        \pi(a|s), & s\notin S.
    \end{cases}
    \label{eq:sub_mdp_extended_policy}
\end{align}
The corresponding policy-improvement operator is denoted by
$\I_{\text{solve}\,M^{sub}_{S,\pi}}$.

We can now state
Theorem~\ref{thm:solving_sub_mdp_is_lower_bound_opt} more formally.

\begin{theorem}
    \label{thm:solving_sub_mdp_is_lower_bound_opt_formal}
    Let
    \begin{align}
        \Pi_S(\pi)
        :=
        \left\{
            \pi'\in\Pi :
            \pi'(\cdot|s)=\pi(\cdot|s),
            \ \forall s\notin S
        \right\}.
    \end{align}
    Then, for every $s\in S$,
    \begin{align}
        \max_{a\in\A}Q^\pi(s,a)
        \;\leq\;
        V^{*,sub}_{S,\pi}(s)
        \;\leq\;
        V^{\bar{\pi}}(s)
        \;\leq\;
        \sup_{\pi'\in\Pi_S(\pi)}V^{\pi'}(s).
        \label{eq:sub_mdp_value_bounds}
    \end{align}
    If leaving $S$ precludes returning to $S$, then, for every $s\in S$,
\begin{align}
    V^{*,\mathrm{sub}}_{S,\pi}(s)
    =
    V^{\bar\pi}(s)
    =
    \sup_{\pi'\in\Pi_S(\pi)} V^{\pi'}(s).
    \label{eq:sub_mdp_no_return_equivalence}
\end{align}
Thus, under this condition, solving $M^{\mathrm{sub}}_{S,\pi}$ is
equivalent to optimal policy improvement constrained by $S$.
    Consequently, since
    $\sup_{\pi'\in\Pi_S(\pi)}V^{\pi'}(s)\leq V^*(s)$,
    \begin{align}
        \left|
            V^*(s) - \max_{a\in\A}Q^\pi(s,a)
        \right|
        \geq
        \left|
            V^*(s) - V^{\I_{\text{solve}\,M^{sub}_{S,\pi}}(\pi)}(s)
        \right|
        \geq
        \inf_{\pi'\in\Pi_S(\pi)}
        \left|
            V^*(s) - V^{\pi'}(s)
        \right|.
        \label{eq:sub_mdp_optimality_gap_bounds}
    \end{align}
    Thus, solving $M^{sub}_{S,\pi}$ yields an improvement that is bounded
    below by one-step greedification $\max_a Q^\pi(s,a) $ and above by optimal policy improvement under the constraint that the policy may be modified only on $S$.
\end{theorem}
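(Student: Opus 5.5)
The chain in Equation~\ref{eq:sub_mdp_value_bounds} consists of three inequalities. I would prove each with a monotonicity/fixed-point argument for the contraction $\mathcal{T}^{sub}_{S,\pi}$, and then deduce the equality case and the gap bounds.

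\emph{First inequality.} Define $W(s):=V^\pi(s)$ for $s\in S$ and substitute it into Equation~\ref{eq:sub_mdp_bellman_operator}. The bracket becomes exactly $R(s,a)+\gamma\sum_{s'}P(s'|s,a)V^\pi(s')=Q^\pi(s,a)$, so $\mathcal{T}^{sub}_{S,\pi}W(s)=\max_a Q^\pi(s,a)\ge V^\pi(s)=W(s)$. The operator is monotone and a $\gamma$-contraction, so iterating gives $W\le \mathcal{T}W\le \mathcal{T}^2W\le\dots\to V^{*,sub}_{S,\pi}$. Hence $V^{*,sub}_{S,\pi}\ge W$ pointwise on $S$. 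Applying monotonicity once more,
\[
V^{*,sub}_{S,\pi}=\mathcal{T}V^{*,sub}_{S,\pi}\ge \mathcal{T}W=\max_a Q^\pi(\cdot,a).
\]

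\emph{Second inequality.} This is the key step. Let $\bar\pi$ be greedy for $V^{*,sub}$ on $S$ and equal to $\pi$ outside $S$. Define $U$ on all of $\S$ by $U=V^{*,sub}$ on $S$ and $U=V^\pi$ off $S$. For $s\in S$ the fixed-point equation gives $(T^{\bar\pi}U)(s)=U(s)$. For $s\notin S$ we have
\[
(T^{\bar\pi}U)(s)=(T^{\pi}U)(s)\ge (T^\pi V^\pi)(s)=V^\pi(s)=U(s),
\]
provided $U\ge V^\pi$ everywhere. On $S$ this holds because $V^{*,sub}\ge W=V^\pi$ from the first step; off $S$ it holds by definition. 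So $T^{\bar\pi}U\ge U$, and iterating the monotone contraction $T^{\bar\pi}$ gives $V^{\bar\pi}\ge U$, in particular $V^{\bar\pi}\ge V^{*,sub}$ on $S$. The subtlety I expect to be the main obstacle is exactly this need for $U\ge V^\pi$ off $S$: the boundary values are frozen at $V^\pi$, while trajectories can leave $S$ and re-enter where $\bar\pi$ is better. I would handle it as above, via the Bellman-operator argument on the full MDP rather than a trajectory-level argument.

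\emph{Third inequality.} This is immediate since $\bar\pi\in\Pi_S(\pi)$.

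\emph{No-return case.} Suppose leaving $S$ precludes returning. For any $\pi'\in\Pi_S(\pi)$ and any $s'\notin S$, the value satisfies $V^{\pi'}(s')=V^\pi(s')$, because $\pi'$ agrees with $\pi$ along every reachable state. So $V^{\pi'}$ restricted to $S$ satisfies the policy-evaluation equation of $M^{sub}_{S,\pi}$ for $\pi'|_S$. Hence $V^{\pi'}\le V^{*,sub}$ on $S$, which yields the supremum and collapses the chain to equalities. Equivalently, one can note that $M^{sub}$ and $M^{semi}$ coincide in this case and invoke Theorem~\ref{thm:solving_skip_mdp_is_opt_formal}.

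\emph{Gap bounds.} Equation~\ref{eq:sub_mdp_optimality_gap_bounds} follows because every quantity in the chain is at most $V^*(s)$. The absolute values are therefore $V^*(s)$ minus each term, and the order reverses. Here the middle term equals $V^*(s)-V^{\bar\pi}(s)$, and the infimum equals $V^*(s)$ minus the supremum.
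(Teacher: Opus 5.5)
Your proposal is correct and follows the paper's proof essentially step for step: the same one-step sub-MDP backup from $V^\pi$ yielding $\max_a Q^\pi(s,a)$, the same auxiliary function equal to $V^{*,sub}_{S,\pi}$ on $S$ and $V^\pi$ off $S$ with $\mathcal{T}^{\bar\pi}W\ge W$ iterated to $V^{\bar\pi}\ge W$, and the same value-correspondence argument in the no-return case. The only differences are cosmetic. You obtain $V^{*,sub}_{S,\pi}\ge V^\pi$ on $S$ by iterating $\mathcal{T}^{sub}_{S,\pi}$ from $V^\pi$, whereas the paper notes that $V^\pi|_S$ is the value of $\pi|_S$ in the sub-MDP. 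One small correction in the no-return step: restrict the claim $V^{\pi'}(s')=V^\pi(s')$ to exit successors of $S$, which is all the argument needs, since the claim can fail for other states outside $S$ that lead into $S$.
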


\textit{Proof.}
First observe that restricting the original policy $\pi$ to $S$ yields
exactly its original value on the sub-MDP. Indeed, for every $s\in S$,
\begin{align}
    V^\pi(s)
    &=
    \sum_{a\in\A}\pi(a|s)
    \Bigg[
        R(s,a)
        +
        \gamma
        \sum_{s'\in S}
        P(s'|s,a)V^\pi(s')
        +
        \gamma
        \sum_{s'\notin S}
        P(s'|s,a)V^\pi(s')
    \Bigg].
\end{align}
Hence $V^\pi|_S$ is the value of the policy $\pi|_S$ in
$M^{sub}_{S,\pi}$, and therefore
\begin{align}
    V^{*,sub}_{S,\pi}(s)\geq V^\pi(s),
    \qquad \forall s\in S.
    \label{eq:sub_mdp_dominates_pi}
\end{align}

Applying the optimality operator of the sub-MDP to $V^\pi$ gives
\begin{align}
    \left(\mathcal{T}^{sub}_{S,\pi}V^\pi\right)(s)
    &=
    \max_{a\in\A}
    \Bigg[
        R(s,a)
        +
        \gamma
        \sum_{s'\in S}
        P(s'|s,a)V^\pi(s')
        +
        \gamma
        \sum_{s'\notin S}
        P(s'|s,a)V^\pi(s')
    \Bigg]
    \\
    &=
    \max_{a\in\A}Q^\pi(s,a).
    \label{eq:sub_mdp_first_backup}
\end{align}
Since $\mathcal{T}^{sub}_{S,\pi}$ is monotone and
$V^{*,sub}_{S,\pi}$ is its optimal fixed point,
\begin{align}
    V^{*,sub}_{S,\pi}(s)
    \geq
    \left(\mathcal{T}^{sub}_{S,\pi}V^\pi\right)(s)
    =
    \max_{a\in\A}Q^\pi(s,a),
    \qquad \forall s\in S.
    \label{eq:sub_mdp_greedy_lower_bound}
\end{align}

We next relate the value of the sub-MDP to the value obtained by extending
its optimal policy to the original MDP according to
Equation~\ref{eq:sub_mdp_extended_policy}.
Define
\begin{align}
    W(s)
    :=
    \begin{cases}
        V^{*,sub}_{S,\pi}(s), & s\in S,\\
        V^\pi(s), & s\notin S.
    \end{cases}
\end{align}
For $s\in S$, optimality of $\pi^{*,sub}_S$ in the sub-MDP gives
\begin{align}
    \left(\mathcal{T}^{\bar{\pi}}W\right)(s)
    =
    W(s).
\end{align}
For $s\notin S$, $\bar{\pi}$ coincides with $\pi$. Moreover,
Equation~\ref{eq:sub_mdp_dominates_pi} implies $W\geq V^\pi$ everywhere.
By monotonicity of the Bellman operator,
\begin{align}
    \left(\mathcal{T}^{\bar{\pi}}W\right)(s)
    &=
    \left(\mathcal{T}^{\pi}W\right)(s)
    \\
    &\geq
    \left(\mathcal{T}^{\pi}V^\pi\right)(s)
    =
    V^\pi(s)
    =
    W(s),
    \qquad s\notin S.
\end{align}
Therefore,
\begin{align}
    \mathcal{T}^{\bar{\pi}}W \geq W.
\end{align}
Repeated application of the monotone Bellman operator
$\mathcal{T}^{\bar{\pi}}$, together with its convergence to
$V^{\bar{\pi}}$, yields
\begin{align}
    V^{\bar{\pi}}(s)\geq W(s),
\end{align}
and hence, in particular,
\begin{align}
    V^{\bar{\pi}}(s)
    \geq
    V^{*,sub}_{S,\pi}(s),
    \qquad \forall s\in S.
    \label{eq:extended_sub_policy_dominates_sub_value}
\end{align}
It remains to establish the equivalence when leaving $S$ precludes
returning to $S$. Consider any $\pi'\in\Pi_S(\pi)$. While the process
remains in $S$, the original MDP and $M^{\mathrm{sub}}_{S,\pi}$
have identical dynamics and rewards. If the process leaves $S$ at a
successor state $s'\notin S$, it follows $\pi$ thereafter, since
$\pi'$ and $\pi$ coincide outside $S$. By assumption, it never returns
to a state at which the two policies may differ. Its exact continuation
value is therefore $V^\pi(s')$, precisely the boundary value used by
$M^{\mathrm{sub}}_{S,\pi}$.

Consequently, for every $\pi'\in\Pi_S(\pi)$ and every $s\in S$,
\begin{align}
    V_M^{\pi'}(s)
    =
    V_{M^{\mathrm{sub}}_{S,\pi}}^{\pi'|_S}(s).
\end{align}
Optimizing both sides over $\pi'\in\Pi_S(\pi)$ gives
\begin{align}
    V^{*,\mathrm{sub}}_{S,\pi}(s)
    =
    \sup_{\pi'\in\Pi_S(\pi)}V^{\pi'}(s).
\end{align}
Since $\bar\pi|_S=\pi_S^{*,\mathrm{sub}}$, the same correspondence
gives
\begin{align}
    V^{\bar\pi}(s)
    =
    V^{*,\mathrm{sub}}_{S,\pi}(s),
\end{align}
which proves
Equation~\ref{eq:sub_mdp_no_return_equivalence}.

Finally, $\bar{\pi}\in\Pi_S(\pi)$ by construction. It follows immediately
that
\begin{align}
    V^{\bar{\pi}}(s)
    \leq
    \sup_{\pi'\in\Pi_S(\pi)}V^{\pi'}(s).
\end{align}
Combining this inequality with
Equations~\ref{eq:sub_mdp_greedy_lower_bound} and
\ref{eq:extended_sub_policy_dominates_sub_value} proves
Equation~\ref{eq:sub_mdp_value_bounds}, and therefore also
Equation~\ref{eq:sub_mdp_optimality_gap_bounds}.
\qed

The first inequality is strict at any state $s\in S$ for which
\begin{align}
    V^{*,sub}_{S,\pi}(s)
    >
    \left(\mathcal{T}^{sub}_{S,\pi}V^\pi\right)(s)
    =
    \max_{a\in\A}Q^\pi(s,a).
\end{align}
Thus, whenever a single greedy Bellman backup from $V^\pi$ does not attain
the optimal sub-MDP value, solving the sub-MDP can provide a strictly
stronger improvement than single-step greedification followed by followed by $\pi.$

\subsection{
Even when \texorpdfstring{$|S| = 1$}{|S| = 1}, greedification is not generally optimal policy improvement
}
\label{app:example_size_S_1}
Perhaps surprisingly, the greedy operator is \textit{not} an optimal policy improvement operator in general even when $|S| = 1$.
This arises from the fact that the greedy operator does not account for the benefits of taking better decisions $\pi'(s)$ at $s$. 
We will show this with an example.
If the agent cannot return from $s$ back to $s$ in the MDP (e.g. in non-cyclical MDPs), this is not a problem and indeed in this case the greedy operator is an optimal policy improvement operator on $S = \{s\}.$

Consider the following Markov Decision Process (MDP): 
$ \S = \{s, s_{terminal}\}, \A = \{a_0,a_1,a_2\}$.
Discount factor: $\gamma = 0.9.$
Deterministic rewards: $R(s,a_0) = 0, R(s,a_1) = 10, R(s,a_2) = 9 $.
Deterministic transitions: $ P(s_{terminal}|s,a_0) = 1, P(s_{terminal}|s,a_1) = 1, P(s|s,a_2) = 1 $ and otherwise $0$.
The optimal policy in this MDP $\pi^*(s) $ always chooses $a_2$ yielding a return of $\sum_{t = 0}^\infty 0.9^t \cdot 9 = \frac{9}{1 - 0.9} = 90$.

Let the current policy $\pi(s)$ be a policy that always chooses $a_0$.
Therefore, $V^\pi(s) = R(s,a_0) + \gamma V^\pi(s_{terminal}) = 0.  $
Let us compute $Q^\pi(s,\cdot):$
\begin{align}
    Q^\pi(s, a_0) &= 0 + 0.9 V^\pi(s_{\text{terminal}}) = 0 + 0 = 0, \\ 
    Q^\pi(s, a_1) &= 10 + 0.9  V^\pi(s_{\text{terminal}}) = 10 + 0 = 10, \\
    Q^\pi(s, a_2) &= 9 + 0.9  V^\pi(s) = 9 + 0.9\cdot 0 = 9.
\end{align}
If we use the greedy operator to improve the policy, we have:
\begin{align}
    \pi'(s) = \argmax_{a \in \A} Q^\pi(s,a), \quad P_{\pi'(s)}(a = a_1) = 1, \quad V^{\pi'}(s) = 10.
\end{align}
Which is not the optimal policy. 
In this MDP, \textit{optimal policy improvement} will always select $  a_2 $ at state $s.$

We conclude that if the MDP is cyclical, the greedy operator is \textit{not guaranteed} to provide optimal policy improvement \textit{even in an MDP where the policy is permitted to be modified only at one state}. 

\subsection{Optimal greedification under exact evaluation}
\label{app:theory:greedy_is_opt}

\begin{lemma}[Optimal greedification under exact evaluation]
\label{lem:exact_optimal_greedification}
Fix a policy $\pi\in\Pi$ and a state $s\in\mathcal S$. If
$\widehat Q^\pi=Q^\pi$, then any distribution
$\mu^*(\cdot\mid s)$ supported on
$\arg\max_{a\in\mathcal A}Q^\pi(s,a)$ satisfies
\[
\mu^*
\in
\arg\max_{\mu\in\Delta(\mathcal A)}
\sum_{a\in\mathcal A}\mu(a\mid s)Q^\pi(s,a)
=
\arg\max_{\mu\in\Delta(\mathcal A)}
\sum_{a\in\mathcal A}\mu(a\mid s)A^\pi(s,a).
\]
Consequently, $\mathcal I_{\arg\max}$ is an optimal greedification
operator under exact evaluation.
\end{lemma}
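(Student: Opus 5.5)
The plan is a direct argument on the probability simplex at the fixed state $s$. First I would show that the two $\arg\max$ sets coincide. For any $\mu\in\Delta(\mathcal A)$,
\[
\sum_{a}\mu(a|s)A^\pi(s,a)=\sum_a\mu(a|s)Q^\pi(s,a)-V^\pi(s)\sum_a\mu(a|s)=\sum_a\mu(a|s)Q^\pi(s,a)-V^\pi(s).
\]
So the two objectives differ by the constant $V^\pi(s)$, which does not depend on $\mu$. Hence they have identical maximizers over $\Delta(\mathcal A)$.

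Second, I would bound the linear objective. Let $m:=\max_{a\in\mathcal A}Q^\pi(s,a)$, which exists because $\mathcal A$ is finite. For every $\mu\in\Delta(\mathcal A)$, since the weights $\mu(a|s)$ are nonnegative and sum to one,
\[
\sum_a\mu(a|s)Q^\pi(s,a)\le\sum_a\mu(a|s)\,m=m.
\]
Now take any $\mu^*$ supported on $\arg\max_aQ^\pi(s,a)$. Every action in its support has $Q^\pi(s,a)=m$, so $\sum_a\mu^*(a|s)Q^\pi(s,a)=m$. Thus $\mu^*$ attains the upper bound and lies in the $\arg\max$, and by the first step it also maximizes the advantage form.

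Third, I would deduce the consequence for $\I_{\arg\max}$. By the paper's convention, $\I_{\arg\max}(\pi,Q^\pi)(\cdot|s)$ is any distribution supported on maximizing actions. Applying the previous step at each $s$ shows it solves the optimal greedification objective (Equation~\ref{eq:optimal_greedification}) statewise. It is therefore an optimal greedification operator under exact evaluation.

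I could also note the converse: any maximizer must place zero mass on non-maximizing actions, since otherwise the inequality in the second step is strict. This is not needed for the claim as stated.

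There is no real obstacle here. The only point needing care is the constant-shift equivalence in the first step, which relies on $\mu$ summing to one; $V^\pi(s)$ itself is a fixed number independent of $\mu$.
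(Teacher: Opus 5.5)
Your proposal is correct and follows essentially the paper's argument: the two objectives differ by the $\mu$-independent constant $V^\pi(s)$, and a linear objective on the simplex is maximized by distributions supported on $\arg\max_a Q^\pi(s,a)$. You spell out the upper bound $\sum_a \mu(a|s)Q^\pi(s,a) \le m$ explicitly, where the paper simply asserts which distributions are the maximizers.
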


\begin{proof}
Both objectives are linear in $\mu$. Moreover,
\[
\sum_a \mu(a\mid s)A^\pi(s,a)
=
\sum_a \mu(a\mid s)Q^\pi(s,a)-V^\pi(s),
\]
and $V^\pi(s)$ is constant with respect to $\mu$. Hence the two
objectives have the same maximizers, namely the distributions supported
on $\arg\max_a Q^\pi(s,a)$.
\end{proof}

\begin{lemma}[When optimal greedification is optimal one-state improvement]
\label{lem:no_return_optimal_improvement}
Fix $\pi\in\Pi$ and $s\in\mathcal S$, and define
\[
\Pi_{\{s\}}(\pi)
:=
\left\{
\pi'\in\Pi:
\pi'(\cdot\mid x)=\pi(\cdot\mid x),
\ \forall x\neq s
\right\}.
\]
Assume that, after leaving $s$, the process cannot revisit $s$ under
the continuation policy $\pi$. Then, for every
$\pi'\in\Pi_{\{s\}}(\pi)$,
\[
V^{\pi'}(s)-V^\pi(s)
=
\sum_{a\in\mathcal A}\pi'(a\mid s)A^\pi(s,a).
\]
Therefore,
\[
\arg\max_{\pi'\in\Pi_{\{s\}}(\pi)}V^{\pi'}(s)
=
\arg\max_{\pi'\in\Pi_{\{s\}}(\pi)}
\sum_{a\in\mathcal A}\pi'(a\mid s)Q^\pi(s,a),
\]
and $\mathcal I_{\arg\max}$ is also an optimal policy-improvement
operator within $\Pi_{\{s\}}(\pi)$.
\end{lemma}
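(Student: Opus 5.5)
The argument is a one-step decomposition of $V^{\pi'}(s)$ at $s$, followed by the observation that the continuation after leaving $s$ is governed entirely by $\pi$. Fix $\pi'\in\Pi_{\{s\}}(\pi)$ and expand at $s$:
\begin{align}
V^{\pi'}(s)=\sum_{a\in\A}\pi'(a\mid s)\Big[R(s,a)+\gamma\sum_{s'}P(s'\mid s,a)V^{\pi'}(s')\Big].
\end{align}

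The key step is to show that $V^{\pi'}(s')=V^\pi(s')$ for every successor $s'$ reachable from $s$ after leaving it. I would first treat the case $s'\neq s$. Starting from $s'$, the process follows $\pi'$, which coincides with $\pi$ everywhere except at $s$. By the no-revisit assumption, the trajectory generated by $\pi$ from $s'$ never reaches $s$. Hence the trajectory laws under $\pi'$ and $\pi$ from $s'$ agree: one can argue by induction on $t$ that the distributions of $(s_0,a_0,\dots,s_t)$ coincide and put zero mass on $s$. It follows that $V^{\pi'}(s')=V^\pi(s')$.

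I must also handle the self-loop $s'=s$. I would read the assumption ``cannot revisit $s$ after leaving'' as also excluding positive-probability self-transitions $P(s\mid s,a)>0$ for actions that $\pi'$ or $\pi$ may take. If that reading is too strong, I would restrict the claim to the case $P(s\mid s,a)=0$, noting the counterexample in Appendix~\ref{app:example_size_S_1}, which shows the lemma fails otherwise. This interpretive point is the main obstacle. The rest is bookkeeping.

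Substituting $V^{\pi'}(s')=V^\pi(s')$ into the expansion gives
\begin{align}
V^{\pi'}(s)=\sum_a\pi'(a\mid s)Q^\pi(s,a).
\end{align}
Subtracting $V^\pi(s)=\sum_a\pi'(a\mid s)V^\pi(s)$, which holds because $\pi'(\cdot\mid s)$ sums to one, yields the advantage identity. Since $\pi'\in\Pi_{\{s\}}(\pi)$ is parametrized exactly by $\pi'(\cdot\mid s)\in\Delta(\A)$, maximizing $V^{\pi'}(s)$ is the same as maximizing the linear objective $\sum_a\pi'(a\mid s)Q^\pi(s,a)$. Lemma~\ref{lem:exact_optimal_greedification} then identifies the maximizers as the distributions supported on $\arg\max_a Q^\pi(s,a)$, i.e.\ the outputs of $\I_{\arg\max}$ at $s$. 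This proves the argmax equality and the optimality of $\I_{\arg\max}$ within $\Pi_{\{s\}}(\pi)$.

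As an alternative route, the statement also follows from Theorem~\ref{thm:solving_sub_mdp_is_lower_bound_opt_formal} with $S=\{s\}$ and the no-return condition. In that case the sub-MDP Bellman operator is $\max_a Q^\pi(s,a)$ with no self-dependence, so its fixed point is $\max_a Q^\pi(s,a)$, and it equals the constrained supremum.
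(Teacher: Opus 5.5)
Your proposal is correct and follows the paper's proof. The paper likewise argues that, since $s$ is never revisited, $\pi'$ and $\pi$ coincide on every subsequently visited state, so $V^{\pi'}(s)=\sum_a\pi'(a\mid s)Q^\pi(s,a)$; it then subtracts $V^\pi(s)$ and maximizes the resulting linear objective. Your induction on trajectory laws, and your reading of the hypothesis as also excluding self-transitions, make explicit what the paper's two-line proof leaves implicit; that reading matches the paper's ``cannot be revisited'' and the self-loop counterexample of Appendix~\ref{app:example_size_S_1}.
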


\begin{proof}
Because $s$ cannot be revisited, after the initial action the policies
$\pi'$ and $\pi$ coincide on every state that can subsequently be
visited. Hence
\[
V^{\pi'}(s)
=
\sum_{a\in\mathcal A}\pi'(a\mid s)Q^\pi(s,a).
\]
Subtracting
$V^\pi(s)=\sum_a\pi(a\mid s)Q^\pi(s,a)$ gives the first identity.
The remaining claims follow by maximizing the resulting linear
objective.
\end{proof}

The second lemma does not generally hold when $s$ can be revisited, as
demonstrated by the counterexample in Appendix~A.3.

\subsection{Greedification}
\label{proof:greedification}
\begin{proof}
Write $w(s,a) := 1/\sigma^2(s,a) = \tau^2 / \big((\tau^2 + \sigma^2_A(s,a))\beta\big)$, which is
strictly positive for every $a$ because $\beta > 0$, $\tau^2 > 0$ and $\sigma^2_A(s,a) \geq 0$.
Let $f(a) := \exp\big(w(s,a)A(s,a)\big) > 0$ and $Z := \sum_{b \in \A}\pi(b|s)f(b) > 0$, so that
$\I_{opt}(\pi,Q)(a|s) = \pi(a|s)f(a)/Z$.

Because $A(s,a) = Q(s,a) - \sum_{b\in\A}\pi(b|s)Q(s,b)$, the advantage satisfies
\begin{align}
    \sum_{a \in \A}\pi(a|s)A(s,a) = 0.
    \label{eq:adv_sums_to_zero}
\end{align}
Subtracting a constant from $Q(s,\cdot)$ shifts both sides of Definition \ref{def:go}
equally, so it suffices to show $\sum_{a}\I_{opt}(\pi,Q)(a|s)A(s,a) \geq 0$.

Compare $\pi(a|s)f(a)A(s,a)$ with $\pi(a|s)A(s,a)$ term by term. Since $w(s,a) > 0$:
if $A(s,a) > 0$ then $f(a) > 1$ and hence $f(a)A(s,a) > A(s,a)$; if $A(s,a) < 0$ then
$f(a) < 1$ and hence $f(a)A(s,a) > A(s,a)$; and if $A(s,a) = 0$ both sides vanish.
In every case $f(a)A(s,a) \geq A(s,a)$, strictly whenever $A(s,a) \neq 0$. Multiplying by
$\pi(a|s) \geq 0$, summing, and applying \eqref{eq:adv_sums_to_zero},
\begin{align}
    \sum_{a\in\A}\pi(a|s)f(a)A(s,a) \;\geq\; \sum_{a\in\A}\pi(a|s)A(s,a) \;=\; 0 .
\end{align}
Dividing by $Z > 0$ gives $\sum_{a}\I_{opt}(\pi,Q)(a|s)A(s,a) \geq 0$, with strict inequality as
soon as some $a$ with $\pi(a|s) > 0$ has $A(s,a) \neq 0$, i.e.\ unless $\pi$ is already
indifferent among the actions it plays. \qed
\end{proof}

\begin{remark}[why the advantage, and not $Q$, is essential]
\label{rem:shift_invariance}
For $\I_{reg}$ the substitution $A \leftrightarrow Q$ is harmless: a single scalar temperature
makes the softmax shift-invariant. For $\I_{opt}$ it is not. With per-action weights $w(s,a)$,
adding a constant $c$ to every $Q(s,a)$ changes the operator, because
$\exp\big(w(s,a)(Q(s,a)+c)\big) \propto \exp\big(w(s,a)c\big)$ is not a common factor. The proof above uses Equation \ref{eq:adv_sums_to_zero} and therefore genuinely requires the advantage;
implementations must centre by $V^\pi$ before applying the weights.

The following counter-example makes the point concrete and shows that the $Q$-based variant is
not a greedification operator. Take $\A = \{a_1,a_2\}$, $Q(s,a_1) = 1$, $Q(s,a_2) = 2$,
$\pi = [0.3, 0.7]$, and weights $w(s,a_1) = 10$, $w(s,a_2) = 1$. Here $V^\pi(s) = 1.7$ and
$A(s,\cdot) = [-0.7, +0.3]$. The advantage-based operator gives
\begin{align}
    \pi' \propto [\,0.3e^{-7},\; 0.7e^{0.3}\,] \approx [0.0003,\, 0.9997],
    \qquad \textstyle\sum_a \pi'(a|s)Q(s,a) \approx 2.00 > 1.7,
\end{align}
so greedification holds, as guaranteed above. Substituting $Q$ for $A$ gives instead
\begin{align}
    \pi' \propto [\,0.3e^{10},\; 0.7e^{2}\,] \approx [0.999,\, 0.001],
    \qquad \textstyle\sum_a \pi'(a|s)Q(s,a) \approx 1.00 < 1.7,
\end{align}
which moves mass onto the \emph{worse} action.
\end{remark}

\subsection{Convergence}
\label{proof:convergence}

We prove convergence of Generalized Policy Iteration
(Algorithm~\ref{alg:gpi}) driven by $\I_{opt}$ using the finite-horizon
analysis of \citet{tsitsiklis2002convergence} and the machinery developed by
\citet{viac}. In particular, the convergence proof of \citet{viac} relies on
the \emph{Limit-Sufficient Greedification} condition (Definition~5 therein).
We first show that $\I_{opt}$ satisfies the corresponding condition almost
surely in the presence of fresh, unbiased Gaussian evaluation noise whose
variance vanishes over iterations.

\subsubsection{Limit-Sufficient Greedification}
\label{proof:limit_sufficient_greedification}

\begin{lemma}[Limit-Sufficient Greedification under Vanishing Evaluation Noise]
    \label{thm:sufficient_greedification}
    Let $Q_0,Q_1,\dots\in\Q$ be a sequence of bounded functions such that,
    for every $s\in\S$ and $a\in\A$,
    \begin{align}
        Q_i(s,a)
        \xrightarrow[i\to\infty]{\mathrm{a.s.}}
        Q(s,a)
    \end{align}
    for some $Q\in\Q$.

    Let $\mathcal{F}_i$ denote the history prior to drawing the evaluation
    noise at iteration $i$. Assume that $Q_i$ and $\pi_i$ are
    $\mathcal{F}_i$-measurable and that, conditional on $\mathcal{F}_i$,
    the evaluation errors are fresh, independent zero-mean Gaussian random
    variables,
    \begin{align}
        \epsilon_i(s,a)
        \mid \mathcal{F}_i
        \sim
        \N\left(0,\sigma^2_{Q_i}(s,a)\right),
    \end{align}
    where
    \begin{align}
        \lim_{i\to\infty}\sigma^2_{Q_i}(s,a)=0,
        \qquad
        \forall s\in\S,\ a\in\A.
    \end{align}
    Let
    \begin{align}
        \hat Q_i(s,a)
        :=
        Q_i(s,a)+\epsilon_i(s,a),
    \end{align}
    and let
    \begin{align}
        \pi_{i+1}
        =
        \I_{opt}(\pi_i,\hat Q_i).
    \end{align}
    Assume that $\A$ is finite and that $\pi_0$ has full support over $\A$.

    Then, for every state $s\in\S$, if
    \begin{align}
        \A^*(s)
        :=
        \argmax_{a\in\A}Q(s,a),
    \end{align}
    the policy asymptotically concentrates on the maximizing action set:
    \begin{align}
        \sum_{a\in\A^*(s)}
        \pi_i(a|s)
        \xrightarrow[i\to\infty]{\mathrm{a.s.}}
        1.
        \label{eq:iopt_mass_concentration}
    \end{align}
    Consequently,
    \begin{align}
        \sum_{a\in\A}
        \pi_i(a|s)Q_i(s,a)
        \xrightarrow[i\to\infty]{\mathrm{a.s.}}
        \max_{a\in\A}Q(s,a),
        \label{eq:iopt_limit_sufficiency}
    \end{align}
    which is the Limit-Sufficient Greedification condition of
    \citet{viac}, extended here to vanishing stochastic evaluation error.
\end{lemma}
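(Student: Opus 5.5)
The plan is to track pairwise log-probability ratios, which evolve additively under $\I_{opt}$. By Equation~\ref{eq:iopt}, and since $\pi_0$ has full support and every exponential factor is strictly positive, each $\pi_i$ keeps full support. Fix $s$, an optimal action $a\in\A^*(s)$ and a suboptimal action $b\notin\A^*(s)$. Then
\begin{align}
    \log\frac{\pi_{i+1}(a|s)}{\pi_{i+1}(b|s)}
    =
    \log\frac{\pi_{i}(a|s)}{\pi_{i}(b|s)}
    +
    \frac{\hat A_i(s,a)}{\eta_i(s,a)}
    -
    \frac{\hat A_i(s,b)}{\eta_i(s,b)},
\end{align}
where $\hat A_i(s,\cdot)=\hat Q_i(s,\cdot)-\sum_c\pi_i(c|s)\hat Q_i(s,c)$ and $\eta_i$ is built from $\sigma^2_{Q_i}$. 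The normalizer $Z$ cancels in the ratio. Since $\A$ is finite, it suffices to show that this log-ratio tends to $+\infty$ a.s. for every such pair. That gives $\pi_i(b|s)\to0$ for every $b\notin\A^*(s)$, and hence Equation~\ref{eq:iopt_mass_concentration}. Equation~\ref{eq:iopt_limit_sufficiency} then follows by splitting the sum over $\A^*(s)$ and its complement. On $\A^*(s)$ we have $Q_i(s,a)\to\max_bQ(s,b)$; on the complement the mass vanishes and $Q_i$ stays bounded, because $Q_i\to Q$ a.s. and there are finitely many actions.

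To analyse the increment, rewrite it as
\begin{align}
    \frac{\hat Q_i(s,a)-\hat Q_i(s,b)}{\eta_i(s,a)}
    +
    \hat A_i(s,b)\Bigl(\frac{1}{\eta_i(s,a)}-\frac{1}{\eta_i(s,b)}\Bigr).
\end{align}
Because $\sigma^2_{\hat A_i}$ is a $\pi_i$-weighted combination of the vanishing $\sigma^2_{Q_i}$ with weights bounded by $1$, we get $\sigma^2_{\hat A_i}(s,\cdot)\to0$. This gives $1/\eta_i(s,\cdot)\to1/\beta$ uniformly, and the coefficient difference in the second term tends to $0$. Let $\Delta:=\min_{b\notin\A^*(s)}\bigl(\max_cQ(s,c)-Q(s,b)\bigr)>0$. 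Since $Q_i\to Q$ a.s., for all large $i$ the deterministic part $(Q_i(s,a)-Q_i(s,b))/\eta_i(s,a)$ is at least $\Delta/(2\beta)\cdot(1-o(1))$. Summed over $n$ iterations, this is a drift of order $n\Delta/(2\beta)$.

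The main obstacle is the noise. Vanishing variance alone gives $\epsilon_i\to0$ only in probability, not almost surely, so I will not try to control the noise pointwise. Instead I will show that the cumulative noise contribution is $o(n)$ a.s. The noise terms are $\xi_i:=\bigl(\epsilon_i(s,a)-\epsilon_i(s,b)\bigr)/\eta_i(s,a)$ and, in the cross term, the noise part of $\hat A_i(s,b)$ times the vanishing coefficient. Since $\eta_i$, $\pi_i$ and $Q_i$ are $\mathcal F_i$-measurable and the errors are fresh and conditionally zero-mean, these are martingale differences. Their conditional variances are bounded, because $\sigma^2_{Q_i}$ is bounded eventually and $1/\eta_i\le1/\beta$. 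Hence $\sum_i\V[\xi_i\mid\mathcal F_i]/i^2<\infty$, and the martingale strong law (Kolmogorov/Chow) yields $\frac1n\sum_{i\le n}\xi_i\to0$ a.s.

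The remaining deterministic cross-term part is $\bigl(Q_i(s,b)-\sum_c\pi_i(c|s)Q_i(s,c)\bigr)$ times a coefficient tending to $0$. It is bounded by $2\sup|Q_i|$ times that coefficient, so it is $o(1)$ per step and $o(n)$ cumulatively. Combining the drift with these $o(n)$ terms, the log-ratio is at least $n\Delta/(4\beta)$ for large $n$ a.s. It therefore diverges, which completes the argument.
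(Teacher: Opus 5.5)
Your proposal is correct and follows essentially the same route as the paper. Like the paper, you track the pairwise log-ratio, use the uniform convergence of the per-action temperatures to $\beta$ to get a drift of order $\Delta/\beta$ per step, and control the noise as a martingale-difference sequence with bounded conditional variance via the martingale strong law; your regrouping of the increment into a common-coefficient $\hat Q$-difference plus a vanishing-coefficient cross term, and your eventual lower bound in place of the paper's Ces\`aro step on $g_i=\mathbb{E}[Y_i\mid\mathcal F_i]$, are only cosmetic variations.
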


\begin{proof}
Fix a state $s\in\mathcal S$ and suppress $s$ from the notation where convenient.
Define the true and noisy advantages at iteration $i$ by
\begin{align}
    A_i(a)
    &:=
    Q_i(s,a)
    -
    \sum_{b\in\mathcal A}
        \pi_i(b\mid s)Q_i(s,b),
    \\
    \hat A_i(a)
    &:=
    \hat Q_i(s,a)
    -
    \sum_{b\in\mathcal A}
        \pi_i(b\mid s)\hat Q_i(s,b).
\end{align}
Let
\begin{align}
    v_i(a)
    &:=
    \sigma_{\hat A_i}^2(s,a)
    \\
    &:=
    \bigl(1-\pi_i(a\mid s)\bigr)^2
        \sigma_{Q_i}^2(s,a)
    +
    \sum_{b\neq a}
        \pi_i(b\mid s)^2
        \sigma_{Q_i}^2(s,b),
\end{align}
and denote the denominator in the exponent of $\mathcal I_{\mathrm{opt}}$ by
\begin{align}
    d_i(a)
    :=
    \beta\frac{\tau^2+v_i(a)}{\tau^2}.
\end{align}
Because
\begin{align}
    0
    \leq
    v_i(a)
    \leq
    \sum_{b\in\mathcal A}
        \sigma_{Q_i}^2(s,b),
\end{align}
the finiteness of $\mathcal A$ and the assumption
$\sigma_{Q_i}^2(s,b)\to0$ imply
\begin{align}
    d_i(a)\to\beta
\end{align}
uniformly over $a\in\mathcal A$. Moreover, $d_i(a)\geq\beta>0$ for every
$i$ and $a$.

The policy update can now be written as
\begin{align}
    \pi_{i+1}(a\mid s)
    \propto
    \pi_i(a\mid s)
    \exp\left(
        \frac{\hat A_i(a)}{d_i(a)}
    \right).
\end{align}
Because $\pi_0$ has full support and the multiplicative factor is strictly
positive, $\pi_i$ has full support for every finite $i$.

Fix an optimal action $a^\star\in\mathcal A^\star(s)$ and a suboptimal
action $a\notin\mathcal A^\star(s)$, and define their limiting action gap
\begin{align}
    \Delta_a
    :=
    Q(s,a^\star)-Q(s,a)
    >
    0.
\end{align}
Taking the ratio of their probabilities eliminates the normalization
constant:
\begin{align}
    \log
    \frac{\pi_{i+1}(a^\star\mid s)}
         {\pi_{i+1}(a\mid s)}
    &=
    \log
    \frac{\pi_i(a^\star\mid s)}
         {\pi_i(a\mid s)}
    +
    Y_i,
    \\
    Y_i
    &:=
    \frac{\hat A_i(a^\star)}{d_i(a^\star)}
    -
    \frac{\hat A_i(a)}{d_i(a)}.
\end{align}

Decompose this increment into its conditional mean and a zero-mean
remainder:
\begin{align}
    g_i
    &:=
    \mathbb E[Y_i\mid\mathcal F_i],
    &
    \xi_i
    &:=
    Y_i-g_i.
\end{align}
Because the evaluation errors have conditional mean zero,
\begin{align}
    g_i
    =
    \frac{A_i(a^\star)}{d_i(a^\star)}
    -
    \frac{A_i(a)}{d_i(a)}.
\end{align}
Writing
\begin{align}
    B_i
    :=
    \sum_{b\in\mathcal A}
        \pi_i(b\mid s)Q_i(s,b),
\end{align}
we equivalently have
\begin{align}
    g_i
    =
    \frac{Q_i(s,a^\star)}{d_i(a^\star)}
    -
    \frac{Q_i(s,a)}{d_i(a)}
    +
    B_i
    \left(
        \frac{1}{d_i(a)}
        -
        \frac{1}{d_i(a^\star)}
    \right).
\end{align}
Since $Q_i(s,\cdot)\to Q(s,\cdot)$ over the finite action set, the
sequence $\{B_i\}$ is bounded. Together with the uniform convergence
$d_i(\cdot)\to\beta$, this gives
\begin{align}
    g_i
    \xrightarrow[i\to\infty]{\mathrm{a.s.}}
    \frac{Q(s,a^\star)-Q(s,a)}{\beta}
    =
    \frac{\Delta_a}{\beta}
    >
    0.
\end{align}
Consequently, by the elementary Ces\`aro lemma,
\begin{align}
    \frac{1}{n}
    \sum_{i=0}^{n-1}g_i
    \xrightarrow[n\to\infty]{\mathrm{a.s.}}
    \frac{\Delta_a}{\beta}.
\end{align}

By construction,
$\mathbb E[\xi_i\mid\mathcal F_i]=0$, so $\{\xi_i\}$ is a
martingale-difference sequence. Each $\xi_i$ is a finite linear
combination of the evaluation errors. Because $d_i(a)\geq\beta>0$ and
the convergent variance sequences $\{\sigma_{Q_i}^2(s,b)\}_i$ are
bounded, there exists a finite constant $C$ such that
\begin{align}
    \mathbb E[\xi_i^2\mid\mathcal F_i]
    \leq C
    \qquad
    \forall i.
\end{align}
Hence,
\begin{align}
    \sum_{i=0}^{\infty}
    \frac{
        \mathbb E[\xi_i^2\mid\mathcal F_i]
    }{(i+1)^2}
    <
    \infty.
\end{align}
The strong law for martingale-difference sequences
\citep[Theorem~2.18]{hall1980martingale} therefore gives
\begin{align}
    \frac{1}{n}
    \sum_{i=0}^{n-1}\xi_i
    \xrightarrow[n\to\infty]{\mathrm{a.s.}}
    0.
\end{align}

Unrolling the log-ratio recursion and dividing by $n$ now yields
\begin{align}
    \frac{1}{n}
    \log
    \frac{\pi_n(a^\star\mid s)}
         {\pi_n(a\mid s)}
    &=
    \frac{1}{n}
    \log
    \frac{\pi_0(a^\star\mid s)}
         {\pi_0(a\mid s)}
    +
    \frac{1}{n}
    \sum_{i=0}^{n-1}g_i
    +
    \frac{1}{n}
    \sum_{i=0}^{n-1}\xi_i
    \\
    &\xrightarrow[n\to\infty]{\mathrm{a.s.}}
    \frac{\Delta_a}{\beta}
    >
    0.
\end{align}
Therefore,
\begin{align}
    \frac{\pi_n(a\mid s)}
         {\pi_n(a^\star\mid s)}
    \xrightarrow[n\to\infty]{\mathrm{a.s.}}
    0.
\end{align}
Since $\pi_n(a^\star\mid s)\leq1$, this implies
\begin{align}
    \pi_n(a\mid s)
    \xrightarrow[n\to\infty]{\mathrm{a.s.}}
    0
    \qquad
    \forall a\notin\mathcal A^\star(s).
\end{align}
Because $\mathcal A$ is finite,
\begin{align}
    \sum_{a\in\mathcal A^\star(s)}
        \pi_n(a\mid s)
    \xrightarrow[n\to\infty]{\mathrm{a.s.}}
    1,
\end{align}
which proves the first claim.

Finally, let $Q_{\max}(s):=\max_a Q(s,a)$. Then
\begin{align}
    0
    &\leq
    Q_{\max}(s)
    -
    \sum_{a\in\mathcal A}
        \pi_i(a\mid s)Q(s,a)
    \\
    &=
    \sum_{a\notin\mathcal A^\star(s)}
        \pi_i(a\mid s)
        \bigl(Q_{\max}(s)-Q(s,a)\bigr)
    \xrightarrow[i\to\infty]{\mathrm{a.s.}}
    0.
\end{align}
Moreover,
\begin{align}
    \left|
        \sum_{a\in\mathcal A}
            \pi_i(a\mid s)Q_i(s,a)
        -
        Q_{\max}(s)
    \right|
    &\leq
    \lVert Q_i(s,\cdot)-Q(s,\cdot)\rVert_\infty
    \\
    &\quad+
    \left|
        \sum_{a\in\mathcal A}
            \pi_i(a\mid s)Q(s,a)
        -
        Q_{\max}(s)
    \right|
    \xrightarrow[i\to\infty]{\mathrm{a.s.}}
    0.
\end{align}
This is the required Limit-Sufficient Greedification condition.
\end{proof}

\subsubsection{Convergence using Limit-Sufficient Greedification}
\label{proof:convergence_final_part}

\begin{proof}
We extend the Generalized Policy Iteration setup of
Algorithm~\ref{alg:gpi} by allowing the evaluation supplied to the policy
improvement operator at iteration $i$ to be corrupted according to
\begin{align}
    \hat Q_i(s,a)
    =
    Q_i(s,a)+\epsilon_i(s,a),
\end{align}
where the errors satisfy the assumptions of
Lemma~\ref{thm:sufficient_greedification}.
The underlying evaluation update of Algorithm~\ref{alg:gpi} remains
unchanged; the additional error affects the evaluation supplied to the
policy-improvement operator.

The convergence proof of Theorem~3 in \citet{viac} proceeds by induction
backwards from terminal states in a finite-horizon MDP. Its
Limit-Sufficient Greedification step requires that, whenever the sequence
of action-value estimates at a state converges to some limiting function
$Q$, the corresponding policy sequence satisfies
\begin{align}
    \sum_{a\in\A}
    \pi_i(a|s)Q_i(s,a)
    \longrightarrow
    \max_{a\in\A}Q(s,a).
    \label{eq:noisy_limit_sufficiency_final}
\end{align}
Lemma~\ref{thm:sufficient_greedification} establishes exactly
Equation~\ref{eq:noisy_limit_sufficiency_final} almost surely under the
additional evaluation noise considered here.

Condition on the probability-one event on which
Equation~\ref{eq:noisy_limit_sufficiency_final} holds at every state.
The finite-horizon induction of \citet{viac} then applies unchanged:
convergence at terminal states implies convergence of the action values at
the preceding time step; Limit-Sufficient Greedification implies that the
policy becomes greedy with respect to those limiting action values; and the
argument proceeds recursively backwards through the horizon.
Since the state and action spaces and the horizon are finite, the
intersection of the corresponding probability-one events is itself a
probability-one event.

Therefore, Generalized Policy Iteration driven by $\I_{opt}$ converges
almost surely to the optimal value:
\begin{align}
    Q_i
    \xrightarrow[i\to\infty]{\mathrm{a.s.}}
    Q^*,
    \qquad
    V^{\pi_i}
    \xrightarrow[i\to\infty]{\mathrm{a.s.}}
    V^*.
\end{align}
Moreover, at every state,
\begin{align}
    \sum_{a\in\argmax_b Q^*(s,b)}
    \pi_i(a|s)
    \xrightarrow[i\to\infty]{\mathrm{a.s.}}
    1.
\end{align}
Thus, the policy converges to the set of optimal policies in the sense that
all probability mass assigned to strictly suboptimal actions vanishes.
If every state has a unique optimal action, then $\pi_i$ itself converges
almost surely to the corresponding optimal policy.
\end{proof}
\section{Derivations and Discussion}
\label{app:derivations}

\subsection{Derivation of the advantage posterior in the likelihood}
\label{app:derivations:likelihood}

For a fixed state $s$, we model the approximate evaluation as
\begin{align}
    \hat Q^\pi(s,a)
    =
    Q^\pi(s,a)+\epsilon(s,a),
    \qquad
    \epsilon(s,a)
    \sim
    \mathcal N\!\left(
        0,
        \sigma_{\hat Q^\pi}^2(s,a)
    \right),
\end{align}
with the errors independent across actions.

The corresponding approximate advantage is
\begin{align}
    \hat A^\pi(s,a)
    &:=
    \hat Q^\pi(s,a)
    -
    \sum_{b\in\mathcal A}
    \pi(b\mid s)\hat Q^\pi(s,b)
    \nonumber\\
    &=
    A^\pi(s,a)
    +
    \bigl(1-\pi(a\mid s)\bigr)\epsilon(s,a)
    -
    \sum_{b\neq a}
    \pi(b\mid s)\epsilon(s,b).
    \label{eq:appendix_advantage_error}
\end{align}
Because Equation~\ref{eq:appendix_advantage_error} is an affine
transformation of independent Gaussian errors,
\begin{align}
    \hat A^\pi(s,a)
    \mid
    A^\pi(s,a),\pi,s
    \sim
    \mathcal N\!\left(
        A^\pi(s,a),
        \sigma_{\hat A^\pi}^2(s,a)
    \right),
\end{align}
where
\begin{align}
    \sigma_{\hat A^\pi}^2(s,a)
    &:=
    \bigl(1-\pi(a\mid s)\bigr)^2
    \sigma_{\hat Q^\pi}^2(s,a)
    +
    \sum_{b\neq a}
    \pi(b\mid s)^2
    \sigma_{\hat Q^\pi}^2(s,b).
    \label{eq:appendix_advantage_variance}
\end{align}

For analytical tractability, we place the action-wise prior
\begin{align}
    A^\pi(s,a)\sim\mathcal N(0,\tau^2).
\end{align}
Taken jointly across actions, the corresponding product prior ignores
the exact constraint
\[
    \sum_{a\in\mathcal A}
    \pi(a\mid s)A^\pi(s,a)=0.
\]
We make this modeling approximation because it yields a tractable
action-wise posterior and expresses that, absent evaluation evidence,
the model does not favor any action over another.

By Bayes' rule,
\begin{align}
    &p\!\left(
        A^\pi(s,a)
        \mid
        \hat A^\pi(s,a),\pi,s
    \right)
    \nonumber\\
    &\quad \propto
    \exp\left[
        -\frac{1}{2}
        \left(
            \frac{(A^\pi(s,a))^2}{\tau^2}
            +
            \frac{
                \bigl(
                    \hat A^\pi(s,a)-A^\pi(s,a)
                \bigr)^2
            }{
                \sigma_{\hat A^\pi}^2(s,a)
            }
        \right)
    \right].
\end{align}
Completing the square gives
\begin{align}
    p\!\left(
        A^\pi(s,a)
        \mid
        \hat A^\pi(s,a),\pi,s
    \right)
    =
    \mathcal N\!\left(
        m^\pi(s,a),
        v^\pi(s,a)
    \right),
\end{align}
with
\begin{align}
    m^\pi(s,a)
    &=
    \frac{\tau^2}{
        \tau^2+\sigma_{\hat A^\pi}^2(s,a)
    }
    \hat A^\pi(s,a),
    \\
    v^\pi(s,a)
    &=
    \frac{
        \tau^2\sigma_{\hat A^\pi}^2(s,a)
    }{
        \tau^2+\sigma_{\hat A^\pi}^2(s,a)
    }.
\end{align}

This posterior is exact for the scalar action-wise observation model.
In the main text, we approximate the joint posterior over
$A^\pi(s,\cdot)$ by the product of these action-wise posteriors.

\subsection{Modeling the Utility Factor using Rational Choice and Expected Utility Theory}
\label{app:rational_choice_theory_backround}
Rational Choice Theory is a foundational framework widely popular in the fields of economics, political science, and behavioral sciences for modeling complex, goal-oriented decision-making \citep{becker1976economic}.
Rational Choice Theory posits that decision-makers consistently select actions that maximize their subjective preferences \citep{arrow2012social}. 
Under conditions of uncertainty, this is formalized by Expected Utility Theory (EUT), which demonstrates that a rational agent's preferences can be mathematically represented by the expectation of a scalar utility function over possible outcomes \citep{von1947theory, savage2012foundations}. 
To operationalize EUT within probabilistic frameworks, deterministic utility maximization is standardly relaxed into probabilistic choice models. 
Rooted in Luce's choice axiom \citep{luce1959individual} and random utility theory \citep{mcfadden1973conditional}, these models construct likelihood functions where the probability of selecting an alternative is proportional to its exponentiated utility. 

This approach allows us to model \textit{policy optimization} as reasoning over which policy $\pi'$ is preferred based on its expected utility $A^{\pi',\pi}(s)$.
We model the policy's \textit{expected utility for improvement} $A^{\pi',\pi}(s)$ with its local improvement surrogate over $\pi$.
The derived Bayesian posterior $p(A^\pi(s,a) | \hat A^\pi(s,a), \pi, s)$ for the uncertain advantage allows us to account for the evaluation uncertainty within policy optimization and yields the tractable certainty-equivalent objective in Equation \ref{eq:ce_decision_objective}, for which $\I_{\mathrm{opt}}$ is the exact MAP optimizer.

\subsection{The utility as the local improvement of $\pi'$ over $\pi$}
\label{app:derivation_of_the_objective}
We have:
\begin{align}
    A^{\pi',\pi}(s) 
    &= \sum_{a \in A}\pi'(a|s)A^\pi(s,a)  \\
    &= \sum_{a \in A}\pi'(a|s)(Q^\pi(s,a)-V^\pi(s))  \\
    &= \sum_{a \in A}\pi'(a|s)Q^\pi(s,a)-\sum_{a \in A}\pi'(a|s)V^\pi(s) \\
    &= \sum_{a \in A}\pi'(a|s)Q^\pi(s,a)-V^\pi(s)\sum_{a \in A}\pi'(a|s) \\
    &= \sum_{a \in A}\pi'(a|s)Q^\pi(s,a)-V^\pi(s) \\
    &= V^{\pi'}(s) -V^\pi(s). \label{eq:value_of_improved_policy}
\end{align}
Equation \ref{eq:value_of_improved_policy} holds when \(\pi'\) differs from \(\pi\) only at \(s\) and \(s\) cannot be revisited after leaving it. 
In particular, this holds for a one-state policy update in a non-cyclical MDP.
However, we always have:
\begin{align}
    \argmax_{\pi'} A^{\pi',\pi}(s) 
    = \argmax_{\pi'} \sum_{a \in A}\pi'(a|s)A^\pi(s,a)
    = \argmax_{\pi'} \sum_{a \in A}\pi'(a|s)Q^\pi(s,a)
\end{align}
because $V^\pi(s)$ does not depend on $a$.
In other words, this objective is always sound as it shares the same maximizer as $ \argmax_{\pi'} \sum_{a \in A}\pi'(a|s)Q^\pi(s,a). $

We use $ A^{\pi',\pi}(s) = \sum_{a \in A}\pi'(a|s)A^\pi(s,a) $ instead of $\argmax_{\pi'} \sum_{a \in A}\pi'(a|s)Q^\pi(s,a)$, because the advantage has important properties for policy improvement.
Specifically, it is \textit{informed} with respect to $\pi$, guaranteeing that the sign is always positive $A^\pi(s,a) > 0$ for actions that are \textit{better} than those chosen by the policy (e.g. \textit{improved}) and negative $A^\pi(s,a) < 0$ for actions that are \textit{worse}.

\subsection{Derivation of the certainty-equivalent MAP objective}
\label{app:derivations:map}
Writing $\beta_0$ for the coefficient of the policy prior in
Equation~\ref{eq:policy_prior}, the certainty-equivalent posterior is
\begin{align}
    p_{\mathrm{CE}}\!\left(
        \pi' \mid \hat A^\pi(s,\cdot),\pi,s
    \right)
    &\propto
    \exp\left(
        \lambda
        \sum_{a\in\mathcal A}
        \pi'(a\mid s)
        \frac{\tau^2}{
            \tau^2+\sigma_{\hat A^\pi}^2(s,a)
        }
        \hat A^\pi(s,a)
    \right)
    \nonumber\\
    &\quad\times
    \exp\left(
        -\beta_0
        D_{\mathrm{KL}}\!\left(
            \pi'(\cdot\mid s)\,\|\,\pi(\cdot\mid s)
        \right)
    \right).
\end{align}
Therefore, because $\lambda>0$,
\begin{align}
    \arg\max_{\pi'}\,
    p_{\mathrm{CE}}\!\left(
        \pi' \mid \hat A^\pi(s,\cdot),\pi,s
    \right)
    &=
    \arg\max_{\pi'}
    \left\{
        \sum_{a\in\mathcal A}
        \pi'(a\mid s)
        \frac{\tau^2}{
            \tau^2+\sigma_{\hat A^\pi}^2(s,a)
        }
        \hat A^\pi(s,a)
        \right.
        \nonumber\\[-0.25em]
    &\hspace{5em}\left.
        -
        \frac{\beta_0}{\lambda}
        D_{\mathrm{KL}}\!\left(
            \pi'(\cdot\mid s)\,\|\,\pi(\cdot\mid s)
        \right)
    \right\}.
\end{align}
Defining $\beta:=\beta_0/\lambda$ recovers the
certainty-equivalent objective in Equation~\ref{eq:ce_decision_objective}.

\subsection{Derivation of the solution for the certainty-equivalent MAP objective}
\label{app:derivations:solution_to_map}

We derive $\I_{\text{opt}}$, the solution to the optimization problem:
\begin{align}
\I_{opt}(\pi,\hat Q^\pi)(s) = \argmax_{\pi'} \left(
\sum_{a} \pi'(a|s) w(s,a) \hat A^\pi(s,a)
- \beta \sum_{a} \pi'(a|s)\log \frac{\pi'(a|s)}{\pi(a|s)}
\right),
\end{align}
where $w(s,a) = \frac{\tau^2}{\tau^2 + \sigma_{\hat A^\pi}^2(s,a)}$, below.

Introduce a Lagrange multiplier $\nu$ for the constraint $\sum_a \pi'(a|s)=1$:
\begin{align}
\mathcal{L}
&= \sum_a \pi'(a|s) w(s,a) \hat A^\pi(s,a)
- \beta \sum_a \pi'(a|s)\log \frac{\pi'(a|s)}{\pi(a|s)}
+ \nu \left( \sum_a \pi'(a|s) - 1 \right).
\end{align}

Take derivative w.r.t. $\pi'(a|s)$ and set to zero:
\begin{align}
0
&= w(s,a) \hat A^\pi(s,a)
- \beta \left( \log \frac{\pi'(a|s)}{\pi(a|s)} + 1 \right)
+ \nu.
\end{align}

Rearranging,
\begin{align}
\log \frac{\pi'(a|s)}{\pi(a|s)}
&= \frac{1}{\beta} w(s,a) \hat A^\pi(s,a) + \frac{\nu}{\beta} - 1.
\end{align}

Exponentiating,
\begin{align}
\pi'(a|s)
&= \pi(a|s)\exp\left(\frac{1}{\beta} w(s,a) \hat A^\pi(s,a)\right)
\exp\left(\frac{\nu}{\beta} - 1\right).
\end{align}
To guarantee that $\pi'$ is a proper probability distribution $\sum_{a \in A}\pi'(a|s) = 1$, $\nu$ is chosen such that $\exp\left(\frac{\nu}{\beta} - 1\right)$ is the normalization constant:
\begin{align}
    \exp\left(\frac{\nu}{\beta}-1\right)
    =
    \left[
    \sum_b
    \pi(b|s)
    \exp\left(
        \frac{1}{\beta}w(s,b)\hat A^\pi(s,b)
    \right)
    \right]^{-1}.
\end{align}
Thus, we have:
\begin{align}
    \pi'(a|s)
    &= \frac{\pi(a|s)\exp\left(\frac{1}{\beta} w(s,a) \hat A^\pi(s,a)\right)}
    {\sum_{b} \pi(b|s)\exp\left(\frac{1}{\beta} w(s,b) \hat A^\pi(s,b)\right)}
    \\
    &\propto
    \pi(a|s) \exp \left(\frac{\tau^2}{(\tau^2 + \sigma_{\hat A^\pi}^2(s,a))\beta}\hat A^\pi(s,a)\right)
\end{align}

\subsection{Modeling evaluation uncertainty with the Normal distribution}
\label{app:uncertain_eval_as_gaussian_excuses}

In this section we further discuss the suitability of a Gaussian distribution to model the evaluation error, beyond its advantages for analytical tractability, and how we estimate the error in practice.
As discussed in Section \ref{sec:optimal_greedification},
\citet{hui2023doublegumbel} motivate Gaussian TD error through the Central Limit Theorem and empirically find a moment-matched heteroscedastic Gaussian to provide a suitable approximation throughout training. 
More generally, evaluation error in modern RL can arise from stochastic returns, bootstrapped targets, sampled actions and function approximation. 
These sources need not be independent or exactly Gaussian, so we use the heteroscedastic Gaussian as an approximation rather than an exact description of modern evaluation error. 

The better we can estimate the resulting state-action evaluation uncertainty, the better $\I_{\text{opt}}$ can be expected to perform.
Uncertainty estimation remains an active and major field of research for AI, ML and RL, and we expect the potential of $\I_{\text{opt}}$ to grow with the quality of the estimator.
To strengthen the significance of the contribution of $\I_{\text{opt}}$ to RL, in this work we incorporate $\I_{\text{opt}}$ into algorithms that already use some form of uncertainty estimation, and rely on this source - even when it is rudimentary.

In Policy Iteration, we noise the evaluation with Gaussian noise with known variance, and provide the agent with this exact variance.
As expected, $\I_{\text{opt}}$ is the best performer in this setting.

In model-based RL, we incorporate $\I_{\text{opt}}$ into MCTS.
Classical MCTS estimates action values from sample means of bounded Monte-Carlo returns
\citep{UCT}. 
Under fixed-distribution independent sampling, these means are unbiased, their variance decreases as $1/N(s,a)$, and their error becomes asymptotically Gaussian.
Modern DNN-based MCTS \citep{AlphaZero,muzero} departs substantially from this idealized sampling setting yet remains highly successful, suggesting that exact satisfaction of these assumptions is not necessary for the corresponding design principles to remain useful.
We therefore use $\sigma_{\hat Q^\pi}^2(s,a)\propto1/N(s,a)$ as a simple variance proxy and directly examine
the resulting DNN residuals in Figure~\ref{fig:evaluation_error}.

In model free RL (SAC and ReBRAC) we use the two-critic value ensemble. Value ensembles are a standard approach for value-uncertainty estimation in RL \citep{lakshminarayanan2017simple,gawlikowski2023survey} and as we see, they are sufficient to enable $\I_{\text{opt}}$ to provide improvement over the baseline, both in offline as well as online RL (Figures \ref{fig:flagship} and \ref{fig:sac_per_env}, and Table \ref{tab:offline_per_env}).

\subsection{\texorpdfstring{$\I_{opt}$}{I\_opt} and exploration}
\label{app:I_opt_and_exploration}

The operator $\I_{opt}$ can be expected to have an effect on the exploration of RL algorithms along multiple lines.
Due to the complexity of exploration as its own independent research field, we divide the discussion into three parts.
First, we will motivate why exploration does not need to be considered by $\I_{opt}$ at all.
Second, we will motivate what effects $\I_{opt}$ \textit{could} have on exploration.
Last, we will discuss how exploration should be incorporated into algorithms utilizing $\I_{opt}$.

\subsubsection{Why we need not worry about the effects \texorpdfstring{$\I_{opt}$}{I\_opt} has on exploration}
Although the simplest and most popular approach is to use trained \textit{exploitation} policy $\pi_\theta$ to act in the environment, and implicitly to explore, there is no principle necessitating that.
$\pi_\theta$ need not be used for exploration at all.
\cite{oren2025epistemic} showed that one could use in a principled manner a policy trained explicitly for exploration to benefit the exploitation policy learned by $\pi_\theta$.

Due to this principled separation between policies, we need not even consider explicitly the effects $\I_{opt}$ has on exploration - because the policy it is used to train $\pi_\theta$ need not be used for exploration at all.

\subsubsection{What effects \texorpdfstring{$\I_{opt}$}{I\_opt} \textit{could} have on exploration}
Of course, in practice, the simplest approach would be to replace the existing greedification operator used by an RL algorithm with $\I_{\text{opt}}$ and act with the trained exploitation policy $\pi_\theta$ in the environment (which is indeed how many popular, practical algorithms work in practice \citep{gumbel_mz,td_mpc2,sac1}).
Despite not being the only alternative, it is popular in practice for several reasons.
Not only is it simpler, but in complex environments such as Chess or Go it is not immediately clear which principled exploration policy is best suited, and acting with the trained policy is known to work well in practice \citep{AlphaZero,gumbel_mz}.
From this perspective, it may still be interesting to reason over what effects $\I_{opt}$ \textit{could} have on exploration.
We provide a detailed set of conjectures below:
\begin{enumerate}
    \item Over-estimation bias should reduce, because the operator is more stable, because it propagates less error, because it favors high-error values less. As a result of reduced over-estimation bias, exploration may reduce.
    \item During acting, exploration may also be reduced as a result of not favoring high-error, high-value actions.
    \item However, because \textit{high}-error, \textit{low}-value actions are less unfavored, exploration can overall \textit{increase}, as well.
\end{enumerate}
Due to these conflicting possible effects, it is hard to predict whether acting with $\pi_\theta$ trained with $\I_{opt}$ increase, decrease or have other effects on exploration in aggregate.
However, due to the last point in this list, we would expect $\I_{opt}$ in practice to simply induce a better learning behavior over algorithms that do not employ explicit directed exploration in the majority of domains.
This result is supported by the results of our experiments in online RL (GumbelAlphaZero and SAC).

\subsubsection{How to utilize \texorpdfstring{$\I_{opt}$}{I\_opt} for principled exploration}
As shown by \cite{oren2025epistemic}, a principled approach to training an exploration policy $\pi_e$ is using policy improvement operators.
However, instead of the improvement being with respect to $Q$ or $A$, it is with respect to an \textit{exploration statistic}.
A popular and principled exploration statistic is the \textit{maximum upper confidence bound} (UCB), to drive classical, principled UCB exploration.
This statistic can be modeled as $ U_M^\pi(s,a) = Q_M^\pi(s,a) + \beta \sqrt{\V_{epistemic}\,[Q_M^\pi]}$ where $ \V_{epistemic} $ here denotes the \textit{epistemic} uncertainty with respect to $Q_M^\pi$.
The random variable is the model $M$, and the uncertainty stems from the model not necessarily being aligned with reality.
We refer to \cite{oren2025epistemic} for more detail.

Similarly, $\I_{opt}$ can be used to maximize $ \argmax_{\pi'} \sum_{a\in\A}U^\pi(s,a)\pi'(a|s) $ in the same manner it is used to maximize $ \argmax_{\pi'} \sum_{a\in\A}Q^\pi(s,a)\pi'(a|s).$ 
We leave investigating the effect of such operator on the exploration policy to future work.
However, we can predict that the operator has the same underlying effect: actions with \textit{certain} high UCB will be favored, actions with \textit{certain} low UCB will be unfavored, and actions with \textit{uncertain} UCB will not be strongly modified in the exploration policy.

\section{Connections to prior work in more detail}
\label{sec:more_related_work}
\textbf{Control as Inference (CAI) and "Optimal Control"} \citep{levine2018cai,optimal_control}
formulate finding a policy that maximizes expected return as a probabilistic inference problem in a graphical model.
Trajectories are treated as latent variables generated by the dynamics $P$ and a prior policy 
$\pi$.
Rewards are incorporated as likelihood terms by introducing binary optimality variables $O_t$ such that $ p(O_t=1| s_t,a_t)\propto \exp(r(s_t,a_t)/\eta)$.
Inference in this model leads to the "soft-optimal"
state value function 
$ V^{\pi'}_{\text{soft}}(s) 
$
and action-value function 
$ Q^{\pi'}_{\text{soft}}(s,a):
$
\begin{align}
Q_{\mathrm{soft}}(s,a)
&=
r(s,a)+\gamma
\E_{s'\sim P(\cdot|s,a)}
[V_{\mathrm{soft}}(s')],
\\
V_{\mathrm{soft}}(s)
&=
\eta\log
\sum_a
\pi(a|s)
\exp\left(
    Q_{\mathrm{soft}}(s,a)/\eta
\right).
\end{align}
The so-called "soft-optimal" policy $\pi'$ is given by:
\begin{align*}
    \pi'(a|s) = \I_{\text{cai}} (\pi,Q^{\pi'})(a|s)
    = 
    \pi(a|s)\exp\big(\frac{Q^{\pi'}_{\text{soft}}(s,a) - V^{\pi'}_{\text{soft}}(s) }{\eta}\big)
    =
    \pi(a|s)\exp\big(\frac{A^{\pi'}_{\text{soft}}(s,a)}{\eta}\big).
\end{align*}
In practice, the agent does not have access to $Q_{\text{soft}}^{\pi'}, V_{\text{soft}}^{\pi'}$ and therefore $Q_{\text{soft}}^{\pi} \approx Q_{\text{soft}}^{\pi'}, V_{\text{soft}}^{\pi} \approx V_{\text{soft}}^{\pi'}$ is used.
This is equivalent to $\I_{\text{reg}}$ with $A^\pi$ (except for softness, which is often omitted in practice \citep{td_mpc2,trtpi}), and equivalently to $\I_{\text{opt}}$ with $\sigma_{\hat A^\pi}^2(s,a) = \sigma $, a notion of uniform uncertainty across all state-action pairs.
This is also the operator used by MPO \citep{mpo}.

Section \ref{sec:optimal_greedification} suggests that it is more accurate to interpret $\I_{\text{cai}}$ as producing a notion of \textit{optimal improvement} with respect to $\pi$ rather than as producing an approximation of the (soft) \textit{optimal policy}.
It also shows how this operator emerges naturally from modeling policy improvement under evaluation uncertainty, without necessitating the modified objective, soft-values and non-trivial additional complexity resulting from modeling RL as a probabilistic inference problem.

\textbf{Mirror Descent (MD) and The Natural Policy Gradient (NPG)} \citep{mirror_descent_paper,npg}
connect to $\I_{\mathrm{reg}}$ in the finite tabular setting with direct policy parameterization. 
For a KL-Bregman divergence, the exact PMD update with step size $\alpha_k>0$ has the state-wise form
\citep{mirror_descent_derivations}
\begin{align}
    \pi_{k+1}(a\mid s)
    &\propto
    \pi_k(a\mid s)
    \exp\left(
        \alpha_k Q^{\pi_k}(s,a)
    \right)
    \nonumber\\
    &\propto
    \pi_k(a\mid s)
    \exp\left(
        \alpha_k A^{\pi_k}(s,a)
    \right).
\end{align}
\citet{mirror_descent_derivations} identifies this exact policy-space update with NPG.
Thus, using our inverse-step-size notation $\eta=1/\alpha_k$,
the update is precisely $\I_{\mathrm{reg}}$.

\textbf{Trust Region Policy Optimization} \citep[][]{trpo,ppo} 
bounds the approximation error between the true RL objective and a local surrogate by constraining the policy update within a trust region defined by $KL(\pi'|\pi) \leq \delta$.
\citet{neu2017unified} analyze an exact, additively KL-regularized
TRPO surrogate in which the current policy is fully evaluated and
the current-policy state distribution is held fixed:
\begin{align}
    \pi_{k+1}
    \in
    \arg\max_{\pi'}
    \sum_s d^{\pi_k}(s)
    \left[
        \sum_a
        \pi'(a\mid s)A^{\pi_k}(s,a)
        -
        \eta
        D_{\mathrm{KL}}\!\left(
            \pi'(\cdot\mid s)\,\|\,\pi_k(\cdot\mid s)
        \right)
    \right].
\end{align}
Its exact tabular solution is
\begin{align}
    \pi_{k+1}(a\mid s)
    \propto
    \pi_k(a\mid s)
    \exp\left(
        A^{\pi_k}(s,a)/\eta
    \right).
\end{align}
which is exactly $\I_{\text{reg}}.$

\textbf{Regularized Policy Optimization} 
is motivated from the perspective of greedy updates often being unstable in practice \citep{neu2017unified}.
It models policy optimization using a weighted regularizer $\eta\mathcal{R}(\pi',\pi)$ (left), which is often chosen to be $\eta KL(\pi'|\pi)$, recovering \citep[right,][]{grill2020monte}:
\begin{align*}
     \argmax_{\pi'}
     \sum_{a \in \A}\pi'(a|s)Q^\pi(s,a) - \eta\mathcal{R}(\pi'(s),\pi(s)), 
     \,\,
     \I_{reg}(\pi,Q^\pi)(a|s) 
     = 
     \pi(a|s) \exp(Q^\pi(s,a)/\eta).
\end{align*}
In view of $\I_{opt}$, we can see that $\mathcal{R}(\pi',\pi)$ connects to a Bayesian prior $p(\pi'|\pi)$.
Similarly, $\eta(s,a)$ allows $\I_{opt}$ to regularize each state action \textit{differently}, as a function of the certainty in the advantage $A(s,a)$.
\section{Pseudo Code}
\label{app:psudocode}

\begin{algorithm}
    \caption{Generalized Policy Iteration}
    \label{alg:gpi}
    \begin{algorithmic}[1]
        \State For starting functions $q \in \Q, \, \pi \in \Pi $ greedification operator $\I$, $ k \geq 1 $ and $\delta > 0$
        \While{
                    $\displaystyle
                    \max_{s\in\mathcal S}
                    \left|
                        \sum_{a\in\mathcal A}\pi(a|s)q(s,a)
                        -
                        \max_b q(s,b)
                    \right|>\delta
                    \quad\textbf{or}\quad
                    \max_{(s,a)\in\mathcal S\times\mathcal A}
                    \left|
                        q(s,a)-\mathcal T^*q(s,a)
                    \right|>\delta
                    $
                }
            \State $ q(s,a) \gets (\T^\pi)^k q(s,a), \, \forall (s,a) \in \S \times \A $
            \label{alg:api:line:eval}
            \State $\pi(s) \gets \I(\pi, q)(s), \, \forall s \in \S$
            \label{line:pi}
        \EndWhile
    \end{algorithmic}
\end{algorithm}

The Bellman update $\T^\pi$ is defined as: $ \T^\pi q_{i}(s,a) = \mathbb{E}[R(s,a)] + \gamma \mathbb{E}_{s' \sim P}[\sum_{a' \in \A} \pi(a'|s') q_i(s',a')] $.
The notation $(\T^\pi)^k$ defines $k$ repeating Bellman updates of this form with the same policy $\pi$, repeatedly updating the value $q$.
$\T^*$ denotes the Bellman optimality operator, $  \T^* q_{i}(s,a) = \mathbb{E}[R(s,a)] + \gamma \mathbb{E}_{s' \sim P}[\max_{a' \in \A} q(s',a')] $.

\begin{algorithm}[H]
    \setlength{\baselineskip}{1.25\baselineskip}
    \caption{Simplified Model Based RL Training Loop with Modular Search}    
    \label{alg:outer_loop}
    \begin{algorithmic}[1]
        \Require Search algorithm (planner) $\mathcal{P}$, neural networks $\pi_{\theta_{1}}, v_{\phi_{1}}$, replay buffer $\mathcal{D}_{1} = \emptyset$, environment's dynamics model $\mathcal{M} = (P, R)$ and budget parameters $B$.
        \For{episode $n = 1$ to $N$}
            \State Sample starting state $s_1 \sim \rho $.
            \For{step $t = 1$ in the environment to termination or timeout}
                \State $\pi_{search}(s_t), V_{search}(s_t) \gets \mathcal{P}(\pi_{\theta_n}, v_{\phi_n}, \mathcal{M},B)(s_t) $.
                \State $a_t \sim \pi_{search}(s_t)$.
                \State $s_{t+1} \sim P(\cdot | s_t, a_t), \quad r_t \sim R(s_t, a_t)$.
                \State Append $(s_t, a_t, r_t, s_{t+1},\pi_{search}(s_t), V_{search}(s_t))$ to buffer $\mathcal{D}_{n}$.
            \EndFor
            \State Update policy params $\theta_{n+1}$ with SGD and CE loss on targets $\pi_{search}$ from $\mathcal{D}_{n}$.
            \State Update value params $\phi_{n+1}$ with SGD and MSE loss on TD-$\lambda$ targets using $V_{search}$ from $\mathcal{D}_{n}$.
            \State Set $\mathcal{D}_{n+1} \gets \mathcal{D}_{n}$.
        \EndFor
    \end{algorithmic}
\end{algorithm}
\section{Experimental and Implementation Details}
\label{app:details}

\subsection{Policy Iteration Experiments}
\label{app:details:dp}

The generalized Policy Iteration algorithm which accepts different policy improvement operators is described in Algorithm \ref{alg:gpi}.
In these experiments, we vary the improvement operator $\I$ used by the algorithm across different types of evaluation.
Two evaluation methods are used: \textit{exact evaluation} ($k \to \infty$) computed via matrix inversion and a single Bellman backup per iteration ($k = 1$), similar to Value Iteration and TD-learning.
Following both evaluation steps, the value is noised with independent zero-mean Gaussian noise with state-action-specific variances.
The experiments' results are presented in Figure \ref{fig:flagship} and Appendix \ref{app:additional_results}.

\textbf{Environments.}
The environments used in this experiment are infinite horizon grid-world mazes, using the ShinRL framework \citep{shinrl}.
To vary the transition and reward dynamics we generate 20 fixed randomly generated MDPs and run several trials per algorithm in each (Table \ref{tab:hyperparameters}).
Every trial starts from the uniform policy $\pi_0(a|s) = 1/|\A|.$

\textbf{Noise schedule.}
At each iteration every state--action pair draws a fresh variance:
\begin{align}
    c_i \sim \mathcal{U}\big(0,\, a_0\gamma_{\text{dec}}^{\,i}\big),
    \quad
    \sigma^2_{Q_i}(s,a) \sim \mathcal{U}\big(c_i,\, c_i + b\big),
    \quad
    \hat Q^{\pi_i}(s,a) \sim \N\big(Q^{\pi_i}(s,a),\, \sigma^2_{Q_i}(s,a)\big).
    \label{eq:dp_noise}
\end{align}
All constants are specified in Table \ref{tab:hyperparameters}.
A random half of the pairs instead keeps $\sigma^2_{Q_i} \sim \mathcal{U}(0,1)$ at every iteration to keep the variances of very mixed sizes, simulating the larger-error predictions from poorly or newly visited areas of the state space, as well as lower-error predictions of well visited, in RL. 
The floor $c_i$ decays as well, further simulating the reduction in evaluation error over the training process.

\textbf{Operators compared.}
All operators act on the noisy $\hat Q^{\pi_i}$ and its advantage
$\hat A^{\pi_i}(s,a) = \hat Q^{\pi_i}(s,a) - \sum_b \pi_i(b|s)\hat Q^{\pi_i}(s,b)$. Writing
$\sigma^2_{\hat A}$ for the advantage variance of Equation \ref{eq:advantage_likelihood}, the different operators are defined as follows:
\begin{align}
    &\I_{\text{opt}}(\pi_i, \hat Q^{\pi_i})(a|s) \propto \pi_i(a|s)\exp\Big(\tfrac{\tau^2}{(\tau^2 + \sigma^2_{\hat A}(s,a))\beta}\hat A^{\pi_i}(s,a)\Big),
    \\
    &\I_{\text{reg}}(\pi_i, \hat Q^{\pi_i})(a|s) \propto \pi_i(a|s)\exp\big(\hat A^{\pi_i}(s,a)/\eta\big),
    \\
    &\I_{\arg\max}(\pi_i, \hat Q^{\pi_i})(a|s) = \arg\max_a \hat Q^{\pi_i}(s,a),
    \\
    &\I_{\text{cpi}}(\pi_i, \hat Q^{\pi_i})(s) = (1-\alpha_i)\pi_i(s) + \alpha_i\,\argmax\nolimits_a \hat Q^{\pi_i}(s,a).
\end{align}
$\I_{\text{cpi}}$ \citep{approx_optimal_approx_rl} interpolates towards the greedy policy rather than jumping to it. 
For $\alpha_i$ we use \citet{safe_pi}'s more recent form, $\alpha_i \propto \frac{(1-\gamma)^2}{\gamma}\cdot \frac{|\S|^{-1}\sum_{s}B(s)}{\max_s B(s) - \min_{s'}B(s')}$ with $B(s) = \sum_a (\pi_{\text{greedy}}(a|s) - \pi_i(a|s))\hat Q^{\pi_i}(s,a)$.

In addition, we include pessimistic (LCB) variants of $\I_{\arg\max}$ and $\I_{\text{reg}}$ to investigate the differences between the effect of $\I_{\text{opt}}$'s uncertainty awareness and standard pessimism. 
They receive the same $\sigma^2_{Q_i}$ of Equation \ref{eq:dp_noise} but spend it as a value penalty,
$\hat Q^{\text{LCB}}_i(s,a) = \hat Q^{\pi_i}(s,a) - \lambda\sqrt{\sigma^2_{Q_i}(s,a)}$, before the unmodified operator is applied. 

\textbf{Score.}
Returns are min--max normalised per environment,
$V^{\pi_i}_{\text{norm}}(s_0) = \frac{V^{\pi_i}(s_0) - V^{\pi_0}(s_0)}{V^*(s_0) - V^{\pi_0}(s_0)}$,
where $V^{\pi_0}$ is the uniform policy and $V^*$ is obtained per seed by running a noise-free
$\argmax$ operator to convergence. Comparison and tuning use the normalised area under the
curve, $\mathrm{AUC} = T^{-1}\sum_{i=1}^{T} V^{\pi_i}_{\text{norm}}(s_0)$.
Curves aggregate all $N_{\text{envs}}\cdot N_{\text{trials}} = 2000$ scores per iteration. 
Every aggregate interval holds the environments fixed and treats only seeds as random,
so that $\mathrm{Var} = N_{\text{envs}}^{-2}\sum_e \sigma^2_e/n_e$ with $\sigma^2_e$ the
across-seed variance inside environment $e$. 
The hyperparameters for all operators were tuned with an extensive grid search, described in more detail in Appendix \ref{app:hyperparams}.

\subsection{Discrete-Action Model Based RL (GumbelAlphaZero on board games)}
\label{app:details:mcts}

GumbelAlphaZero uses GumbelMCTS to drive policy improvement and action selection in RL.
The GumbelMCTS algorithm is a modern variant of the MCTS algorithm which uses greedification operators directly within the search to drive policy improvement at the root.
In broad strokes, MCTS is an iterative algorithm that constructs a search tree and uses statistics to direct its search towards increasingly more promising directions.
In each iteration, the algorithm traverses the tree from the root using a \textit{search policy} $\pi_i = \I_{\text{reg}}(\pi_\theta, Q_i),$ where $ \pi_\theta $, often called the \textit{prior policy}, the policy trained by the RL algorithm, and $Q_i$ is the aggregated value observed during search.
When search arrives at an as-of-yet un-expanded action $a$ at a node $s_T$ (a \textit{leaf} in the tree), the node is expanded, evaluating $ s_{T+1} \sim P(\cdot | s_T,a), Q_i(s_T,a ) = r_T + \gamma V_\psi(s_{T+1}). $ 
The reward and transition models are both usually assumed to be deterministic.
After the expansion step, the new information $r_T + \gamma V_\psi(s_{T+1})$ observed in the expansion is backpropagated throughout the entire search trajectory all the way back to the root, the iteration has completed, and the next iteration starts from the top.
The number of iterations, which is also the number of expansions, $M$, is the common compute budget considered for MCTS, as in many cases the DNN forward-pass-based expansion is the cost's bottleneck.

The GumbelAlphaZero (GAZ) algorithm is a natural test-bed for $\I_{\text{opt}}$: its visit counts provide a natural state-action evaluation-variance estimator under the classical sample-mean interpretation of MCTS and its DNN-based evaluations test $\I_{\text{opt}}$ in fully the modern deep-RL setting. 
Finally, it already has a clear interpretation as using iterative greedification within the tree for policy improvement outside the tree, with $\I_{\text{reg}}.$

Evaluation in board game, a standard, hard, benchmark domain for RL and MCTS, is very expensive: the agents take relatively large number of steps before they have well functioning policies.
To achieve high-quality training curves, the evaluation requires comparing every checkpoint to every other checkpoint across a large number of games, where each agent uses MCTS variants with the checkpoint's DNN.
Due to the high cost of this evaluation, we run two different experiments, answering two faces of the same question - \textit{how does GumbelAlphaZero perform when $\I_{\text{reg}}$ is replaced with $\I_{\text{opt}}$?} 

In the first experiment we compare 4 agents: with and without a $Q_\phi$ critic, and with and without $\I_{\text{opt}}$ in place of $\I_{\text{reg}}.$
This experiment allows us to evaluate \textit{which operator, in and out of the presence of the additional critic, produces better learning?}
This is the experiment presented in Figure \ref{fig:flagship}.
In the second experiment we compare 6 agents - the same 4 with additionally a vanilla-AlphaZero \citep{AlphaZero} PUCT-based \citep{AlphaGo,puct} agent, with and without a $Q_\phi$ critic.
In this experiment, we use \textit{the same} DNNs ($\pi_\theta, V_\psi, Q_\phi$) for all agents ($Q_\phi$ when relevant).
Rather than learning curves, this produces the playing strength of MCTS with each operator.
This answers the question \textit{what is the direct contribution of each operator to action selection - playing strength - when all else is equal?} 
We also repeat this experiment across different planning budgets $M$ and in two environments: chess and 9x9 Go, allowing us to answer this question much more comprehensively.

The experiments' results are presented in Figure \ref{fig:flagship} in Section \ref{sec:experiments} and Figures \ref{fig:mcts_curve_seeds} and \ref{fig:mcts_M_grid} in Appendix \ref{app:additional_results}.

\subsubsection{The operator}
\label{app:details:mcts:operator}

GumbelMCTS \citep{gumbel_mz} uses $\I_{reg}$ for three purposes: (I) Policy improvement at the root of the tree, once search has concluded.
(II) To drive the search at the root, using Sequential Halving \citep{sh}.
(III) To drive the search within the tree, deciding which action to select at each state (each node) in the search tree.
All three use the \textit{completed $Q$-values} mechanism and a single node-level
coefficient:
\begin{align}
    \I_{reg}(\pi_\theta, Q_i)(a|s) \propto \pi_\theta(a|s)\exp\Big(\frac{1}{\eta(s)}Q_i(s,a)\Big),
    \,\,
    \frac{1}{\eta(s)} = c_{scale}\big(c_{visit} + \max_{b \in \A} N(s,b)\big),
    \label{eq:gmz_operator}
\end{align}
where $N(s,a)$ is the number of simulations that have passed through $(s,a)$ and $Q_i$
is the value estimated by the search tree at search iteration $i$. This coefficient is a heuristic for
the variance reduction produced by repeated evaluation: under a fixed rollout distribution and
independent sampling, the variance of a sample mean decreases as $1/N(s,a)$, so the inverse visit count is proportional to the variance. 

We replace every use of $\I_{\text{reg}}$ with $\I_{\text{opt}}$, setting $\sigma_Q^2(s,a)\propto1/N(s,a)$, the corresponding classical evaluation variance estimator, 
obtain $\sigma^2_A(s,a)$ from Equation \ref{eq:advantage_likelihood}, and
$\eta(s,a)$ from Equation \ref{eq:iopt}:
\begin{align}
    \I_{\text{opt}}(\pi_\theta, Q_i)(a|s) &\propto \pi_\theta(a|s)\exp\Big(\frac{1}{\eta(s,a)}A_i(s,a)\Big).
\end{align}
To avoid division by zero, actions the search has not visited take $N(s,a)=1$, and all other actions add additionally the number of evaluations through the node.
The value of actions the search has not visited is completed by the network's per-action $Q_\phi(s,a)$ where a $Q$ head is available and otherwise by the completed-Q value mechanism used by \citet{gumbel_mz}.

\subsubsection{Training the Actor and the Critics}
\label{app:details:mcts:qhead}

The actor $\pi_\theta$ is trained in the standard GumbelAlphaZero manner, with a cross entropy loss with the target $\pi_{\text{search}}(s) = \I(\pi_\theta, Q_M),$ the improved policy at the root of the search tree at the last iteration $M$ \citep{gumbel_mz}.
Baseline GumbelAlphaZero uses $\I_{\text{reg}}$, while our version uses $\I_{\text{opt}}$.

The state-value critic $V_\psi(s)$ is trained with MSE loss with the modern model-based target $V_{\text{search}}(s) = \sum_{a \in B}\bar \pi_{\text{search}}(a|s)Q_M(s,a) $ \citep{oren2025epistemic,trtpi,tsmcts,pmcts}.
The set $B$ denotes the set of all \textit{searched} actions at the root, and the $\bar \pi_{\text{search}}$ the improved policy at the root, renormalized only over the searched actions.

MCTS does not traditionally train a Q-head, although it has been tried before \citep{gao2018three}.
However, since the popular, canonical approaches, AlphaZero \citep{AlphaZero}, MuZero \citep{muzero} and GumbelAlpha/MuZero \citep{gumbel_mz} from DeepMind do not use a Q head, there is no one accepted canonical approach.
Building on recent ideas and developments from MCTS and AlphaZero/MuZero literature \citep{oren2025epistemic,tsmcts} we train the Q-network as follows.
The search at the root splits the action set in three, each receiving a different target:
\begin{enumerate}
    \item \textbf{Searched actions} ($N(s,a) > 0$) take the tree's own backed-up value $Q_i$ at the end of search.
    \item \textbf{Bootstrapped actions:} $L$ of the unsearched legal actions are sampled from the improved policy at the end of search, without replacement and given the one-step TD-style target $T(s,a) = r(s,a) + \gamma\, V_\psi(s')$. 
    \item \textbf{All other actions} receive no target and contribute zero gradient.
\end{enumerate}
The error vs. the target is measured with MSE.

Each searched action is weighted by $\sqrt{1 + N(s,a)}$, i.e.\ by
inverse \emph{standard error} rather than inverse variance. Precision weighting $(1+N)$ is
maximum-likelihood only for unbiased targets, and these are not: an action is most-visited
partly because its early estimates looked good, so its backup carries a winner's curse that
does not shrink like $1/N$. Bootstrapped actions get unit weight; their noise is that of
$V_\psi(s')$ and is not on the $\sqrt{1+N}$ scale.

The weighted regression constrains only actions that received a target, so we add one scalar constraint per state: the improved policy at the root $\pi_{\text{search}}$-weighted mean of $Q_\phi$ must match the same mean over the tree's completed values, with the target treated as a constant, providing some learning signal to all actions. 

The total loss is
$L = L_\pi + L_v + \lambda_q L_q$, with $L_\pi$ the cross-entropy against the improved policy
over legal actions, $L_v$ the squared error on the value head, and $L_q$ the sum of the
weighted regression and the calibration term. 
Hyperparameters are specified in 
Table \ref{tab:hp_mcts}.

\subsubsection{Evaluation across training}
\label{app:details:mcts:curves}

\paragraph{Training.}
We use \texttt{pgx} \citep{pgx} implementation of $9\times9$ Go and chess.
All agents follow the training scheme of AlphaZero \citep[][Algorithm \ref{alg:outer_loop}]{AlphaZero,gumbel_mz}.
Actions are sampled randomly from the acting policy for the first 8 moves and then acting is deterministic.
The acting policy follows the conventions of AlphaZero and GumbelAlphaZero (visits at the root, and / or the action chosen by Sequential Halving in the GumbelAlphaZero case, see \citep{gumbel_mz} and \citep{sh}).
All agents share every hyperparameter except the
operator and the $Q$ head, detailed in Table \ref{tab:hp_mcts}. 

\paragraph{Evaluation.}
Evaluation with Bayes elo is conducted by playing every DNN checkpoint and search pair against every other.
We train 5 seeds per agent and evaluate all of them in one tournament, so every
checkpoint of every seed plays every other and the whole set is fitted on a single Elo scale.
The results are presented in Figure \ref{fig:mcts_curve_seeds}. 
We report the mean over seeds and the minimum and maximum elo observed over seeds. 
BayesElo's own interval is reported per seed.

\subsubsection{Evaluation at a fixed network}
\label{app:details:mcts:tournament}

\paragraph{Setup.}
In the equalized-DNNs experiment, all agents use the same checkpoint trained with GAZ + $Q_\phi,$ to make sure no advantage is given to $\I_{\text{opt}}$ from training its own networks.

\paragraph{Evaluation.}
Agents play a round robin from a fixed opening book, with every pairing played in both colours
so that a first-move advantage term is not identifiable. Elo is fitted with BayesElo holding the \textit{colour-advantage} at zero.

\subsection{Classical Control Model-Free RL (SAC on DMC)}
\label{app:details:sac}

\paragraph{The Operator.}
SAC can be interpreted as policy iteration with the regularized policy-improvement operator
$
\pi_{i+1}(a|s)
=
\mathcal I_{\mathrm{reg}}
\left(
\mu,\hat Q_{\mathrm{soft}}^{\pi_i}
\right)(a|s).
$
In a tabular or discrete-action setting, this improved policy can be computed explicitly over the action space. With a parameterized policy \(\pi_\theta\), the improved policy can be projected onto the policy class directly using a KL loss with respect to the target $\pi_{i+1}$. 
In continuous-action settings, explicit construction and projection of the improved policy $\pi_{i+1}(a|s)$ can also be performed using sampled actions, as in MPO \citep{mpo}.

This is not, however, the approach taken by commonly used continuous-action implementations of SAC. In popular implementations (\cite{huang2022cleanrl}'s implementation for example which we use in the experiments), the actor is optimized directly using the reparameterized loss
\begin{align}
L(\theta)
&=
\eta \log \pi_\theta(a_\theta|s)
-
A_{\mathrm{soft}}(s,a_\theta),
\label{eq:sac_loss}
\end{align}
where \(a_\theta=f_\theta(\epsilon;s)\), with \(\epsilon\) denoting sampled noise. The reparameterization trick allows gradients to propagate through the sampled action to the policy parameters \citep{sac1}. Here, using \(A_{\mathrm{soft}}\) instead of \(Q_{\mathrm{soft}}\) is equivalent for the actor optimization, since
$
    A_{\mathrm{soft}}(s,a)=Q_{\mathrm{soft}}(s,a)-V_{\mathrm{soft}}(s),
$
and \(V_{\mathrm{soft}}(s)\) is independent of the action and therefore does not affect the gradient with respect to \(\theta\).

Thus, rather than explicitly constructing and projecting onto the improved policy, SAC directly optimizes the regularized objective whose unconstrained policy-space optimum is \(\mathcal I_{\mathrm{reg}}\).

Our objective is to evaluate whether \(\mathcal I_{\mathrm{opt}}\) can directly improve existing RL algorithms while retaining their practical design choices and tuning procedures. We therefore preserve the direct gradient-based actor optimization used by SAC. To incorporate \(\mathcal I_{\mathrm{opt}}\) while retaining SAC's self-tuned temperature parameter, we set $ \eta = \frac{\beta}{\tau^2}$ and incorporate the uncertainty-dependent weighting prescribed by \(\mathcal I_{\mathrm{opt}}\). Specifically, we use
\begin{align}
L_{\mathcal I_{\mathrm{opt}}}(\theta)
=
\eta \log \pi_\theta(a_\theta|s)
-
\frac{
A_{\mathrm{soft}}(s,a_\theta)
}{
\tau^2+\sigma^2_{A_{\mathrm{soft}}}(s,a_\theta)
}.
\label{eq:iopt_sac_loss}
\end{align}
This leaves \(\tau^2\) as the only additional hyperparameter introduced by \(\mathcal I_{\mathrm{opt}}\).

For the uncertainty estimate, rather than combining the variances of the two value estimates as in Equation \ref{eq:advantage_likelihood}, which neglects their covariance, we compute the advantage separately using each critic and estimate its variance from their disagreement:
\begin{align}
\sigma^2_{A_{\mathrm{soft}}}(s,a_\theta)
=
\operatorname{Var}_i
\left[
Q_{\phi_i}(s,a_\theta)
-
\mathbb{E}_{b \sim \pi_\theta(\cdot|s)}
\big[
Q_{\phi_i}(s,b)
\big]
\right].
\label{eq:sac_sigma}
\end{align}

The inner expectation is each critic's own state value $V_{\phi_i}(s)$, estimated by Monte Carlo with $32$ actions drawn from $\pi_\theta(\cdot|s)$ at that state. Each critic therefore forms its own advantage of the sampled action $a_\theta$ over the policy's value, and the variance across critics of those advantage estimates is the uncertainty estimate. Centring each critic on its own value removes that critic's state-dependent bias, so what remains is disagreement about the action rather than about the state; with the two critics used here the estimator reduces to $\tfrac{1}{2}\left(A_1 - A_2\right)^2$. This requires no additional networks, at the cost of $32$ further action samples and critic evaluations per state.

The implementation weights $Q_{\mathrm{soft}}(s,a_\theta)$ rather than $A_{\mathrm{soft}}(s,a_\theta)$ in Equation \ref{eq:iopt_sac_loss}; since the detached weight is treated as constant during differentiation, the two give identical actor gradients, by the argument given above for Equation \ref{eq:sac_loss}.

In this implementation, gradients are stopped through the uncertainty estimate \(\sigma^2_{A_{\mathrm{soft}}}(s,a_\theta)\). Consequently, uncertainty determines the weighting assigned to each sampled state-action gradient, but does not contribute directly to the gradient through its dependence on the action. In particular, the resulting update retains the term:
\begin{align}
\frac{1}{
\tau^2+\sigma^2_{A_{\mathrm{soft}}}(s,a_\theta)
}
\nabla_\theta A_{\mathrm{soft}}(s,a_\theta),
\end{align}
while omitting the gradient of the uncertainty-dependent weight itself. 
Because gradients are stopped through the uncertainty estimate,
uncertainty introduces no additional action-space gradient through
$\nabla_a \sigma^2_{A_{\mathrm{soft}}}$. For each sampled
state-action pair, it only positively rescales the ordinary advantage
gradient in Equation~\ref{eq:iopt_sac_loss}. 
Thus, under direct optimization of a single action at a fixed state, it does not
change the stationary points or maximizers of the $Q$ term. 
As demonstrated in our experiments, this approximation is sufficient to improve performance while retaining the original SAC optimization procedure.

\paragraph{Setup.}
We use ten popular DeepMind Control Suite tasks \citep{dm_control} and train the agents up to 1$M$ steps. 
The environments are:
\texttt{cheetah-run}, \texttt{humanoid-run}, \texttt{quadruped-run}, \texttt{fish-swim}, \texttt{humanoid-stand}, \texttt{quadruped-walk}, \texttt{hopper-hop}, \texttt{humanoid-walk}, \texttt{walker-walk} and \texttt{hopper-stand}.

\paragraph{Metric and aggregation.}
Returns are min--max normalised per task over all seeds and then aggregated per agent across domains.
We report the performance after $1M$ steps.

Hyperparameters are specified in Appendix \ref{app:hyperparams}, Table \ref{tab:hp_sac}.

\subsection{Offline RL (ReBRAC on D4RL)}
\label{app:details:offline}
\paragraph{The Operator.}
ReBRAC \citep{tarasov2023revisiting} trains a deterministic actor to maximize
$Q_\phi(s,\pi_\theta(s))$ under a behavior-cloning penalty.
This penalty serves as the behavior-regularization term corresponding to the
divergence from the prior in Equation~\ref{eq:ce_decision_objective}.
A deterministic actor does not provide a tractable Boltzmann policy target to
project onto, as in the SAC case (Appendix~\ref{app:details:sac}).
We therefore apply the coefficient prescribed by Equation~\ref{eq:iopt} directly to ReBRAC's policy-improvement term:
\begin{align}
L_{\mathcal{I}_{\mathrm{opt}}}(\theta)
=
-\frac{\lambda}{\beta}
\frac{\tau^2}
{\tau^2 + \sigma_{\hat A}^2(s,a)}
Q_\phi\bigl(s,\pi_\theta(s)\bigr)
+
\beta_1
\left\|
\pi_\theta(s)-a
\right\|_2^2.
\label{eq:rebrac_topi}
\end{align}

The advantage with respect to a policy $\pi$ and its value functions
$Q^\pi$ and $V^\pi$ is defined as
$A^\pi(s,a)=Q^\pi(s,a)-V^\pi(s)$.
In ReBRAC, however, the policy being evaluated and the policy serving as the
behavior-regularization reference differ.
The critic $Q_\phi \approx Q^{\pi_\theta}$ evaluates the learned policy
$\pi_\theta$, while the behavior policy $\pi_b$, represented by the actions in
the offline dataset, serves as the reference toward which the actor is
regularized.

We therefore use:
\begin{align}
\hat A(s,a)
=
Q_\phi\bigl(s,\pi_\theta(s)\bigr)
-
Q_\phi(s,a),
\qquad
Q_\phi(s,\cdot)
=
\min_i Q_{\phi_i}(s,\cdot),
\label{eq:rebrac_advantage}
\end{align}
as the improvement quantity for constructing the uncertainty estimate.
This quantity measures the predicted improvement of the learned action relative to ReBRAC's behavior reference, with both actions evaluated by the learned critic.
Since the dataset action $a$ does not depend on the actor parameters, we have:
\begin{align}
\nabla_\theta \hat A(s,a)
=
\nabla_\theta Q_\phi\bigl(s,\pi_\theta(s)\bigr),
\end{align}
and hence $\hat A$ induces exactly the same actor gradient as ReBRAC's original
$Q$-maximization term when their coefficients are fixed.
Estimating uncertainty in this difference therefore measures critic disagreement
about the improvement of the learned action relative to the behavior action,
rather than disagreement about their absolute values.

To estimate this uncertainty, rather than combining the variances of the two
value estimates as in Equation~\ref{eq:advantage_likelihood}, which neglects their
covariance, we compute the improvement separately using each critic and estimate
its variance from their disagreement:
\begin{align}
\sigma_{\hat A}^2(s,a)
=
\operatorname{Var}_i
\left[
Q_{\phi_i}\bigl(s,\pi_\theta(s)\bigr)
-
Q_{\phi_i}(s,a)
\right].
\label{eq:rebrac_sigma}
\end{align}
That is, each critic evaluates both the current policy action $\pi_\theta(s)$
and the dataset action $a$, and the variance across critics of the resulting
improvement estimates is used as the uncertainty estimate.
Evaluating both actions with the same critic also allows errors shared between
the two evaluations to cancel in their difference, rather than treating the two
value estimates as independent sources of uncertainty.
The resulting quantity therefore captures critic-ensemble disagreement about
the estimated improvement of the current policy action over the action observed
in the data.
This requires no additional networks: with two critics, each critic evaluates
both action inputs, in addition to the single actor evaluation required to
obtain $\pi_\theta(s)$.

Writing
\begin{align}
w(s,a)
=
\frac{\tau^2}
{\tau^2 + \sigma_{\hat A}^2(s,a)}
\end{align}
for the uncertainty-dependent coefficient, which is treated as constant with
respect to the actor gradient, we apply $w(s,a)$ directly to ReBRAC's original
$Q$-maximization term.
ReBRAC's $Q$-scale normalizer $\lambda$ is correspondingly computed from the
weighted term as it actually enters the loss:
\begin{align}
\lambda
=
\left(
\mathbb{E}_{(s,a)\sim\mathcal{D}}
\left[
\left|
w(s,a)
Q_\phi\bigl(s,\pi_\theta(s)\bigr)
\right|
\right]
\right)^{-1}.
\label{eq:rebrac_lambda}
\end{align}
This replaces ReBRAC's original normalizer
\begin{align}
\left(
\mathbb{E}_{(s,a)\sim\mathcal{D}}
\left[
\left|
Q_\phi\bigl(s,\pi_\theta(s)\bigr)
\right|
\right]
\right)^{-1},
\end{align}
so that ReBRAC's $Q$-maximization term remains scale-normalized after
introducing the uncertainty weighting.

Although $\hat A(s,a)$ and $Q_\phi(s,\pi_\theta(s))$ induce identical actor
gradients when their coefficients are fixed, their numerical values generally
differ and they are therefore not interchangeable inside the scale normalizer.
Computing $\lambda$ from
$\mathbb{E}[|w(s,a)\hat A(s,a)|]$ instead would generally produce a different
$\lambda$, thereby changing the relative scale of the policy-improvement
gradient and the behavior-cloning gradient.
Equation~\ref{eq:rebrac_lambda} instead preserves ReBRAC's normalization
convention by forming $\lambda$ from the weighted $Q$-maximization term that
actually enters Equation~\ref{eq:rebrac_topi}.
In particular, when $w(s,a)=1$, Equation~\ref{eq:rebrac_lambda} recovers
ReBRAC's original normalizer.

The critic, both behavior-cloning coefficients, and all other components of ReBRAC are left unchanged.

\paragraph{Setup.}
Four D4RL \citep{d4rl} Gym-MuJoCo medium-replay datasets: \textit{halfcheetah}, \textit{hopper}, \textit{walker2d} and \textit{ant}.
Each run trains for $1$M gradient steps. 

\paragraph{Metric.}
The reported metric is the final policy evaluated over $56$ episodes. 
The offline bar of Figure \ref{fig:flagship} is min--max normalised per environment over the runs of both arms and then averaged over the four environments, the normalisation the SAC
bar of the same figure uses. 
Table \ref{tab:offline_per_env} instead reports raw D4RL points
($0$ = random, $100$ = expert).
\section{Additional Results}
\label{app:additional_results}
In this section we provide additional results across the different experiment settings presented in Figure \ref{fig:flagship} in Section \ref{sec:experiments}.
These include the individual results aggregated in Figure \ref{fig:flagship},
the behavior of the hyperparameters introduced with $\I_{\text{opt}}$ and additional experiments to answer specific research questions and further investigate the contribution of $\I_{\text{opt}}$ to performance in different domains and algorithms.

\subsection{Dynamic Programming additional results}
\label{app:additional_results_dp}

\subsubsection{Generalized Policy Iteration with $k \to \infty$}
We include a set of additional experiments and results in the Policy Iteration setup presented in Figure \ref{fig:flagship}.
In Figure \ref{fig:results_dp} left we provide the full learning curves (improvement of the value of the starting state across iterations), summarized in Figure \ref{fig:flagship}.

In Figure \ref{fig:results_dp} center we show the sample efficiency, measured as \textit{area under the curve} (AUC), across increasing starting values of the noise schedule.
We see that the dominance of $\I_{\text{opt}}$ grows significantly as the error grows.

In Figure \ref{fig:results_dp} right we investigate the dependence of the operator on the hyperparameters $\beta, \tau, $ which show a smooth ridge behavior, concentrating around smaller $\beta$ and $\tau$.
This also supports the operator's ability to exploit $\sigma_A$, as the effect of $\sigma_A$ reduces with the size of $\tau$, and the larger $\tau$ the worse the performance.

\begin{figure}[H]
    \centering
    \includegraphics[width=1\linewidth]{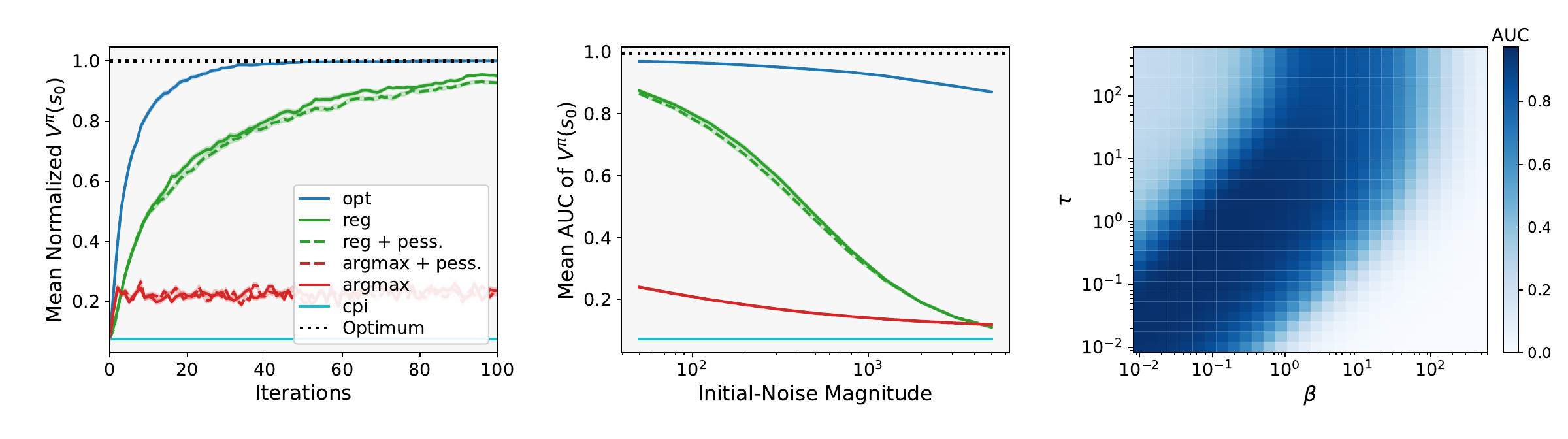}
    \vspace{-0.7cm}
    \caption{
    Policy Iteration experiments with different policy improvement operators and uncertain evaluation, mean and 95\% Gaussian CI. 
    \textbf{Left:} Example learning curves.
    \textbf{Center:} Sample-efficiency vs. starting noise magnitude, measured as Area Under the Curve (AUC) of learning curves until 100 iterations (higher is better).
    \textbf{Right:} AUC vs. hyperparameter combination. 
    The performance landscape exhibits a single broad ridge of high performance, with performance degrading smoothly away from the ridge.
    }
    \label{fig:results_dp}
\end{figure}

\subsubsection{Generalized Policy Iteration with $k=1$}
\label{app:additional_results_dp_td}

In the majority of practical RL settings, we would not expect the evaluation to be truly unbiased \citep{hullermeier2021aleatoric}.
To investigate the behavior of $\I_{\text{opt}}$ in a setting that violates this idealized unbiased-evaluation assumption while remaining controlled (decoupled from additional effects such as exploration, whether there are true Gaussian components to the error, etc.), we repeat the Policy Iteration experiment presented in Figure~2 exactly but with $k=1$. 
That is, rather than exact evaluation followed by Gaussian noise, we use a single Bellman backup followed by the same noise, introducing approximation and bootstrapping error relative to $Q^{\pi_i}$ in the manner of Value Iteration and TD learning.
The results are presented in Figure \ref{fig:results_dp_td}.

In Figure \ref{fig:results_dp_td} left we see a single set of learning curves, where the conclusion holds: $\I_{\text{opt}}$ remains the best performer, albeit with a smaller advantage.

In Figure \ref{fig:results_dp_td} center we see the behavior with growing error schedules.
Although $\I_{\text{opt}}$ remains the best performer, the advantage, which previously increased with error, now remains much more constant.

In Figure \ref{fig:results_dp_td} right we see that the behavior of the hyperparameters remains similar in that it is roughly a ridge, however the values of the optimal $\tau$ and $\beta$ are now much larger.

\begin{figure}[H]
    \centering
    \includegraphics[width=1\linewidth]{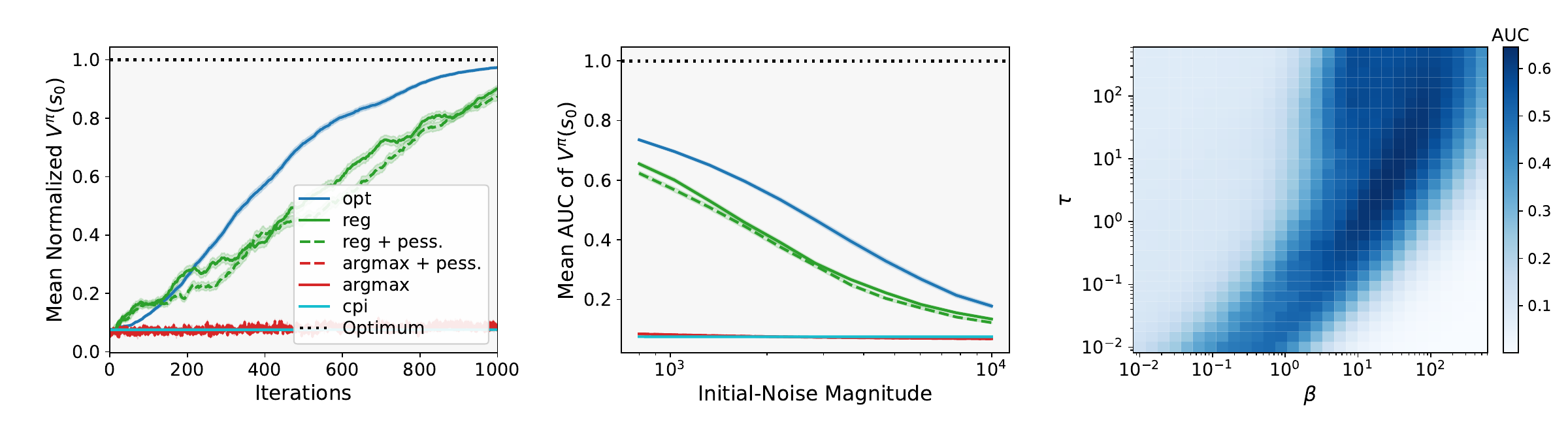}
    \vspace{-0.7cm}
    \caption{
    Policy Iteration with a 1-step Bellman update ($k=1$), otherwise identical to
    Figure \ref{fig:results_dp}.
    }
    \label{fig:results_dp_td}
\end{figure}

\subsection{Discrete-action model-based, additional results}
\label{app:additional_mcts_results}
The contribution of $\I_{\text{opt}}$ to search-based policy improvement can be evaluated in different ways.
In Figure \ref{fig:flagship} we include the comprehensive evaluation, comparing the strength of the agent that learns with each search variant, and then evaluates with it.
We include checkpoints during training of all seeds of all agents in a single tournament below.
Additionally, the contribution of $\I_{\text{opt}}$ to search based policy improvement can be evaluated more directly by comparing the playing strength of different MCTS variants using a \textit{single} set of DNNs $\pi_\theta, V_\psi, Q_\phi$ in all variants.
We include this comparison second.
Since this comparison is much less compute intensive, we are able to expand it along an additional MCTS variant, PUCT, which is used by vanilla AlphaZero and in an additional environment, chess, and along different search-budgets $M$.
Additionally, we include an investigation of the behavior of the error in MCTS with DNNs, where we find that indeed the \(Q_\phi\) residuals have higher variance than the \(V_\psi\) residuals.

\subsubsection{Elo-based learning curves across agents and seeds}
\label{app:additional_results_seeds}
We include the complete learning curves across seeds in Figure \ref{fig:mcts_curve_seeds}.
All training checkpoints of all agents play against all other checkpoints and each other, forming true Bayes-Elo curves.

\begin{figure}[t]
    \centering
    \includegraphics[width=0.6\linewidth]{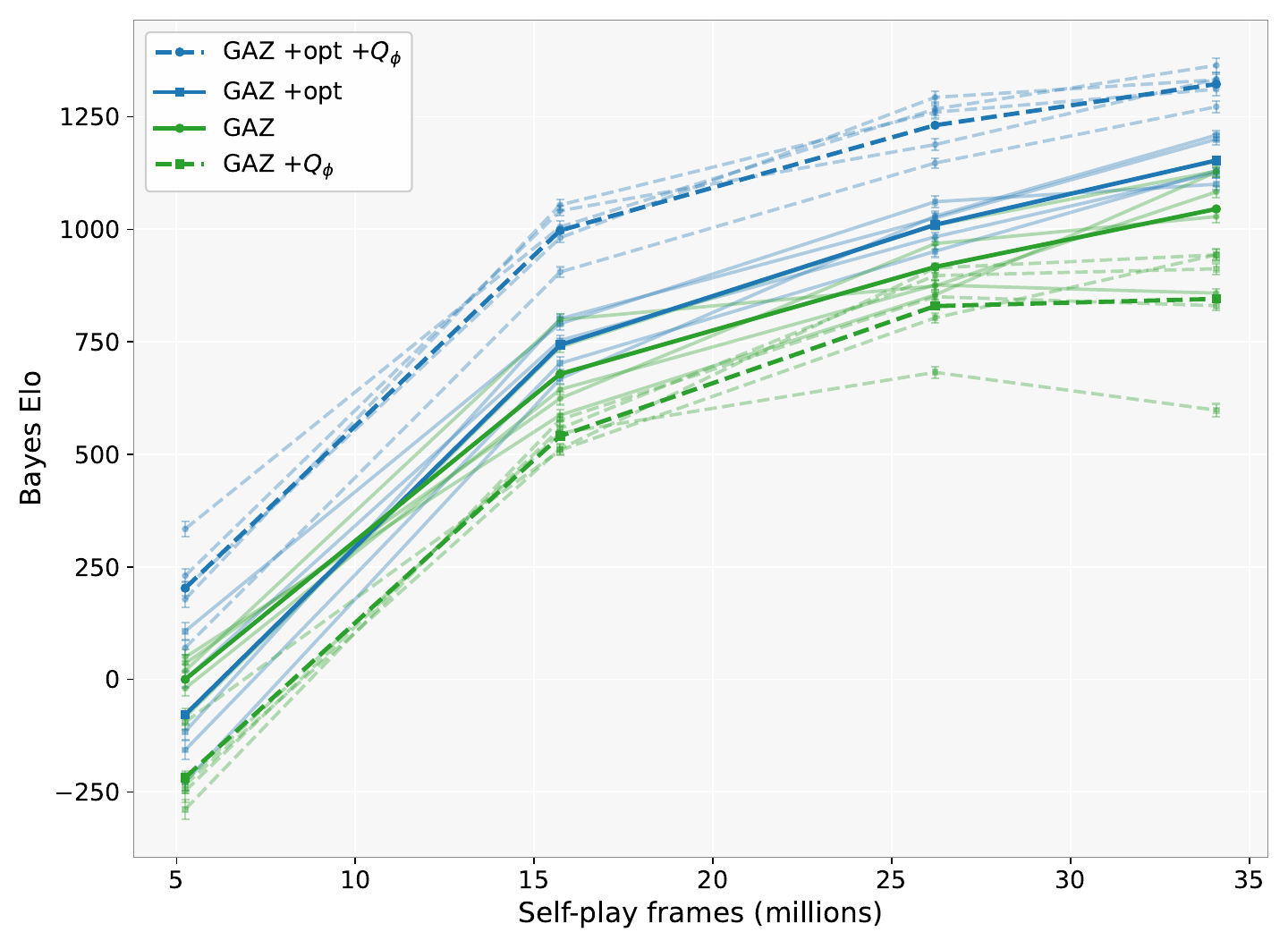}
    \vspace{-0.5cm}
    \caption{
    Playing strength in training on $9\times9$ Go with $M=64$. 
    Every checkpoint of every agent and every seed plays every other resulting in a single set of Elo scores. 
    Bold curves are the mean over 5 training seeds per agent. 
    Faint curves are the seeds themselves, with $95\%$ BayesElo CI.
    }
    \label{fig:mcts_curve_seeds}
\end{figure}

\subsubsection{
Playing strength across budgets \texorpdfstring{$M$}{M} and games 9x9 Go and chess
}
\label{app:additional_results_msweep}

We evaluate the effect of different operators used for search in MCTS for playing strength directly.
We use the same set of DNNs for all agents to isolate the effect of the search operator and avoid confounding it with differences in network training. 
The DNNs were trained with GumbelAlphaZero rather than with $\I_{\text{opt}}$.
The results are presented in Figure \ref{fig:mcts_M_grid}.

Across all $M$ and both games, $\I_{\text{opt}} $ paired with $Q_\phi$ outperforms all other variants.
Across most values of $M$, $\I_{\text{opt}} $ matches or outperforms $\I_{\text{reg}}. $ 
However, this does not hold for small values of $M = 8, 24,$ demonstrating a certain apparent dependence of $\I_{\text{opt}}. $
We note that only one set of hyperparameters was used for all agents, so we would not extract the conclusion here that one agent is better than the others in a certain $M,$ but rather that the target budget should be taken into account when tuning the hyper parameters of each agent (see the next subsection in this Appendix, discussing tuning the parameters of $\I_{\text{opt}} $).

PUCT is the worst performer in almost all domains, but benefits much more from $Q_\phi$ than $\I_{\text{reg}}, $ which could be expected due to its explicit dependence on a $Q_\phi$ prediction per state and action, and lack of the completed-Q-values mechanism developed by \cite{gumbel_mz} for GumbelMCTS.
Nonetheless, PUCT + $Q_\phi$ never competed with $\I_{\text{opt}} $ in playing strength in these experiments.

\begin{figure}[H]
    \centering
    \includegraphics[width=1\linewidth]{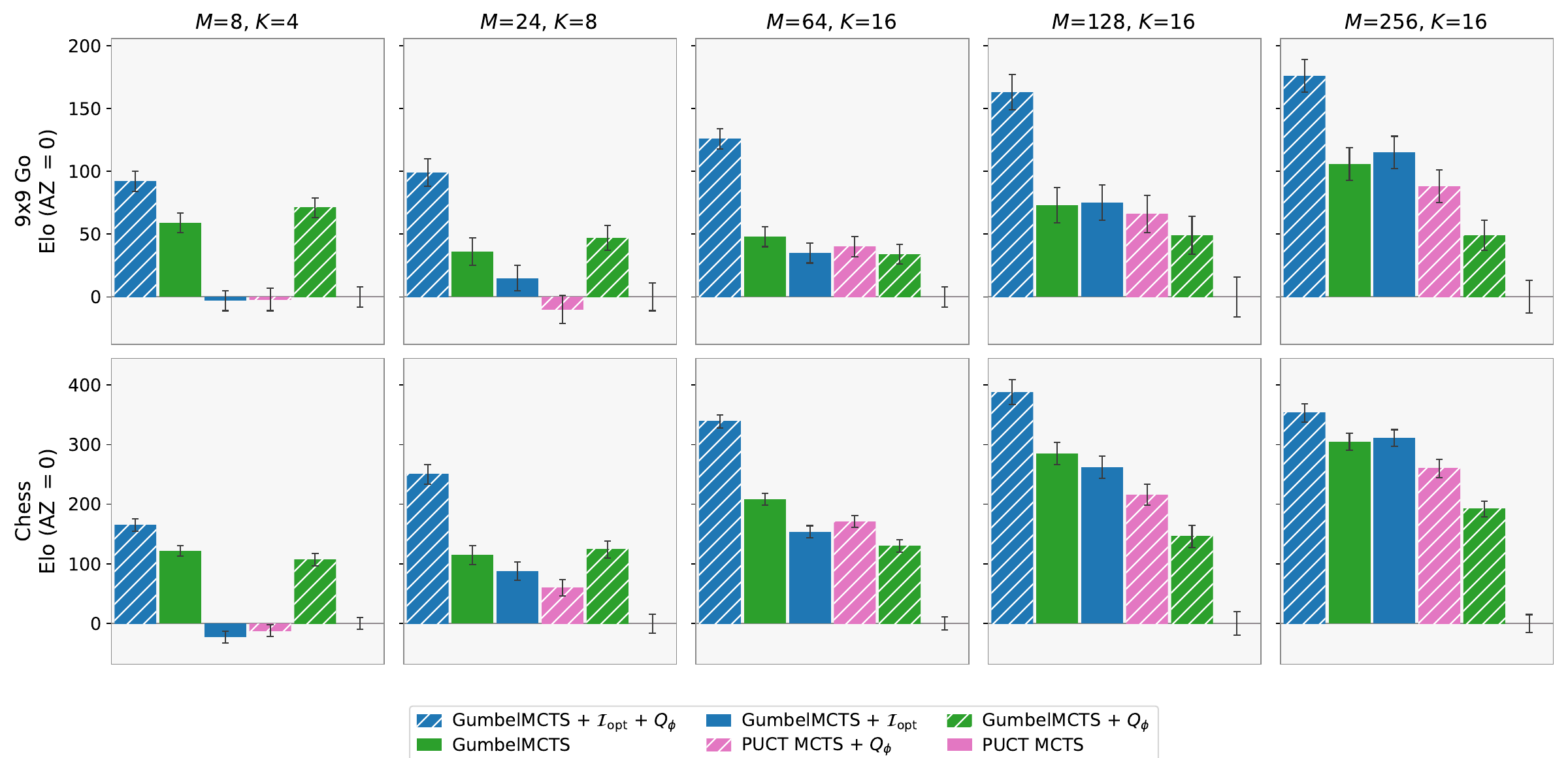}
    \vspace{-0.7cm}
    \caption{
    Playing strength per agent with the same set of DNNs.
    Bayes elo and 95\% CIs.
    }
    \label{fig:mcts_M_grid}
\end{figure}

\subsection{
Investigation of error in MCTS with DNNs
}
\label{app:additional_results_qv}
\label{app:additional_results_gauss}

In Section \ref{sec:experiments} we hypothesize that $\I_{\text{opt}}$ can be expected to benefit significantly from a $Q_\phi$ critic over the completed-Q-values mechanism used by baseline GumbelAlphaZero (GAZ), for two complementary reasons.
On the one hand, the increased information provided by a unique $Q^{\pi_\theta}$ prediction per state and action, compared to a single prediction of $V^{\pi_\theta}(s)$ replacing the prediction of the value of every action. 
On the other hand, the presence of larger prediction variance in $Q_\phi$ for which $\I_{\text{opt}}$ is designed to account and $\I_{\text{reg}}$ is not.
In this section, we investigate the behavior of the error empirically and whether this hypothesis is supported.

In addition, we investigate to what extent the DNN evaluation residuals agree with the
heteroscedastic Gaussian model used by $\I_{\text{opt}}$.
Since we do not have access to the ground-truth $Q^{\pi_\theta}$ or $V^{\pi_\theta}$ in 9x9 Go,
we use one-step TD-style residuals as a proxy for evaluation error.
We take a single set of DNNs $\pi_\theta,Q_\phi,V_\psi$ and 8,192 positions drawn without
replacement from the opening book. For every legal action we compute the one-step bootstrap
target $T(s,a)=r(s,a)+\gamma V_\psi(s')$ and measure the signed residuals $e_V,e_Q$ as follows:
\begin{align}
    e_V(s) \;=\; \textstyle\sum_a \pi(a|s)\,T(s,a) \;-\; V_\psi(s),
    \qquad
    e_Q(s,a) \;=\; T(s,a) \;-\; Q_\phi(s,a),
\end{align}
one per state and one per action at the state, respectively.
The results are reported in Figure \ref{fig:evaluation_error}. 

On the left, we plot the variance, computed as $\tfrac{1}{|S|}\sum_{(s,a) \in S} e_Q(s,a)^2 - \big(\tfrac{1}{|S|}\sum_{(s,a) \in S} e_Q(s,a)\big)^2$ over each action subset $S$, and the same over states for $e_V$.
As expected, the variance is larger for $Q_\phi$ than for $V_\psi$, and increases as the degree of search and the quality (and weight) of the targets provided to the $Q_\phi$ critic reduces.

On the right, we center and scale $e_Q$ per position and compare the standardized residual
distribution to $\N(0,1)$. The bulk is approximately Gaussian, with heavier tails, supporting the
heteroscedastic Gaussian model as a useful approximation.

\begin{figure}[H]
    \centering
    \includegraphics[width=1\linewidth]{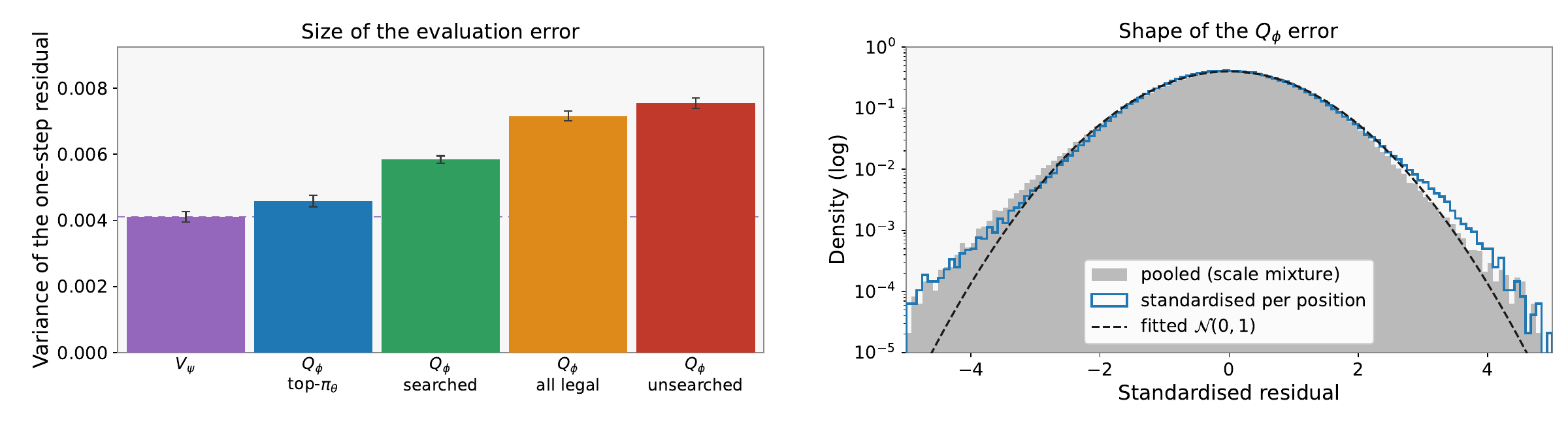}
    \vspace{-0.5cm}
    \caption{
    \textbf{Left:} Variance based on one-step TD residual proxies for evaluation error on different opening-book positions under one network on $9\times9$ Go with 95\% percentile bootstrap over positions.
    \textbf{Right:} the shape of the $Q_\phi$ residual, on a log density scale. 
    In grey the raw pooled residual, a scale mixture over actions of differing $\sigma$.
    In blue is the same data centered and scaled per position; dashed is the fitted $\N(0,1)$. 
    }
    \label{fig:evaluation_error}
\end{figure}

\subsection{Learning curves on model-free online continuous control (SAC on DMC)}
\label{app:additional_results_SAC}

\begin{figure}[H]
    \vspace{-0.3cm}
    \centering
    \includegraphics[width=1\linewidth]{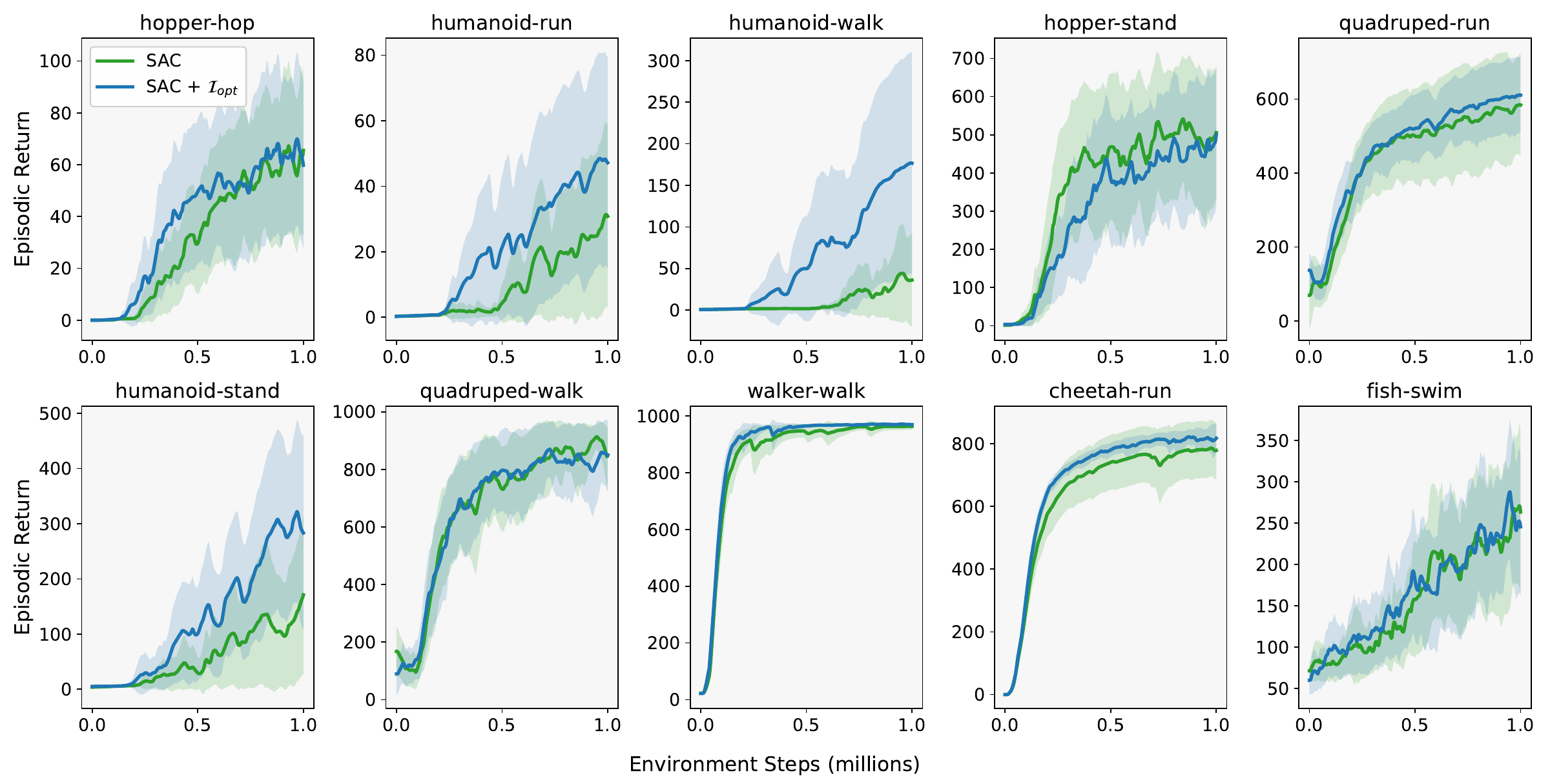}
    \vspace{-0.7cm}
    \caption{
    Per-environment learning curves on the 10 DeepMind Control Suite tasks, showing unnormalised episodic return against environment steps.
    Mean across 10 seeds and $95\%$ interval.
    }
    \label{fig:sac_per_env}
    \vspace{-0.4cm}
\end{figure}

\subsection{Offline RL: per-environment results}
\label{app:additional_results:offline}
We report the final performance per environment in Table \ref{tab:offline_per_env}.
We include results with hyperparameters optimized per datasets, as is chosen by the baseline, as well as with one set of hyperparameters across all datasets, to demonstrate that the $+\I_{\text{opt}}$ algorithm does not appear to be sensitive to a unique per-environment configuration.

\begin{table}[H]
    \centering
    \caption{Per-environment offline RL results on the four D4RL medium-replay datasets.
    Entries are the D4RL score of the final policy evaluated over $56$ episodes, mean and
    $95\%$ Gaussian CI over $20$ runs per arm per environment ($10$ for the single-pair
    setting on halfcheetah). Bold marks the highest mean in each row together with every
    entry that is not significantly worse than it, at $p < 0.05$ under a two-sided Welch
    $t$-test over runs \citep{welch1947generalization}; several bold entries in a row
    therefore indicate a tie.}
    \label{tab:offline_per_env}
    \begin{tabular}{lccc}
        \toprule
        & \multicolumn{2}{c}{\textbf{ReBRAC} $+\,\I_{\text{opt}}$} & \\
        \cmidrule(lr){2-3}
        \textbf{Dataset}
            & per-dataset $\tau^2, \beta$
            & single pair $\tau^2 = 2$, $\beta = 0.5$
            & \textbf{ReBRAC} \\
        \midrule
        halfcheetah-medium-replay & $\mathbf{50.86 \pm 1.09}$ & $\mathbf{50.30 \pm 1.43}$ & $49.65 \pm 0.28$          \\
        hopper-medium-replay      & $\mathbf{87.00 \pm 8.80}$ & $\mathbf{87.00 \pm 8.80}$ & $\mathbf{88.24 \pm 6.80}$ \\
        walker2d-medium-replay    & $\mathbf{84.69 \pm 3.19}$ & $\mathbf{84.69 \pm 3.19}$ & $78.75 \pm 3.91$          \\
        ant-medium-replay         & $\mathbf{40.17 \pm 7.89}$ & $\mathbf{40.17 \pm 7.89}$ & $17.09 \pm 3.05$          \\
        \bottomrule
    \end{tabular}
\end{table}

\section{Hyperparameters}
\label{app:hyperparams}

\subsection{Hyperparameter tuning}
\label{app:hp_tuning}
To keep the comparisons as meaningful as possible, we use each baseline’s pre-tuned hyperparameters and network architecture unchanged for both the baseline and its $+\I_{\text{opt}}$ variant, apart from the additional $\I_{\text{opt}}$ hyperparameters. This controls the comparison by changing only the operator. 
It also leaves open the possibility that the $+\I_{\text{opt}}$ variants could perform better if the baseline hyperparameters were retuned jointly with \(\tau^2\) and \(\beta\).

As a result, outside of the policy iteration experiments, where all hyperparameters were tuned from scratch, the only hyperparameters that were tuned in this work are $\I_{\text{opt}}$'s $\tau^2, \beta$.
To further adhere to each algorithm's standard tuning conditions $\tau^2, \beta$ are tuned per environment for algorithms which are tuned per-environment (ReBRAC), and one set of hyperparameters $\tau^2, \beta$ is tuned and used for an entire domain for the algorithms which are tuned in this way (all the rest: SAC, GumbelAlphaZero and the Policy Iteration experiments).

In the Policy Iteration experiments, all hyperparameters for all algorithms were tuned with an extensive grid search across the 20 environments and the single set of hyperparameters that performed best was chosen. 
The experiment was then repeated for 100 seeds in all environments and only the repeated seeds are shown.

GumbelAlphaZero uses the same set of GumbelMCTS hyperparameters across different board games, for which it was tuned.
Following this choice, we have used one set of hyperparameters in all environments.
To tune $\tau^2$ and $\beta$ we have chosen parameters that induce growth $ \frac{1}{\eta(s,a)} $ with $N(s,a),$ which approximately match the growth of the baseline. 
With this starting point, we've then chosen a set of hyperparameters $\tau^2, \beta$ that result in roughly matching growth within the range of popular evaluation resources used for MCTS $ M = (8, 200),$ from below.
Our logic was that as $\frac{1}{\sigma^2(s,a)}$ increases independently per action, rather than for all actions at the same time, the growth rate should be lower than that of $\frac{1}{\eta(s)}.$
We've then conducted an elo evaluation sweep across hyperparameter combinations around these values with $M = 64$ in 9x9 Go, chosen one set of uniformly well performing hyperparameter, and repeated all experiments with them.

The SAC baseline uses one set of hyperparameters across all environments.
In accordance with this choice, we've done a sweep across hyperparameter combinations for $\I_{\text{opt}}$ across a subset of the environments.
The single set of dominating hyperparameters was chosen and then the experiment was repeated for 10 seeds in all environments.

In the offline RL experiments, the baseline's hyperparameters were tuned per environment.
Following the baselines choice, 
$\tau^2, \beta$ were tuned using a sweep across hyperparameter combinations per environment.
The best performing hyperparameter combination was chosen per environment and the experiments were repeated for 20 additional seeds for this set of hyperparameters. 
However, we have also found that the same set of hyperparameters performed comparably to a unique set of hyperparameters per environment, and we present these results as well in Table \ref{tab:offline_per_env} in Appendix \ref{app:additional_results:offline}.

Table \ref{tab:runs} collects how many independent runs stand behind every number in Figure
\ref{fig:flagship}.

\begin{clblock} 
\begin{table}[H]
\centering
\caption{Number of seeds included in each experiment.}
\label{tab:runs}
\begin{tabular}{ll}
\toprule
\textbf{Experiment} & \textbf{Number of Seeds per Agent} \\
\midrule
Policy iteration & $100$ seeds $\times$ $20$ environments \\
\addlinespace
Model-based, discrete & $5$ training seeds  \\
\addlinespace
Model-free, continuous & $10$ seeds $\times$ $10$ environments  \\
\addlinespace
Offline & $20$ seeds $\times$ $4$ environments \\
\bottomrule
\end{tabular}
\end{table}
\end{clblock}

\subsection{Hyperparameter tables}

\begin{table}[H]
    \centering
    \caption{Hyperparameters for the Policy Iteration experiments.}
    \label{tab:hyperparameters}
    \begin{tabular}{llc}
        \toprule
        \textbf{Component} & \textbf{Parameter} & \textbf{Value} \\
        \midrule
        $\I_{\text{opt}}$ & Advantage prior variance ($\tau^2$) & $(0.17)^2$ \\
                          & KL regularization strength ($\beta$)    & $0.07$ \\
        \addlinespace
        $\I_{\text{reg}}$ & Temperature ($\eta$)                      & $4.89$ \\
        \addlinespace
        Environments & Grid size ($|\S| \times |\A|$)       & $25 \times 5$ \\
        \addlinespace
        Noise schedule & Initial noise bound ($a_0$)        & $50$ \\
        (Eq.\ \ref{eq:dp_noise}) & Decay ($\gamma_{\text{dec}}$) & $0.998$ \\
                          & Variance width ($b$)                 & $5$ \\
        \bottomrule
    \end{tabular}
\end{table}

\begin{table}[H]
    \centering
    \caption{Hyperparameters for the discrete-action model-based GAZ experiments.}
    \label{tab:hp_mcts}
    \begin{tabular}{llc}
        \toprule
        \textbf{Component} & \textbf{Parameter} & \textbf{Value} \\
        \midrule
        $\I_{\text{opt}}$ & Advantage prior variance ($\tau^2$)      & $0.08$ \\
                          & KL regularization strength ($\beta$)                      & $0.01$ \\
        \addlinespace
        $\I_{reg}$ (GumbelMCTS) & $c_{visit}$                        & $50$ \\
                          & $c_{scale}$                              & $1.0$ \\
        \addlinespace
        PUCT (AlphaZero) & $c_{\text{init}}$               & $1.25$ \\
                          & $c_{\text{base}}$                   & $19652$ \\
                          & Dirichlet root noise fraction                 & $0.25$ \\
                          & Dirichlet root noise concentration ($\alpha$) & $0.3$ \\
        \addlinespace
        Search            & Budget $M$, training runs                  & $64$ \\
                          & Budget $M$, fixed-network sweep            & $8, 24, 64, 128, 256$ \\
                          & Num. searched actions at the root ($K$)     & $16$ (unless otherwise specified) \\
                          & Particles ($N$)                          & $1$ \\
                          & Root Gumbel scale                        & $0.0$ \\
                          & Discount ($\gamma$)                      & $0.997$ \\
        \addlinespace
        $Q_\phi$ head & Loss weight ($\lambda_q$) & $0.1$ \\
                          & Bootstrapped actions ($L$)          & $16$ \\
        \addlinespace
        Network      & Residual blocks                & $6$ \\
                          & Channels                       & $128$ \\
                          & Convolution kernel             & $3 \times 3$, stride $1$ \\
                          & Normalisation / activation     & batch norm / ReLU \\
        \addlinespace
        Training          & Self-play frames per iteration      & $262{,}144$ \\
                          & \quad ($1024$ games $\times$ $256$ steps) & \\
                          & Sampled opening plies          & $8$ \\
                          & Training minibatch             & $256$ \\
        \addlinespace
        Optimiser    & Algorithm                      & AdamW \\
                          & Learning rate                  & $10^{-3}$ (constant) \\
                          & First-moment decay             & $0.9$ \\
                          & Other Adam settings            & \texttt{optax} defaults \\
        \bottomrule
    \end{tabular}
\end{table}

$K$ (the number of actions searched by SH, see \citep{gumbel_mz}) is capped at $16$; at the two smallest budgets sequential halving's divisibility
constraints force it lower, giving $K = 4$ at $M = 8$ and $K = 8$ at $M = 24$. The root Gumbel
scale is $0$ following Table 2 of \citet{gumbel_mz}, whose board-game evaluations use $0$; the
value of $1$ in their Table 4 applies to training self-play rather than evaluation.

The body follows \citet{AlphaZero}: a batch-normalised, rectified $3\times3$ convolution,
then $6$ residual blocks of two batch-normalised, rectified $3\times3$ convolutions with a
skip connection, all at $128$ channels. The value head is a $1\times1$ convolution to one
channel, batch norm and ReLU, a linear layer of width $128$, and a $\tanh$ linear layer to a
scalar. The policy and $Q_\phi$ heads have the same shape as each other and differ by game: on
$9\times9$ Go a $3\times3$ convolution, batch norm and ReLU, then a linear layer to the $82$
actions; on chess a $3\times3$ convolution, batch norm and ReLU, then a
second $3\times3$ convolution to the $73$ plane-encoded actions. $Q_\phi$ is $\tanh$, one value
per action, and is built in every arm but trained only where a $Q$ head is used. Inputs are
\texttt{pgx}'s encodings: $17$ planes for $9\times9$ Go and $119$ for chess.

\begin{table}[H]
    \centering
    \caption{Hyperparameters for the online classical continuous control SAC experiments}
    \label{tab:hp_sac}
    \begin{tabular}{llc}
        \toprule
        \textbf{Component} & \textbf{Parameter} & \textbf{Value} \\
        \midrule
        $\I_{\text{opt}}$ & Advantage prior variance ($\tau^2$) & $0.4$ \\
                          & Num. actions sampled for $V_{\phi_i}(s)$ & $32$ \\
        \addlinespace
        SAC          & Optimizer                              & Adam \\
                          & Actor learning rate                    & $3\cdot10^{-4}$ \\
                          & Critic and entropy learning rate       & $10^{-3}$ \\
                          & Batch size                             & $256$ \\
                          & Replay buffer size                     & $10^{6}$ \\
                          & Steps before learning starts           & $10^{4}$ \\
                          & Discount ($\gamma$)                    & $0.99$ \\
                          & Target smoothing coefficient           & $0.005$ \\
                          & Actor update interval                  & $2$ \\
                          & Target network update interval         & $1$ \\
                          & Entropy coefficient ($\alpha$)         & autotuned \\
                          & Target entropy                         & $-\dim(\mathcal A)$ \\
                          & Critics                                & $2$ \\
        \addlinespace
        Network      & Actor trunk                            & $2\times256$, ReLU \\
                          & Actor heads                            & mean and $\log\sigma$ \\
                          & $\log\sigma$ range                     & $[-5, 2]$ \\
                          & Action squashing                       & $\tanh$ \\
                          & Critic trunk (input $[s,a]$)           & $2\times256$, ReLU \\
                          & Normalisation                          & none \\
                          & Initialisation                         & PyTorch defaults \\
        \bottomrule
    \end{tabular}
\end{table}

\begin{table}[H]
    \centering
    \caption{Hyperparameters for the offline RL ReBRAC experiments.}
    \label{tab:hp_offline}
    \small
    \begin{tabular}{llc}
        \toprule
        \textbf{Component} & \textbf{Parameter} & \textbf{Value} \\
        \midrule
        $\I_{\text{opt}}$ & Advantage prior variance ($\tau^2$), halfcheetah-medium-replay & $8$ \\
                          & Advantage prior variance ($\tau^2$), hopper-medium-replay      & $2$ \\
                          & Advantage prior variance ($\tau^2$), walker2d-medium-replay    & $2$ \\
                          & Advantage prior variance ($\tau^2$), ant-medium-replay         & $2$ \\
                          & KL regularization strength ($\beta$), all datasets                              & $0.5$ \\
        \addlinespace
        ReBRAC            & Critics (\texttt{num\_critics})     & $2$ \\
                          & Optimizer                                                 & Adam ($\epsilon = 10^{-5}$) \\
                          & Learning rate (actor and critics)                         & $10^{-3}$ \\
                          & Batch size                                                & $1024$ \\
                          & Gradient steps                                            & $10^{6}$ \\
                          & Discount ($\gamma$)                                       & $0.99$ \\
                          & Target smoothing coefficient                              & $0.005$ \\
                          & Target policy noise                                       & $0.2$ \\
                          & Target policy noise clip                                  & $0.5$ \\
                          & Critic updates per actor update                           & $2$ \\
        \addlinespace
        Network      & Actor trunk                                               & $3\times256$, ReLU \\
                          & Actor output                                              & $\dim(\mathcal A)$, $\tanh$ \\
                          & Critic trunk (input $[s,a]$)                              & $3\times256$, ReLU, LayerNorm \\
                          & Critic weight sharing                                     & none \\
                          & Observation normalisation                                 & no \\
                          & Hidden bias initialisation                                & $0.1$ \\
                          & Output initialisation, actor                              & $\mathcal U(\pm10^{-3})$ \\
                          & Output initialisation, critics                            & $\mathcal U(\pm3\cdot10^{-3})$ \\
                          & Target networks                                           & actor and critics \\
        \addlinespace
        Actor BC     & $\beta_1$, halfcheetah-medium-replay                       & $0.01$ \\
        coefficient  & $\beta_1$, hopper-medium-replay                           & $0.05$ \\
                          & $\beta_1$, walker2d-medium-replay                         & $0.05$ \\
                          & $\beta_1$, ant-medium-replay                              & $0.001$ \\
        \addlinespace
        Critic BC    & $\beta_2$, halfcheetah-medium-replay                       & $0.001$ \\
        coefficient  & $\beta_2$, hopper-medium-replay                           & $0.5$ \\
                          & $\beta_2$, walker2d-medium-replay                         & $0.01$ \\
                          & $\beta_2$, ant-medium-replay                              & $0.01$ \\
        \bottomrule
    \end{tabular}
\end{table}

\end{document}